\newtheorem{theorem}{Theorem}
\newtheorem{lemma}{Lemma}
\newtheorem{prop}{Proposition}
\newtheorem{definition}{Definition}
\begin{document}

\title{Federated Learning with Nesterov Accelerated Gradient}

\author{Zhengjie~Yang, 
Wei~Bao, 
Dong~Yuan,
Nguyen~H.~Tran,
and~Albert~Y.~Zomaya%
}

\IEEEtitleabstractindextext{%
\begin{abstract}

Federated learning (FL) is a fast-developing technique that allows multiple workers to train a global model based on a distributed dataset.  
Conventional FL (FedAvg) employs gradient descent algorithm, which may not be efficient enough. Momentum is able to improve the situation by adding an additional momentum step to accelerate the convergence and has demonstrated its benefits in both centralized and FL environments. It is well-known that Nesterov Accelerated Gradient (NAG) is a more advantageous form of momentum, but it is not clear how to quantify the
benefits of NAG in FL so far. This motives us to propose FedNAG, which employs NAG in each worker as well as NAG momentum and model aggregation in the aggregator. We provide a detailed convergence analysis of FedNAG and compare it with FedAvg. Extensive experiments based on real-world datasets and trace-driven simulation are conducted, demonstrating that FedNAG increases the learning accuracy by 3--24\% and decreases the total training time by 11--70\% compared with the benchmarks under a wide range of settings. 
\end{abstract}

\begin{IEEEkeywords}
Federated learning, Edge computing, Nesterov accelerated gradient.
\end{IEEEkeywords}
}

\maketitle
\thispagestyle{empty}
\pagestyle{empty}

\IEEEdisplaynontitleabstractindextext
 
\IEEEpeerreviewmaketitle

\IEEEraisesectionheading{\section{Introduction}\label{sec:introduction}}

\IEEEPARstart{W}{ith} the advancement of Internet of Things (IoT), Industry 4.0, and Artificial Intelligence, machine learning applications such as image classification~\cite{lu2007survey}, automatic driving~\cite{naranjo2005power}, and Automatic Speech Recognition (ASR)~\cite{yu2016automatic} are rapidly developed. Since the tremendous machine learning data are distributed in individual users, conventional centralized machine learning is insufficient when a large volume and sensitive data are required to be uploaded to remote data-centers. Moreover, in many situations, the individual users are not willing to share their sensitive raw data so it is infeasible to implement centralized machine learning. To address the issue, Federated Learning (FL) emerges \cite{mcmahan2017communication}. It allows individual users to participate in the global model training without sharing their raw data.

Mobile Edge Computing (MEC), as shown in Fig.~\ref{fig:edge}, is a perfect venue to implement FL~\cite{lim2020federated}, where multiple edge devices are distributed and connected to an edge server. Workers can be any form of edge devices such as laptop, smartphone, tablet, IoT devices, etc. The edge server can aggregate the local models from edge workers and re-distribute the global model back to edge workers. Since edge workers have limited computation and communication capacities, we need to develop more efficient algorithms to accelerate the convergence and finally decrease the communication and computing workload and total training time.

One commonly adopted FL algorithm is FedAvg \cite{mcmahan2017communication}, which performs gradient descent \cite{ruder2016overview} at each worker: Each worker locally updates its weights by gradient descent for a number of local iterations by its local dataset, and then the aggregator averages the weights from all workers and distribute them to the workers again.  The above process is repeated for multiple rounds. (One round consists of a number of local iterations and one communication step between workers and the aggregator.) However, one disadvantage of gradient descent is its low efficiency for convergence and potential in oscillations \cite{goh2017why,ruder2016overview}. The weight update at the specific iteration $t$ is only governed by the current gradient at this iteration. It does not consider the past weight update steps. Momentum~\cite{polyak1964some} is able to improve the situation by adding an additional momentum step to include the difference between past and current weights on the basis of gradient descent step.  
The advantage of momentum has been well studied in centralized setting\cite{vaswani2019fast,yan2018unified,assran2020convergence}. It also motivates researchers to employ momentum in FL environment \cite{huo2020faster,wang2019slowmo,karimireddy2020mime}. Depending on where the momentum is adopted, these works can be categorized as aggregator momentum and worker momentum. The aggregator momentum applies momentum at the aggregator but it can only utilize the momentum acceleration 
every $\tau$ local iterations when global aggregation happens. ($\tau$ is the number of local iterations between two global aggregations.) 
Worker momentum utilizes the momentum acceleration at the worker but is influenced by out-of-date momentum since the momentum is only updated when global aggregation occurs and each worker does not update it in each local iteration.

\begin{figure}[tb!]
    \centering
    \includegraphics[width=\linewidth]{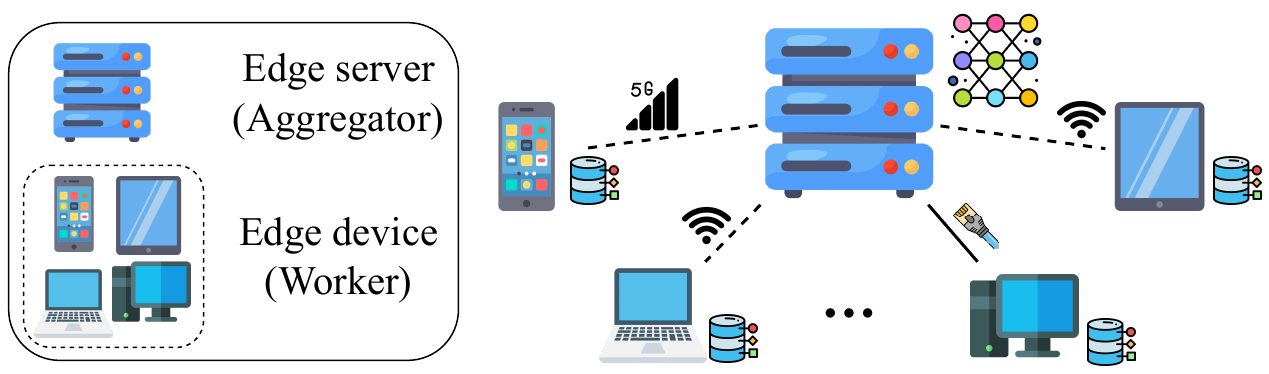}
    \caption{Federated Learning in Mobile Edge Computing (MEC).}
    \label{fig:edge}
\end{figure}

Nesterov Accelerated Gradient (NAG) \cite{NAG} is known to be an advantageous form of momentum~\cite{polyak1964some}. Existing works have demonstrated the advantage of momentum in both centralized and FL environment, but it is not clear how to quantify the benefits of NAG in FL in the literature. This motives us to propose an NAG style FL, namely FedNAG: (1) Each worker locally updates its weights and momenta using NAG for $\tau$ iterations on its local dataset;  (2) the aggregator collects and averages the weights and  momenta from all workers and distribute them to the workers again; (1) and (2) are repeated for multiple rounds until the training loss is sufficiently small. 

We theoretically provide a detailed convergence analysis for FedNAG. The progress mainly includes three steps: (1) We define virtual update as if centralized NAG is conducted between two global aggregations; (2) We bound the gap of weights $\mathbf{w}$ between FedNAG update and virtual update; and (3) We bound the values of global loss functions $F(\mathbf{w})$  between FedNAG and the optimal solution. Since the convergence analysis of FedAvg is provided in~\cite{wang2019adaptive}, we compare the convergence performance of FedNAG and FedAvg and derive the conditions that FedNAG outperforms FedAvg.

Experimentally, we use different models such as linear regression, logistic regression, CNN, and DNN based on MNIST, CIFAR-10, and CIFAR-100 datasets, to test the performance of FedNAG. We analyze the impact of different factors such as number of workers, number of local updates between two global aggregataion $\tau$, and momentum coefficient. We also conduct a trace-driven simulation to emulate a MEC environment to test the real-world total training time consisting of computation delay and communication delay. The experiment shows that FedNAG increases the learning accuracy by 3--24\% and decreases the total training time by 11--70\% compared with FedMom~\cite{huo2020faster} 
and FedAvg under a wide range of settings.

\section{Related works}

\begin{table}[t]
\centering
\caption{Comparison between different topics in Federated Learning}
\label{tab:related_work}
\begin{tabular}{p{0.2\columnwidth} p{0.68\columnwidth}}
\toprule
Topic & Strategy/Algorithm   \\
\midrule
Proximity & FedProx~\cite{FedProx},  SCAFFOLD~\cite{karimireddy2020scaffold} \\
\midrule
Quantization & FedPAQ~\cite{reisizadeh2020fedpaq},  ACGD~\cite{li2020acceleration}\\
\midrule
Secure FL & NbAFL~\cite{NbAFL}, Privacy-Preserving FL~\cite{PP_FL} \newline  Blockchain-supported FL~\cite{blockchain} \\
\midrule
Vehicular FL  & FVC~\cite{FVC}, FVN~\cite{FVN}\\ 
\midrule
Momentum &  FedMo \cite{huo2020faster}, SlowMo \cite{wang2019slowmo}, Mime \cite{karimireddy2020mime} \\
\bottomrule
\end{tabular}
\end{table}

\begin{table*}[ht]
\centering
\caption{Comparison between different variants of momentum based federated learning algorithm}
\label{tab:compare}
\begin{tabular}{p{0.2\columnwidth} p{0.76\columnwidth} p{0.9\columnwidth}}
\toprule
Algorithm & Local updates & Global updates \\
\midrule
FedNAG & $\mathbf{v}_i(t) \gets \gamma\mathbf{v}_i(t-1) - \eta\nabla F_i(\mathbf{w}_i(t-1))$ \newline $\mathbf{w}_i(t) 
    \gets \mathbf{w}_{i}(t-1) + \gamma\mathbf{v}_{i}(t) - \eta\nabla F_i(\mathbf{w}_{i}(t-1))$
    &
    $\mathbf{v}(t) \gets \frac{\sum_{i=1}^N D_i \mathbf{v}_i(t)}{D}$\newline $\mathbf{w}(t) \gets \frac{\sum_{i=1}^N D_i \mathbf{w}_i(t)}{D}$\\
\midrule
FedMom \cite{huo2020faster}
& $\mathbf{w}_{i}(t) \gets \mathbf{w}_{i}(t-1) - \eta\nabla F_{i}(\mathbf{w}_{i}(t-1))$ & $\mathbf{v}(t) \gets \mathbf{w}(t-\tau) -  \sum_{i=1}^N\frac{ D_i}{D}\left(\mathbf{w}(t-\tau)-\mathbf{w}_i(t)\right)$\newline $\mathbf{w}(t) 
    \gets \mathbf{v}(t) +\gamma(\mathbf{v}(t)-\mathbf{v}(t-\tau))$\\
\midrule
SlowMo \cite{wang2019slowmo} & $\mathbf{w}_{i}(t) \gets \mathbf{w}_{i}(t-1) - \eta\nabla F_{i}(\mathbf{w}_{i}(t-1))$ &
$\mathbf{v}(t) \gets \gamma\mathbf{v}(t-\tau)+ \frac{1}{
\eta}\left(\mathbf{w}(t-\tau)- \sum_{i=1}^N\frac{ D_i}{D}\mathbf{w}_{i}(t)\right) \newline
\mathbf{w}(t)\gets \mathbf{w}(t-\tau)-\eta\mathbf{v}(t)$\\
\midrule
Mime \cite{karimireddy2020mime} & $\mathbf{w}_{i}(t) \gets \mathbf{w}_{i}(t-1) - \newline \eta ((1-\gamma)\nabla F_{i}(\mathbf{w}_{i}(t-1)) + \gamma \mathbf{v}((\lfloor \frac{t}{\tau}\rfloor-1)\tau)$ & $\mathbf{v}(t) \gets (1-\gamma)\nabla F_i(\mathbf{w}(t-\tau))+ \gamma\mathbf{v}(t-\tau)$ \newline
$\mathbf{w}(t) \gets \frac{\sum_{i=1}^N D_i \mathbf{w}_i(t)}{D}$\\

\bottomrule
\end{tabular}
\end{table*}

\subsection{Federated Learning Algorithms and Applications}
Federated Learning was first proposed in FedAvg \cite{mcmahan2017communication}. It allows multiple clients to collaboratively train a global learned model without sharing their raw data. Apart from 
FedAvg, there are many algorithms/strategies that have been studied in FL (e.g., proximity~\cite{FedProx}, quantization~\cite{reisizadeh2020fedpaq}, differential privacy~\cite{DP}, Vehicular FL~\cite{FVN} etc.). 
The idea of proximity is to apply an additional term on the basis of gradient descent by using the information of the global model to correct the local update. FedProx~\cite{FedProx} employs the proximal term to restrict
the local updates to be closer to the global model. SCAFFOLD~\cite{karimireddy2020scaffold} employs the control variate to prevent ``client-drift'' \cite{karimireddy2020scaffold}. The idea of the quantization technique is to reduce the size of transmission payload so as to reduce the communication overhead. FedPAQ~\cite{reisizadeh2020fedpaq} employs quantization operators on
the transmitted massages while ACGD~\cite{li2020acceleration} employs gradient compression for communication. For Secure FL, NbAFL~\cite{NbAFL} and \cite{PP_FL} apply the Differential Privacy (DP)~\cite{DP} in FL environment to protect the information leakage of the original raw data. In \cite{blockchain}, authors proposes an adaptive framework consisting of blockchain~\cite{bodkhe2020blockchain} and Reinforcement Learning (RL)~\cite{RL} in FL to achieve higher trust and security. For Vehicular FL, FVC~\cite{FVC} and FVN~\cite{FVN} extend the traditional vehicular network \cite{FVN} to the FL environment, where the vehicular network consists of groups of moving or stationary vehicles connected by a wireless network. However, these sub-topics are not the focus in this paper. We focus on momentum~\cite{ruder2016overview}. The mainstream FL sub-topics are summarized in Table \ref{tab:related_work}.

\subsection{Momentum in Machine Learning}
Momentum is a method that helps accelerate gradient descent in the relevant direction by adding a fraction $\gamma$ of the difference between past and current model vectors  \cite{ruder2016overview}. There are two typical forms of momentum in the literature~\cite{zhang2021dive,ruder2016overview}. One is Polyak's momentum \cite{polyak1964some} and the update rule is as follows:
\begin{align}
        \mathbf{v}(t) &= \gamma\mathbf{v}(t-1) - \eta\nabla F(\mathbf{w}(t-1)),\label{HB.1}\\
        \mathbf{w}(t) &= \mathbf{w}(t-1) + \mathbf{v}(t),\label{HB.2}
\end{align}
with $\gamma\in[0,1), t=1,2,3..., \mathbf{v}(0)=0$, where $\gamma$ is momentum factor (weight of momentum), $t$ is update iteration, $\mathbf{v}(t)$ is momentum term at iteration $t$, and $\mathbf{w}(t)$ is model parameter at iteration $t$. 
Through this method, the momentum term increases for dimensions whose gradients point in the same directions and reduces updates for dimensions whose gradients change directions. As a result, we gain faster convergence and reduced oscillation \cite{ruder2016overview,goh2017why}.
Another typical form of momentum is Nesterov Accelerate Gradient (NAG) \cite{NAG} and known as a more advantageous version of momentum compared with Polyak's momentum \cite{polyak1964some}. 
NAG calculates the gradient based on an approximation of the next position of parameters, i.e., $\nabla F(\mathbf{w}(t-1)+\gamma\mathbf{v}(t-1))$, instead of  $\nabla F(\mathbf{w}(t-1))$ in Polyak's momentum, leading to better convergence performance. This leads us to employ NAG instead of Polyak's momentum in our proposed FedNAG.

\begin{table}[t]
\centering
\caption{Key Notations}
\label{tab:notation}
\begin{tabular}{p{0.06\columnwidth}p{0.83\columnwidth}}
\hline
$N$ & number of workers\\
$T$ & number of total local (worker) iterations indexed by $t$\\
$K$ & number of global aggregations indexed by $k$\\
$D_i$ & number of samples for local dataset $i$\\
$D$ & total number of samples\\
$\eta$ & learning step size hyper parameter \\
$\gamma$ & momentum hyper parameter \\
$\tau$ & number of local iterations between two global aggregations \\
$F(\mathbf{w})$ & global loss function\\
$F_i(\mathbf{w})$ & local loss function in worker $i$\\
$\mathbf{w}^{\mathrm{f}}$        & practical model parameter that the learning can obtain \\
$\mathbf{w}(t)$         & global model parameter at iteration $t$ \\
$\mathbf{w}_i(t)$ & local model parameter at iteration $t$ in worker $i$ \\
$\mathbf{v}(t)$ &  global momentum parameter at iteration $t$\\
$\mathbf{v}_i(t)$ & local momentum parameter at iteration $t$ in worker $i$\\
\hline
\end{tabular}
\end{table}

\subsection{Momentum in Federated Learning}
Momentum has been already well studied and proved to be more advantageous in centralized machine learning. In \cite{vaswani2019fast}, authors study the utilization of momentum in over-parameterized models. \cite{yan2018unified} provides an unified convergence analysis for both Polyak's  momentum and NAG. \cite{assran2020convergence} studies NAG in stochastic settings. 
It also attracts researchers' attention to apply momentum in FL environment. FedMom \cite{huo2020faster} and SlowMo \cite{wang2019slowmo} perform momentum update in the aggregator only. 
In Mime \cite{karimireddy2020mime}, the aggregator computes the momentum and distributes it to the workers, which is then used by workers for local iterations. 
All these works also demonstrate the benefits of momentum in FL, but with more simplified approach compared with FedNAG. It is well-known that Nesterov Accelerated Gradient (NAG) \cite{NAG} is a more advantageous form of momentum, but it is still not clear how to quantify the benefits of NAG in FL in the literature. This motives us to implement NAG in FL. In FedNAG, each worker computes its own momentum individually in each local iteration. 
The worker momenta will be also aggregated by the aggregator and re-distributed to workers. It brings substantial challenges in convergence analysis as well as better performance. This is a key issue to be addressed by this paper. A detailed comparison of different momentum-based FL algorithms is shown in Table~\ref{tab:compare}. We also list important  notations in Table~\ref{tab:notation}.

\section{System Model and Preliminaries}\label{sec:model}

\subsection{Overview}
In the context of federated  learning, there are $N$
workers, located at different sites and communicating with an
 aggregator to learn a model $\textbf{w}^*$ which is a solution to
the following problem
\begin{align}
\min_{\mathbf{w}\in\mathbb{R}^d} F(\mathbf{w}) \triangleq \frac{\sum_{i=1}^{N}D_{i} F_{i}(\mathbf{w})}{D},
\end{align}
where $D_i$ is the number of data samples in worker $i$; $D=\sum_{i=1}^{N}D_i$ is the total number of data samples; and $d$ is the dimension of $\mathbf{w}$. $F_i(\cdot)$ is the local loss function at worker $i$ and $F(\cdot)$ is the global loss function. We assume $F_i(\cdot)$ satisfies the following conditions. 

\begin{enumerate}
\item $F_{i}(\mathbf{w})$ is convex.
\item $F_{i}(\mathbf{w})$ is $\rho$-Lipschitz, i.e., $\Vert F_{i}(\mathbf{w}_{1})-F_{i}(\mathbf{w}_{2})\Vert \leq \rho\Vert\mathbf{w}_{1}-\mathbf{w}_{2}\Vert$
for any $\mathbf{w}_{1}, \mathbf{w}_{2}$.
\item $F_{i}(\mathbf{w})$ is $\beta$-smooth, i.e., $\Vert\nabla F_{i}(\mathbf{w}_{1})-\nabla F_{i}(\mathbf{w}_{2})\Vert\leq\beta\Vert \mathbf{w}_{1}-$
$\mathbf{w}_{2} \Vert$ for any $\mathbf{w}_{1}, \mathbf{w}_{2}$.
\end{enumerate}
The above assumptions are widely adopted in a range of literature~\cite{wang2019adaptive, dinh2020federated, liu2020accelerating, liu2020client}. 

We assume all workers participate in the training.  This assumption matches with the setting of siloed data~\cite{kairouz2021advances}: Clients are different organizations, (e.g. medical or financial) geo-distributed datacenters. All clients are called in each global round.

\subsection{Algorithm}

Algorithm \ref{alg:FedNAG} demonstrates the implementation of FedNAG.  
We use $\mathbf{w}_i(t)$ and $\mathbf{v}_i(t)$ to denote the model parameter and momentum parameter in worker $i$ at $t$th iteration. Initially, at $t=0$, we set $\mathbf{v}_i(0) = \mathbf{0}$ and a same $\mathbf{w}_i(0)$ for all $i$. Each $\tau$ iterations will lead to  a global aggregation. 


Each iteration includes a local update, followed by a  global aggregation if $t=k\tau, k=1,2,\ldots$. 

\begin{algorithm}[tb]
\caption{FedNAG}
\label{alg:FedNAG}
\textbf{Input}: $\tau$, $T=K\tau$\\
\textbf{Output}: Final model parameter  $\mathbf{w}^{\mathrm{f}}$
\begin{algorithmic}[1] 
\STATE Initialize: $\mathbf{v}_i(0)=\mathbf{0}$, and $\mathbf{w}_i(0)$ as same value for all $i$.
\FOR{$t=1,2,\ldots,T$}
\STATE For each worker $i$ in parallel, compute its local update as \eqref{eq:localVi} and \eqref{eq:localWi}.
\IF {$t==k\tau$ where $k$ is a positive integer}
\STATE Aggregate $\mathbf{v}(t)$ and $\mathbf{w}(t)$ as \eqref{eq:globalV} and \eqref{eq:globalW}.
\STATE Set $\mathbf{v}_i(t) \gets \mathbf{v}(t)$ and $\mathbf{w}_i(t) \gets \mathbf{w}(t)$ for all $i$.
\ENDIF
\ENDFOR
\STATE \textbf{Set} $\mathbf{w}^{\mathbf{f}}$ as (\ref{eq:wf})
\end{algorithmic}
\end{algorithm}


\subsubsection{Local Updates}
In each iteration, the following update is conducted in each worker $i$,
\begin{align}
        \mathbf{v}_i(t) &\gets \gamma\mathbf{v}_i(t-1) - \eta\nabla F_i(\mathbf{w}_i(t-1)),\label{eq:localVi}\\
    \mathbf{w}_i(t) 
    &\gets \mathbf{w}_i(t-1) - \gamma\mathbf{v}_i(t-1) + (1+\gamma)\mathbf{v}_i(t) \label{eq:localWi}\nonumber\\
    &= \mathbf{w}_{i}(t-1) + \gamma\mathbf{v}_{i}(t) - \eta\nabla F_i(\mathbf{w}_{i}(t-1)),
\end{align}
where $\mathbf{v}_i(t)$ is the local momentum term at iteration $t$ in worker $i$ and  $\mathbf{w}_i(t)$ is the local model parameter at iteration $t$ in worker $i$. 
The above updates follow~\cite{bengio2013advances, nagCS231n}.  

\subsubsection{Global Aggregation}
If $t = k\tau, k=1,2,\ldots$, all workers will send $\mathbf{v}_i(t)$ and $\mathbf{w}_i(t)$ values to the aggregator and the aggregator calculates $\mathbf{v}(t)$ and $\mathbf{w}(t)$ as follows:
\begin{align}\label{eq:globalV}
\mathbf{v}(t) &\gets \frac{\sum_{i=1}^N D_i \mathbf{v}_i(t)}{D},\\
\label{eq:globalW}
\mathbf{w}(t) &\gets \frac{\sum_{i=1}^N D_i \mathbf{w}_i(t)}{D}.
\end{align}

Then aggregator will send back $\mathbf{v}(t)$ and $\mathbf{w}(t)$ to each worker $i$ to update $\mathbf{v}_i(t)\gets\mathbf{v}(t)$ and $\mathbf{w}_i(t)\gets\mathbf{w}(t)$.

Note that only if $t = k\tau$,  $\mathbf{v}(t)$ and $\mathbf{w}(t)$ are aggregated in \eqref{eq:globalV} and \eqref{eq:globalW}. For the purpose of analysis, we define $\mathbf{v}(t) = \frac{\sum_{i=1}^N D_i \mathbf{v}_i(t)}{D}$
 and $\mathbf{w}(t) = \frac{\sum_{i=1}^N D_i \mathbf{w}_i(t)}{D}$ at any iteration $t$ so that  $\mathbf{v}(t)$ and  $\mathbf{w}(t)$ can be used for convergence analysis.

After $T=K\tau$ iterations, the output $\mathbf{w}^{\mathrm{f}} $ is computed as follows: 
\begin{align} \label{eq:wf}
\mathbf{w}^{\mathrm{f}} \triangleq \underset{\mathbf{w} \in\{\mathbf{w}(k \tau): k=1,2, \ldots, K\}}{\arg \min } F(\mathbf{w}).
\end{align}

\subsection{Preliminary Analysis }

We present some simple preliminary analyses, which will be used in the rest of the paper. 

\subsubsection{Property of \texorpdfstring{$F(\mathbf{w})$}{F(w)}}
First, according to the assumptions, it is straightforward to show that $F(\mathbf{w})$ is convex, $\rho$-Lipschitz and $\beta$-smooth by applying triangle inequalities.

\subsubsection{Divergence of Gradient}
The divergence of gradient, which is commonly adopted in convergence analysis~\cite{wang2019adaptive, liu2020accelerating, liu2020client} can be defined as follows. 
\begin{definition}
(Gradient Divergence) For $\forall i$ and $\forall \mathbf{w}$, we define $\delta_i$ as the upper bound between $\nabla F_i(\mathbf{w})$ and $\nabla F(\mathbf{w})$, i.e.,
\begin{align}\label{definition:GD}
    \Vert \nabla F_i(\mathbf{w}) - \nabla F(\mathbf{w}) \Vert \leq \delta_i.
\end{align}
We also define
\begin{align} \label{eq:delta=}
    \delta \triangleq \frac{\sum_i D_i\delta_i}{D}.
\end{align}
\end{definition}
Please note that $\delta_i$ is different at different workers, indicating the datasets at different workers may not be independent and identically distributed (non-i.i.d.) \cite{wang2019adaptive}. 

\subsubsection{Virtual Updates}
We use $[k]$ to denote interval $t\in[(k-1)\tau, k\tau]$ for $k=1,2,3,\ldots,K$. It shows $\tau$ iterations within two global aggregations.

In each interval $[k]$, first, at $(k-1)\tau$, we set
\begin{align}
    \mathbf{v}_{[k]}((k-1)\tau) &\gets \mathbf{v}((k-1)\tau)\label{eq:vk(k-1)tau},\\
    \mathbf{w}_{[k]}((k-1)\tau) &\gets \mathbf{w}((k-1)\tau)\label{eq:wk(k-1)tau}.
\end{align}
$\mathbf{v}_{[k]}((k-1)\tau)$ and $\mathbf{w}_{[k]}((k-1)\tau)$ are set as the aggregated values right after the global aggregation is conducted.


Second, starting from the aggregated values,  we consider virtual updates as if centralized NAG is adopted. In iterations $(k-1)\tau<t\leq k\tau$, we conduct 
\begin{align} 
        \mathbf{v}_{[k]}(t) &\gets \gamma\mathbf{v}_{[k]}(t-1) - \eta\nabla F(\mathbf{w}_{[k]}(t-1)),\label{eq:vkt=}\\
\mathbf{w}_{[k]}(t) 
&\gets \mathbf{w}_{[k]}(t-1) - \gamma\mathbf{v}_{[k]}(t-1) + (1+\gamma)\mathbf{v}_{[k]}(t) \label{eq:wkt=}\nonumber\\
&= \mathbf{w}_{[k]}(t-1) + \gamma\mathbf{v}_{[k]}(t) - \eta\nabla F(\mathbf{w}_{[k]}(t-1)).
\end{align}
We repeat the above process for each $[k]$. 
These $\mathbf{w}_{[k]}(t)$ and $\mathbf{v}_{[k]}(t)$ are virtual values assuming there is a centralized update. They are used to bound the gap to prove the convergence shortly. Please note that $\mathbf{w}_{[k]}(k\tau)$ and 
$\mathbf{w}_{[k+1]}(k\tau)$ are different. $\mathbf{w}_{[k]}(k\tau)$ is calculated from $\mathbf{w}_{[k]}((k-1)\tau)$ after $\tau$ iterations of centralized update, and $\mathbf{w}_{[k+1]}(k\tau)$ is directly given by $\mathbf{w}(k\tau)$. Fig.~\ref{fig:wt} illustrates the evolution of $\mathbf{w}_i(t)$, $\mathbf{w}(t)$, and $\mathbf{w}_{[k]}(t)$. 

\begin{figure}
    \centering
    \includegraphics[width=0.48\textwidth]{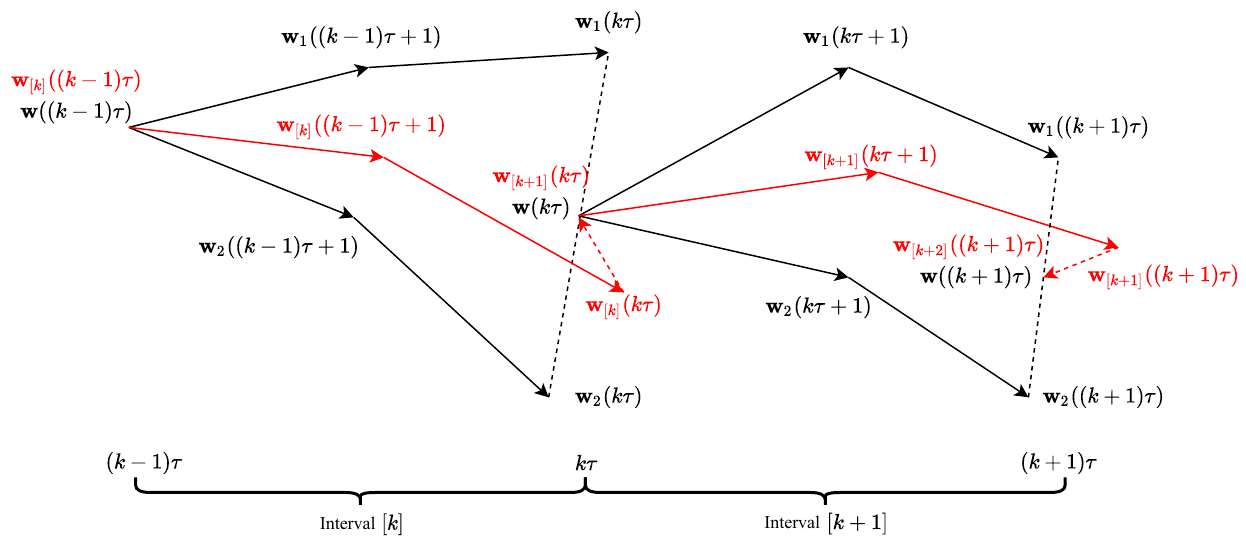}
    \caption{Illustration of $\mathbf{w}(t)$, when $N=2, \tau=2$}
    \label{fig:wt}
\end{figure}

\section{Convergence Analysis of FedNAG}

In this section, we provide detailed convergence analysis of FedNAG. This includes two steps: We first bound the gap of the weight $\mathbf{w}$ between FedNAG and virtual updates; Then we bound the loss function $F(\mathbf{w})$ between FedNAG and the optimal solution. 

\subsection{Bounding \texorpdfstring{$\Vert\mathbf{w}(t)-\mathbf{w}_{[k]}(t) \Vert $}{||w(t)-w[k](t||}}
We firstly analyze the upper bound between $\mathbf{w}(t)$ and $\mathbf{w}_{[k]}(t)$, leading to the following theorem.

\begin{theorem}\label{theorem:wt-wkt}
For any interval $[k]$, $\forall t \in [k]$, we have:
\begin{align}\label{ieq:wt-wkt}
    \Vert\mathbf{w}(t)-\mathbf{w}_{[k]}(t) \Vert \leq h(t-(k-1)\tau),
\end{align}
where we define
\begin{align*}
&A\triangleq\frac{(1+\eta\beta)(1+\gamma)+\sqrt{(1+\eta\beta)^2(1+\gamma)^{2}-4\gamma(1+\eta\beta)}}{2\gamma},\\
&B\triangleq\frac{(1+\eta\beta)(1+\gamma)-\sqrt{(1+\eta\beta)^2(1+\gamma)^{2}-4\gamma(1+\eta\beta)}}{2\gamma},\\
&E\triangleq\frac{\gamma A+A-1}{(A-B)(\gamma A-1)},\\
&F\triangleq\frac{\gamma B+B-1}{(A-B)(1-\gamma B)},
\end{align*}
and $h(x)$ yields
\begin{flalign} \label{eq:h(x)}
h(x)=& \eta \delta\left[E(\gamma A)^{x}+F(\gamma B)^{x}-\frac{1}{\eta \beta}\right.\nonumber\\
&\left.-\frac{\gamma^2(\gamma^{x}-1)-(\gamma-1) x}{(\gamma-1)^{2}}\right]
\end{flalign}
for $0<\gamma<1$ and any $x = 0,1,2,\dots$

We note that $F(\mathbf{w})$ is $\rho$-Lipschitz, so we also have:
\begin{align}\label{ieq:Fwt-Fwkt}
F(\mathbf{w}(t))-F(\mathbf{w}_{[k]}(t))\leq \rho h(t-(k-1)\tau).
\end{align}
\end{theorem}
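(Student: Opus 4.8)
The plan is to reduce the vector identities to a closed two-dimensional linear recursion in the scalar gaps $a(t)\triangleq\Vert\mathbf{w}(t)-\mathbf{w}_{[k]}(t)\Vert$ and $b(t)\triangleq\Vert\mathbf{v}(t)-\mathbf{v}_{[k]}(t)\Vert$, solve it in closed form, and read off $h$. First I would use the linearity of the averaging operator $\tfrac1D\sum_i D_i(\cdot)$ to rewrite the averaged FedNAG iterates $\mathbf{v}(t),\mathbf{w}(t)$ in the same NAG form as the virtual iterates, except that the driving gradient is $G(t-1)\triangleq\tfrac1D\sum_i D_i\nabla F_i(\mathbf{w}_i(t-1))$ rather than $\nabla F(\mathbf{w}_{[k]}(t-1))$. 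Subtracting the virtual recursions (\ref{eq:vkt=})--(\ref{eq:wkt=}) and applying the triangle inequality then yields $b(t)\le\gamma b(t-1)+\eta\Vert G(t-1)-\nabla F(\mathbf{w}_{[k]}(t-1))\Vert$ together with the companion bound $a(t)\le a(t-1)+\gamma b(t)+\eta\Vert G(t-1)-\nabla F(\mathbf{w}_{[k]}(t-1))\Vert$, using the zero initial data at $t=(k-1)\tau$ supplied by (\ref{eq:vk(k-1)tau})--(\ref{eq:wk(k-1)tau}).

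The crux is bounding the gradient mismatch $\Vert G(t-1)-\nabla F(\mathbf{w}_{[k]}(t-1))\Vert$. Inserting $\nabla F(\mathbf{w}(t-1))$ and invoking $\beta$-smoothness of $F$ gives a piece $\beta a(t-1)$; the remaining piece $\Vert G(t-1)-\nabla F(\mathbf{w}(t-1))\Vert$ is a pure client-drift term, because the $\delta$-contribution cancels in the average (since $\nabla F=\tfrac1D\sum_i D_i\nabla F_i$ forces $G(t-1)-\nabla F(\mathbf{w}(t-1))=\tfrac1D\sum_i D_i[\nabla F_i(\mathbf{w}_i(t-1))-\nabla F_i(\mathbf{w}(t-1))]$). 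So I would prove a preliminary bound on the averaged drift $\bar c(t)\triangleq\tfrac1D\sum_i D_i\Vert\mathbf{w}_i(t)-\mathbf{w}(t)\Vert$ and its momentum analogue: these obey their own $\delta$-seeded recursion (Jensen's inequality passes the per-worker bounds to the average, with the gradient divergence $\delta$ entering through $\Vert\nabla F_i(\mathbf{w})-\nabla F(\mathbf{w})\Vert\le\delta_i$). Because the constant per-step divergence $\delta$ is accumulated first through the momentum buffer and then again through the weight update, solving this drift recursion produces precisely the $\gamma^{x}$ and linear-in-$x$ terms, i.e.\ the expression $\tfrac{\gamma^2(\gamma^{x}-1)-(\gamma-1)x}{(\gamma-1)^2}$ in $h$. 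This drift bound is then the forcing term of the main recursion.

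With the forcing in hand, the main step is to solve the coupled recursion for $(a(t),b(t))$. Eliminating $b$ yields a second-order linear recursion for $a$ whose characteristic polynomial is $\lambda^2-(1+\gamma)(1+\eta\beta)\lambda+\gamma(1+\eta\beta)=0$; its two roots are exactly $\gamma A$ and $\gamma B$, which is where the factors $(\gamma A)^{x}$ and $(\gamma B)^{x}$ originate. The homogeneous part contributes $E(\gamma A)^{x}+F(\gamma B)^{x}$, with $E,F$ fixed by matching the zero initial conditions, while the particular solution supplies the constant $-\tfrac{1}{\eta\beta}$ together with the drift-shaped terms. A useful consistency check is $h(0)=0$ (forced since aggregation makes the two trajectories coincide at $(k-1)\tau$), which requires $E+F=\tfrac{1}{\eta\beta}$ and can be verified directly from the definitions of $A,B,E,F$. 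Finally, inequality (\ref{ieq:Fwt-Fwkt}) is immediate from $\rho$-Lipschitzness: $F(\mathbf{w}(t))-F(\mathbf{w}_{[k]}(t))\le\rho\Vert\mathbf{w}(t)-\mathbf{w}_{[k]}(t)\Vert\le\rho\,h(t-(k-1)\tau)$.

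I expect the main obstacle to be the bookkeeping of this two-stage recursion: correctly isolating the client-drift forcing (and verifying the $\delta$-cancellation at the average level), then carrying the drift's $\gamma^{x}$/linear structure through the second recursion and combining it with the homogeneous $\gamma A,\gamma B$ modes so that all constants collapse into the compact $E,F$ and $h$. Along the way I would also use $0<\gamma<1$ to guarantee that the discriminant $(1+\eta\beta)^2(1+\gamma)^2-4\gamma(1+\eta\beta)$ is positive (so $A,B$ are real and distinct) and that $\gamma A,\gamma B$ stay separated from $\gamma$, keeping the particular solution non-resonant.
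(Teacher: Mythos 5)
Your overall machinery---coupled first-order recursions in the weight and momentum gaps, zero initial data at $(k-1)\tau$, the characteristic polynomial $\lambda^2-(1+\gamma)(1+\eta\beta)\lambda+\gamma(1+\eta\beta)=0$ with roots $\gamma A,\gamma B$, the consistency check $E+F=\tfrac{1}{\eta\beta}$, and the final $\rho$-Lipschitz step---matches the paper's. But your decomposition of the gradient mismatch is genuinely different, and it will not reproduce the stated $h$. The paper never routes the comparison through $\nabla F(\mathbf{w}(t-1))$. It first bounds each \emph{worker-versus-virtual} gap $\Vert\mathbf{w}_i(t)-\mathbf{w}_{[k]}(t)\Vert\le f_i(t-(k-1)\tau)$ by induction (using a separate sequence identity), writing $\Vert\nabla F_i(\mathbf{w}_i)-\nabla F(\mathbf{w}_{[k]})\Vert\le\beta\Vert\mathbf{w}_i-\mathbf{w}_{[k]}\Vert+\delta_i$, so smoothness and divergence are each paid exactly once per step. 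At the average level, $\tfrac1D\sum_i D_i\nabla F_i(\mathbf{w}_{[k]}(t-1))=\nabla F(\mathbf{w}_{[k]}(t-1))$ by linearity, so the recursions for $\Vert\mathbf{v}(t)-\mathbf{v}_{[k]}(t)\Vert$ and $\Vert\mathbf{w}(t)-\mathbf{w}_{[k]}(t)\Vert$ carry \emph{no} feedback through $a(t-1)$: their forcing is the already-closed form $\eta\delta\,p(\cdot)$ obtained by averaging the $f_i$, and $h$ is just a double geometric sum of that forcing.

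Your route instead forces the average-level recursion with $\beta a(t-1)+\beta\bar c(t-1)$. This creates two concrete problems. First, the drift recursion for $\bar c$ is not ``$\delta$-seeded with only $\gamma^{x}$ and linear-in-$x$ terms'': bounding $\Vert\nabla F_i(\mathbf{w}_i)-G\Vert$ costs $\beta\Vert\mathbf{w}_i-\mathbf{w}\Vert+\delta_i+\beta\bar c$, so after averaging the drift obeys a second-order recursion with amplification $1+2\eta\beta$ rather than $1+\eta\beta$, and its solution contains exponential modes with base strictly larger than $\gamma A$. (The term $\tfrac{\gamma^2(\gamma^{x}-1)-(\gamma-1)x}{(\gamma-1)^2}$ in $h$ in fact arises from accumulating the constant $-\delta_i/\beta$ part of the per-worker bound $f_i$ through the two summations, not from the drift.) Second, your main recursion then layers yet another $\beta$-feedback on top of this already-inflated forcing. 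The net result is a bound strictly larger than $h(x)$ for $x\ge 2$, so the theorem with the specific $A,B,E,F$ does not follow from your decomposition. To recover the exact constants you must compare each worker directly against the virtual trajectory, solve that per-worker recursion in closed form, and only then average---which is precisely the paper's order of operations.
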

\begin{proof}
See Appendix for detailed proof.
\end{proof}

We have the following observations on Theorem \ref{theorem:wt-wkt}.

\noindent \textcircled{1} \textbf{Monotone of $h(x)$.} $h(0)= h(1)= 0$ and $h(x)$ increases with respect to integer $x$ for $x \geq 1$. See Appendix for detailed proof.

\noindent  \textcircled{2} \textbf{Property of $h(0)$.} When $x=0$, we have  $t=(k-1)\tau$ (the beginning of interval $[k]$) and the upper bound in \eqref{ieq:wt-wkt} is $0$. This is consistent with \eqref{eq:vk(k-1)tau} and \eqref{eq:wk(k-1)tau} for any $k$.

\noindent \textcircled{3} \textbf{Property of $h(1)$.} When $x=1$, we have $t=(k-1)\tau+1$ (the beginning of second iteration of interval $[k]$) and the upper bound in \eqref{ieq:wt-wkt} is still zero. It is easy to verify that if all workers conduct global aggregation right after the end of the first local iteration, there is no gap between FedNAG and centralized NAG.

\noindent \textcircled{4} \textbf{Property of $\tau=1$.} When $\tau =1$, we have $t-(k-1)\tau=0$ or $1$. Thus, for any interval $k$ and $t\in[k]$, the gap in \eqref{ieq:wt-wkt} and \eqref{ieq:Fwt-Fwkt} is always zero. This means that FedNAG is equivalent to centralized NAG when there is only one local update step between two global aggregation steps. See Appendix for detailed discussion. 

\noindent \textcircled{5}  \textbf{Property of $\tau>1$.} When $\tau >1$, because $t\in [(k-1)\tau,k\tau]$, we have $x=t-(k-1)\tau\in[0,\tau] $. Thus, the value of $x$ could be larger when $\tau$ is large. According to the definitions of $A,B,E$, and $F$, we can see that $\gamma A >1, 0<\gamma B<1, E>0, F>0$. When $x$ is large, because $0<\gamma<1$, the last term in \eqref{eq:h(x)} will linearly decrease with respect to $x$. Therefore, for \eqref{eq:h(x)}, $E(\gamma A)^x$  dominates when $x$ is large. It means the upper bound in \eqref{ieq:wt-wkt} will be exponentially increased with $t\in [k]$.

\noindent \textcircled{6} \textbf{Impact of $\delta$.} $h(x)$ increases linearly with respect to $\delta$.  The value of $\delta$ reflects the difference of data distribution in each worker. Larger divergence of data distribution leads to larger gap of $h(x)$.  

\subsection{Bounding \texorpdfstring{$F(\mathbf{w}(T))-F(\mathbf{w}^{*})$}{F(w(T))-F(w*)}}
\label{sec: FwT-Fw*}


For convenience, we define
\begin{align*}
p& \triangleq \max _{k\in [1,K], t \in[k]} \frac{\left\|\gamma\mathbf{v}_{[k]}(t)\right\|}{\left\|\eta\nabla F\left(\mathbf{w}_{[k]}(t)\right)\right\|},\\
\omega& \triangleq \min _{k\in [1,K], t \in[k]} \frac{1}{\left\|\mathbf{w}_{[k]}(t)-\mathbf{w}^{*}\right\|^{2}}.
\end{align*}
We can obtain the following theorem to get the upper bound as follows.
\begin{theorem} \label{theorem:Fwt-Fw*}
When all the following conditions are satisfied:
\begin{enumerate}
    \item $0<\beta\eta(\gamma+1)\leq 1$ and $0\leq\gamma<1$,
    \item $\omega\alpha-\frac{\rho h(\tau)}{\tau \varepsilon^{2}}>0$,
    \item $F(\mathbf{w}_{[k]}(k \tau))-F\left(\mathbf{w}^{*}\right) \geq \varepsilon$ for all $k$,
    \item $F(\mathbf{w}(T))-F(\mathbf{w}^{*}) \geq \varepsilon$,
\end{enumerate}
for some $\varepsilon >0$, the convergence upper bound of Algorithm 1 after $T$ iterations is given by
\begin{equation} \label{ieq:FwT-Fw*}
F(\mathbf{w}(T))-F(\mathbf{w}^{*}) \leq \frac{1}{T\left(\omega \alpha-\frac{\rho h(\tau)}{\tau \varepsilon^{2}}\right)},
\end{equation}
where we define
\begin{align*}
\alpha \triangleq &\eta(\gamma+1)\left(1-\frac{\beta\eta(\gamma+1)}{2}\right)-\frac{\beta\eta^2\gamma^2 p^2}{2}\\
&-\frac{(1-\beta\eta(\gamma+1))(1+\eta^2\gamma^2 p^2)}{2}.
\end{align*}
\end{theorem}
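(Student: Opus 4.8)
The plan is to establish the $O(1/T)$ rate in two stages. First I would analyze the idealized centralized (virtual) trajectory $\mathbf{w}_{[k]}(t)$, which runs pure NAG inside each interval, and show it contracts the optimality gap at rate $O(1/t)$; then I would use the perturbation bound $\|\mathbf{w}(t)-\mathbf{w}_{[k]}(t)\|\le h(t-(k-1)\tau)$ of Theorem~\ref{theorem:wt-wkt} to transfer the rate to the true iterates $\mathbf{w}(t)$, paying a penalty proportional to $h(\tau)$ at each aggregation point $k\tau$. Throughout, the $\tau$ virtual iterations start from the aggregated value by \eqref{eq:wk(k-1)tau}, so the virtual and true trajectories agree at the left endpoint of every interval and diverge only inside it.

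The core of the argument---and the main obstacle---is a one-step descent inequality for the virtual update. Writing $\mathbf{g}_{t-1}=\nabla F(\mathbf{w}_{[k]}(t-1))$ and substituting the momentum recursion \eqref{eq:vkt=} into \eqref{eq:wkt=}, the displacement becomes $\mathbf{w}_{[k]}(t)-\mathbf{w}_{[k]}(t-1)=\gamma^2\mathbf{v}_{[k]}(t-1)-\eta(1+\gamma)\mathbf{g}_{t-1}$, which exposes an effective gradient step of size $\eta(1+\gamma)$. Invoking $\beta$-smoothness of $F$ and expanding the quadratic term splits the decrease $F(\mathbf{w}_{[k]}(t-1))-F(\mathbf{w}_{[k]}(t))$ into three contributions: a $\|\mathbf{g}_{t-1}\|^2$ term with coefficient $\eta(1+\gamma)(1-\tfrac{\beta\eta(1+\gamma)}{2})$, a momentum-norm term $-\tfrac{\beta\gamma^4}{2}\|\mathbf{v}_{[k]}(t-1)\|^2$, and a cross term in $\mathbf{g}_{t-1}^{\mathrm{T}}\mathbf{v}_{[k]}(t-1)$ with coefficient $-\gamma^2(1-\beta\eta(1+\gamma))$. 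I would bound the momentum-norm term below using $p$, via $\gamma^2\|\mathbf{v}_{[k]}(t-1)\|^2\le\eta^2p^2\|\mathbf{g}_{t-1}\|^2$. The cross term is nonnegative because condition~(1) gives $\cos\theta\ge0$ and $1-\beta\eta(1+\gamma)\ge0$; writing $\mathbf{g}_{t-1}^{\mathrm{T}}\mathbf{v}_{[k]}(t-1)=-\|\mathbf{g}_{t-1}\|\,\|\mathbf{v}_{[k]}(t-1)\|\cos\theta_{[k]}(t-1)$, lower-bounding $\|\mathbf{v}_{[k]}(t-1)\|\ge\eta\|\mathbf{g}_{t-2}\|$ from the alignment implied by $\cos\theta\ge0$, and replacing $\|\mathbf{g}_{t-2}\|$ by $q\|\mathbf{g}_{t-1}\|$ through the definition of $q$, this term is bounded below by $\gamma^2\eta q(1-\beta\eta(1+\gamma))\cos\theta\,\|\mathbf{g}_{t-1}\|^2$. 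Collecting the three pieces gives exactly $F(\mathbf{w}_{[k]}(t-1))-F(\mathbf{w}_{[k]}(t))\ge\alpha\|\mathbf{g}_{t-1}\|^2$. This is where all of $p$, $q$, $\theta$ and condition~(1) are consumed, and where the bookkeeping is delicate: the NAG momentum couples consecutive iterations, so every inner product and norm must be tracked at matched indices, and the interval-boundary iterations (where $\mathbf{v}_{[k]}$ is the freshly aggregated momentum) need separate care.

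Next I would turn this gradient-norm descent into a recursion for the gap. Convexity gives $F(\mathbf{w}_{[k]}(t-1))-F(\mathbf{w}^*)\le\|\mathbf{g}_{t-1}\|\,\|\mathbf{w}_{[k]}(t-1)-\mathbf{w}^*\|$, so the definition of $\omega$ yields $\|\mathbf{g}_{t-1}\|^2\ge\omega\,(F(\mathbf{w}_{[k]}(t-1))-F(\mathbf{w}^*))^2$. Setting $\phi(t)=F(\mathbf{w}_{[k]}(t))-F(\mathbf{w}^*)$ and combining with the previous step gives the quadratic decrease $\phi(t-1)-\phi(t)\ge\alpha\omega\,\phi(t-1)^2$; dividing by $\phi(t)\phi(t-1)$ and using $\phi(t)\le\phi(t-1)$ produces $\tfrac{1}{\phi(t)}-\tfrac{1}{\phi(t-1)}\ge\alpha\omega$. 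Summing over the $\tau$ iterations of interval $[k]$ and using \eqref{eq:wk(k-1)tau} shows that the reciprocal gap at the virtual endpoint satisfies $\tfrac{1}{F(\mathbf{w}_{[k]}(k\tau))-F(\mathbf{w}^*)}\ge\tfrac{1}{F(\mathbf{w}((k-1)\tau))-F(\mathbf{w}^*)}+\tau\alpha\omega$.

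Finally I would stitch the intervals together while absorbing the FedNAG-versus-virtual discrepancy. At the aggregation point $k\tau$, \eqref{ieq:Fwt-Fwkt} gives $F(\mathbf{w}(k\tau))-F(\mathbf{w}_{[k]}(k\tau))\le\rho h(\tau)$, and conditions~(3)--(4), which keep the gaps appearing in the denominators at least $\varepsilon$, imply that replacing the virtual endpoint by the true iterate costs at most $\tfrac{\rho h(\tau)}{\varepsilon^2}$ in reciprocal gap. Chaining this with the per-interval gain of $\tau\alpha\omega$ yields $\tfrac{1}{F(\mathbf{w}(k\tau))-F(\mathbf{w}^*)}\ge\tfrac{1}{F(\mathbf{w}((k-1)\tau))-F(\mathbf{w}^*)}+\tau\alpha\omega-\tfrac{\rho h(\tau)}{\varepsilon^2}$. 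Telescoping over $k=1,\dots,K$, dropping the nonnegative initial reciprocal, and substituting $T=K\tau$ gives $\tfrac{1}{F(\mathbf{w}(T))-F(\mathbf{w}^*)}\ge T(\omega\alpha-\tfrac{\rho h(\tau)}{\tau\varepsilon^2})$, and inverting---legitimate precisely because condition~(2) makes the right-hand side positive---yields \eqref{ieq:FwT-Fw*}.
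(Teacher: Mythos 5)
Your proposal follows essentially the same route as the paper's own proof: the identical one-step descent inequality $F(\mathbf{w}_{[k]}(t-1))-F(\mathbf{w}_{[k]}(t))\geq\alpha\|\nabla F(\mathbf{w}_{[k]}(t-1))\|^2$ obtained from $\beta$-smoothness with $p$, $q$, and $\cos\theta$ consumed in exactly the same places, the same conversion to the reciprocal-gap recursion $\frac{1}{c_{[k]}(t+1)}-\frac{1}{c_{[k]}(t)}\geq\omega\alpha$ via convexity and Cauchy--Schwarz, and the same interval-stitching that pays $\rho h(\tau)/\varepsilon^2$ at each of the $K$ aggregation points before inverting under condition~(2). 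The argument is correct, and your remark that the interval-boundary iterations (where $\mathbf{v}_{[k]}$ is the freshly aggregated momentum) need separate care flags a subtlety the paper itself passes over silently.
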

\begin{proof}
See Appendix for detailed proof.
\end{proof}
Through Theorem~\ref{theorem:Fwt-Fw*}, we can further obtain the following bound between $F\left(\mathbf{w}^{\mathrm{f}}\right)$ and $F\left(\mathbf{w}^{*}\right)$.
\begin{theorem} \label{theorem:Fwf-Fw*}
When $0<\beta\eta(\gamma+1)\leq 1$, and $0\leq\gamma< 1$, we have 
\begin{align} \label{ieq:Fwf-Fw*}
&F(\mathbf{w}^{\mathrm{f}})-F\left(\mathbf{w}^{*}\right)\nonumber\\
\leq& \frac{1}{2 T \omega \alpha}+\sqrt{\frac{1}{4 T^{2} \omega^{2} \alpha^{2}}+\frac{\rho h(\tau)}{\omega \alpha \tau}}+\rho h(\tau).
\end{align}
\end{theorem}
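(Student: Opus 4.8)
The plan is to reduce Theorem~\ref{theorem:Fwf-Fw*} to Theorem~\ref{theorem:Fwt-Fw*} by choosing the free parameter $\varepsilon$ cleverly and then arguing by cases according to whether the hypotheses of Theorem~\ref{theorem:Fwt-Fw*} actually hold. The starting observation is that, by the definition of $\mathbf{w}^{\mathrm{f}}$ in \eqref{eq:wf}, we have $F(\mathbf{w}^{\mathrm{f}}) \leq F(\mathbf{w}(k\tau))$ for every $k=1,\dots,K$, and in particular $F(\mathbf{w}^{\mathrm{f}}) \leq F(\mathbf{w}(T))$. So it suffices to control $F(\mathbf{w}(k\tau))-F(\mathbf{w}^{*})$ for some convenient $k$. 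Note also that Condition~1 of Theorem~\ref{theorem:Fwt-Fw*} coincides verbatim with the hypotheses of the present theorem, so it is available throughout.

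First I would fix $\varepsilon$ to be the positive root of the quadratic $T\omega\alpha\,\varepsilon^{2}-\varepsilon-\frac{T\rho h(\tau)}{\tau}=0$, namely
\[
\varepsilon=\frac{1}{2T\omega\alpha}+\sqrt{\frac{1}{4T^{2}\omega^{2}\alpha^{2}}+\frac{\rho h(\tau)}{\omega\alpha\tau}},
\]
which is exactly the sum of the first two terms of the claimed bound \eqref{ieq:Fwf-Fw*}. The point of this choice is twofold. Rearranging the quadratic gives $\omega\alpha-\frac{\rho h(\tau)}{\tau\varepsilon^{2}}=\frac{1}{T\varepsilon}>0$, so Condition~2 of Theorem~\ref{theorem:Fwt-Fw*} is satisfied automatically; and substituting this same identity into the right-hand side of \eqref{ieq:FwT-Fw*} collapses that bound to precisely $\varepsilon$. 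Thus with this $\varepsilon$, whenever Theorem~\ref{theorem:Fwt-Fw*} applies it yields $F(\mathbf{w}(T))-F(\mathbf{w}^{*})\leq\varepsilon$.

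Then I would argue by cases, aiming to establish $F(\mathbf{w}^{\mathrm{f}})-F(\mathbf{w}^{*})\leq\varepsilon+\rho h(\tau)$ in every case. If Conditions~3 and~4 both hold for this $\varepsilon$, then all four hypotheses of Theorem~\ref{theorem:Fwt-Fw*} are met, so $F(\mathbf{w}(T))-F(\mathbf{w}^{*})\leq\varepsilon$; combined with $F(\mathbf{w}^{\mathrm{f}})\leq F(\mathbf{w}(T))$ this gives the claim. If Condition~4 fails, then $F(\mathbf{w}(T))-F(\mathbf{w}^{*})<\varepsilon$ directly, and again $F(\mathbf{w}^{\mathrm{f}})\leq F(\mathbf{w}(T))$ closes the argument. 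If Condition~3 fails, there is a $k$ with $F(\mathbf{w}_{[k]}(k\tau))-F(\mathbf{w}^{*})<\varepsilon$; invoking the Lipschitz gap bound \eqref{ieq:Fwt-Fwkt} at $t=k\tau$ (so $x=\tau$), namely $F(\mathbf{w}(k\tau))-F(\mathbf{w}_{[k]}(k\tau))\leq\rho h(\tau)$, and then $F(\mathbf{w}^{\mathrm{f}})\leq F(\mathbf{w}(k\tau))$, yields $F(\mathbf{w}^{\mathrm{f}})-F(\mathbf{w}^{*})<\varepsilon+\rho h(\tau)$. Since the three cases are exhaustive, adding $\rho h(\tau)$ to $\varepsilon$ recovers exactly \eqref{ieq:Fwf-Fw*}.

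The genuinely delicate part will be the parameter selection in the middle step: one must verify that the positive root of the quadratic is the right quantity so that the nonnegativity needed for Condition~2 and the collapse of the Theorem~\ref{theorem:Fwt-Fw*} bound to $\varepsilon$ occur simultaneously, and one must confirm that the case split is genuinely exhaustive (so that no assumption of Theorem~\ref{theorem:Fwt-Fw*} is silently used outside its validity). Everything else is routine bookkeeping built on the already-established estimates from Theorems~\ref{theorem:wt-wkt} and~\ref{theorem:Fwt-Fw*} together with the minimizing property of $\mathbf{w}^{\mathrm{f}}$.
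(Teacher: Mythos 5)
Your proposal is correct and follows essentially the same route as the paper: both choose $\varepsilon_0$ as the positive root of the fixed-point equation $\varepsilon = \frac{1}{T(\omega\alpha - \rho h(\tau)/(\tau\varepsilon^2))}$, observe that this makes Condition~2 of Theorem~\ref{theorem:Fwt-Fw*} automatic and collapses its bound to $\varepsilon_0$, and then exploit the failure of Condition~3 or~4 together with the Lipschitz gap $\rho h(\tau)$ and the minimizing property of $\mathbf{w}^{\mathrm{f}}$. Your direct three-way case split at $\varepsilon=\varepsilon_0$ is a slightly cleaner packaging than the paper's proof by contradiction (and absorbs the $\rho h(\tau)=0$ case that the paper treats separately), but it is the same argument.
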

\begin{proof}
See Appendix for detailed proof.
\end{proof}

We have proven that FedNAG has the convergence rate $\mathcal{O}\left(\frac{1}{T}\right)$ for convex problems. Please note we have the following observations on Theorem \ref{theorem:Fwf-Fw*}.

\noindent \textcircled{1} \textbf{Effect of $\tau$.}
From Appendix, we have known that $h(\tau)\geq0$ and increases with integer $\tau$. Thus, for a given $T$, the convergence upper bound becomes larger when $\tau$ is larger.

\noindent \textcircled{2} \textbf{Property of $\tau=1$.} When $\tau =1$, we have $h(\tau)=0$. We can observe that the gap converges to zero when $T\to\infty$. This means if we conduct global aggregation after every local update, $F(\mathbf{w}(t))$ will converge to the optimal solution. 

\noindent \textcircled{3} \textbf{Property of $\tau>1$.} When $\tau>1$, we have $h(\tau)>0$. We can observe that the gap converges to a non-zero gap $\sqrt{\frac{\rho h(\tau)}{\omega\alpha\tau}}+\rho h(\tau)$ when $T\to \infty$. This means if we conduct global aggregation after multiple local updates, there is a non-zero gap to the optimal solution.

\noindent \textcircled{4} \textbf{Tradeoff between communication and convergence.} Based on the Observations \textcircled{2} and \textcircled{3} above, $\tau =1$ gives the best convergence performance.  However, by doing so, it will increase the communication frequency.   This will lead to a tradeoff between communication overhead and convergence performance. In this paper, we do not model the costs and utilities of communication overhead (in different types of distributed systems) and convergence performance, so that the optimal tradeoff is left for future work.

\noindent \textcircled{5} \textbf{Effect of $\delta$.} 
Following the Observation \textcircled{6} of Theorem \ref{theorem:wt-wkt},  the convergence upper bound will be increased when $\delta$ is getting larger.

\section{Comparison between FedAvg and FedNAG}
In this section, we compare the performance between FedAvg and FedNAG. The  convergence upper bound of FedAvg has been derived in Theorem 2 in \cite{wang2019adaptive} as follows:
\begin{align}
&F\left(\hat{\mathbf{w}}^{\mathrm{f}}\right)-F\left(\mathbf{w}^{*}\right)\nonumber\\
\leq& \frac{1}{2T\omega\hat{\alpha}}+\sqrt{\frac{1}{4T^2 \omega^{2} \hat{\alpha}^{2}}+\frac{\rho \hat{h}(\tau)}{\omega \hat{\alpha} \tau}}+\rho \hat{h}(\tau),
\end{align}
where
\begin{align}\label{eq:hat_h(x)}
\hat{h}(\tau)&=\frac{\delta}{\beta}\left((\eta \beta+1)^{\tau}-1\right)-\eta\delta\tau,\\
\hat{\alpha}&\triangleq\eta\left(1-\frac{\beta\eta}{2}\right).\nonumber
\end{align}
Please note that $\rho, \beta, \tau, \omega$, and $\eta$ are defined the same way as those in FedNAG in this paper. $\hat{\alpha}$ and $\hat{h}(\cdot)$ are defined differently, but with similar meanings as $\alpha$ and $h(\cdot)$ in this paper.


Although FedNAG has the same convergence rate as FedAvg, we can still compare the convergence performance by comparing the convergence upper bound for a given $T$. In order to make a fair comparison, we let FedAvg and FedNAG trained under the same environment using the same configuration. Here, we note that $\delta$ and $\omega$ reflect the properties of data distribution. We assume  the dataset is distributed in each worker in the same way in FedAvg and FedNAG, so that the values of $\omega$ and $\delta$ are same. The loss function $F_{i}(\cdot)$, $F(\cdot)$, constants $\rho$ and $\beta$, and hyper-parameters $\tau$ and $\eta$ are the same. We also set the same initial value for $\mathbf{w}^{\mathrm{f}}$, $\mathbf{w}_i(0)$ for FedAvg and FedNAG. The only new term in FedNAG is $\mathbf{v}_i(t)$, and we  set $\mathbf{v}_i(0)=\mathbf{0}$.

We use $f_1(T)$ and $f_2(T)$ to define the convergence upper bound of FedNAG and FedAvg respectively. Small function value implies better convergence performance. 
\begin{align}
f_1(T) &\triangleq \frac{1}{2 T \omega \alpha}+\sqrt{\frac{1}{4 T^{2} \omega^{2} \alpha^{2}}+\frac{\rho h(\tau)}{\omega \alpha \tau}}+\rho h(\tau),\\
f_2(T) &\triangleq \frac{1}{2T\omega\hat{\alpha}}+\sqrt{\frac{1}{4T^2 \omega^{2} \hat{\alpha}^{2}}+\frac{\rho \hat{h}(\tau)}{\omega \hat{\alpha} \tau}}+\rho \hat{h}(\tau).
\end{align}

To prevent the gradient descent from overshooting the minimum or failing to converge \cite{goodfellow2016deep}, we choose a sufficiently small $\eta$  to guarantee the convergence of FedNAG and FedAvg. The following conclusion is made when $\eta\to 0^+$.

\begin{theorem} \label{theorem:fast}
When $0<\beta\eta(\gamma+1)\leq 1$ and $0<\gamma< 1$, 
FedNAG outperforms FedAvg, i.e.,
\begin{align*}
    f_1(T) < f_2(T)
\end{align*}
for any $T$ and an arbitrarily small $\eta\to 0^+$.
\end{theorem}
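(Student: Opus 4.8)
\textbf{Proof proposal for Theorem~\ref{theorem:fast}.}

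The plan is to compare $f_1(T)$ and $f_2(T)$ term by term in the limit $\eta\to 0^+$, exploiting the fact that both bounds have exactly the same structure: a term of order $1/(T\omega\cdot)$, a square-root term, and a Lipschitz-penalty term $\rho h(\tau)$ (resp.\ $\rho\hat h(\tau)$). Since $\tau$, $\omega$, $\rho$, $\beta$, $\gamma$, and $T$ are held fixed and identical across the two algorithms, the entire comparison reduces to understanding the leading-order behavior in $\eta$ of two pairs of quantities: the effective step sizes $\alpha$ versus $\hat\alpha$, and the gap functions $h(\tau)$ versus $\hat h(\tau)$. First I would Taylor-expand each of these four quantities as $\eta\to 0^+$ and identify the dominant power of $\eta$.

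For the step sizes, from the definitions I expect $\hat\alpha=\eta\left(1-\tfrac{\beta\eta}{2}\right)=\eta+O(\eta^2)$, while $\alpha=\eta(\gamma+1)\bigl(1-\tfrac{\beta\eta(\gamma+1)}{2}\bigr)-\tfrac{\beta\eta^2\gamma^2 p^2}{2}+\gamma^2\eta q(1-\beta\eta(\gamma+1))\cos\theta$, whose leading term is $\eta\bigl[(\gamma+1)+\gamma^2 q\cos\theta\bigr]+O(\eta^2)$. Under the hypotheses $\cos\theta\ge 0$ and $0<\gamma<1$, the bracket satisfies $(\gamma+1)+\gamma^2 q\cos\theta\ge \gamma+1>1$, so to leading order $\alpha>\hat\alpha$; the momentum strictly enlarges the effective step. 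For the gap functions, the key claim to establish is that $h(\tau)$ vanishes faster in $\eta$ than $\hat h(\tau)$. Expanding $\hat h(\tau)=\tfrac{\delta}{\beta}\bigl((\eta\beta+1)^\tau-1\bigr)-\eta\delta\tau$ gives a leading quadratic term $\tfrac{\delta}{\beta}\binom{\tau}{2}(\eta\beta)^2+O(\eta^3)=\tfrac{\delta\beta\tau(\tau-1)}{2}\eta^2+O(\eta^3)$, i.e.\ $\hat h(\tau)=\Theta(\eta^2)$. The analogous expansion of $h(\tau)$ via \eqref{eq:h(x)} must be carried out carefully, since $A$, $B$, $E$, $F$ all depend on $\eta\beta$; I would expand $A=1+O(\eta)$, $B=\gamma+O(\eta)$ (so that $\gamma A\to\gamma$ and $\gamma B\to\gamma^2$ etc.) and combine with the explicit rational last term. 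The expected outcome, consistent with the linear-dependence-on-$\delta$ observation already noted for $h$, is that $h(\tau)=c(\tau,\gamma,\beta)\,\eta^2+O(\eta^3)$ for some nonnegative coefficient, and the goal is to show $h(\tau)\le\hat h(\tau)$ (or at least $h(\tau)/\alpha\le\hat h(\tau)/\hat\alpha$ after accounting for the step-size denominator) to leading order.

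With these expansions in hand, I would compare the three pieces of $f_1-f_2$ directly. The first terms obey $\tfrac{1}{2T\omega\alpha}<\tfrac{1}{2T\omega\hat\alpha}$ precisely because $\alpha>\hat\alpha$; the penalty terms obey $\rho h(\tau)\le\rho\hat h(\tau)$; and the square-root terms, being monotone increasing in both $1/\alpha$ and $h(\tau)/\alpha$, inherit the same ordering once both inner comparisons are settled. Summing the three strict/non-strict inequalities yields $f_1(T)<f_2(T)$. The main obstacle I anticipate is the asymptotic analysis of $h(\tau)$: because $A,B,E,F$ are built from the square root $\sqrt{(1+\eta\beta)^2(1+\gamma)^2-4\gamma(1+\eta\beta)}$, the small-$\eta$ expansion is algebraically delicate, and one must verify that the exponentially growing factor $E(\gamma A)^\tau$ does not produce a term of lower order in $\eta$ than $\eta^2$ that would spoil the comparison. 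Establishing that $h(\tau)$ is genuinely $O(\eta^2)$ and dominated by $\hat h(\tau)$ — rather than merely bounded — is the crux; once the leading coefficients are pinned down and shown to satisfy the right inequality under $\cos\theta\ge 0$, $0<\gamma<1$, the remaining term-by-term assembly is routine.
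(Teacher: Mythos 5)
Your expansion of the step sizes is the part that actually decides the theorem, and it matches the paper: $\hat\alpha=\eta+O(\eta^2)$ while $\alpha=\eta\bigl[(\gamma+1)+\gamma^2q\cos\theta\bigr]+O(\eta^2)$ with leading coefficient at least $\gamma+1>1$, so $\alpha>\hat\alpha$ for small $\eta$. The genuine gap is in how you handle $h(\tau)$ versus $\hat h(\tau)$. Your term-by-term strategy needs $\rho h(\tau)\le\rho\hat h(\tau)$ and $h(\tau)/\alpha\le\hat h(\tau)/\hat\alpha$, and you correctly flag the first of these as "the crux" --- but you never establish it, and it is almost certainly false in the direction you need. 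Momentum \emph{amplifies} the client-drift bound: by Lemma~\ref{lemma:ABCD}, $C(\gamma A)^x+D(\gamma B)^x\ge(1+\eta\beta+\eta\beta\gamma)^x>(1+\eta\beta)^x$, so the per-worker gap $f_i(x)=\tfrac{\delta_i}{\beta}(\gamma^x(CA^x+DB^x)-1)$ dominates the corresponding FedAvg gap $\tfrac{\delta_i}{\beta}((1+\eta\beta)^x-1)$, and one should expect $h(\tau)\ge\hat h(\tau)$, not the reverse. A proof built on $h(\tau)\le\hat h(\tau)$ therefore fails at the third term and inside the square root.

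The repair --- and the route the paper takes --- is to observe that the comparison of the gap functions is irrelevant in the limit. One shows $\lim_{\eta\to0}h(\tau)=0$ (the paper does this by expanding $E(\gamma A)^\tau$ against $\tfrac{1}{\eta\beta}$ and applying L'H\^opital; your expansions $\gamma A=1+\tfrac{\eta\beta}{1-\gamma}+O(\eta^2)$, $E=\tfrac{1}{\eta\beta}+O(1)$ give the same conclusion, indeed $h(\tau)=O(\eta^2)$) and likewise $\hat h(\tau)=O(\eta^2)$ from \eqref{eq:hat_h(x)}. Since $\tfrac{1}{2T\omega\alpha}$ and $\tfrac{1}{2T\omega\hat\alpha}$ are $\Theta(1/\eta)$ while $\rho h(\tau)$, $\rho\hat h(\tau)$, and the corrections $\tfrac{\rho h(\tau)}{\omega\alpha\tau}$, $\tfrac{\rho\hat h(\tau)}{\omega\hat\alpha\tau}$ inside the square roots are all $o(1/\eta^2)$ relative to $\tfrac{1}{4T^2\omega^2\alpha^2}$, both bounds collapse to $f_1(T)\simeq\tfrac{1}{T\omega\alpha}$ and $f_2(T)\simeq\tfrac{1}{T\omega\hat\alpha}$, and the strict inequality follows from $\alpha>\hat\alpha$ alone --- with no need for (and no truth to) the ordering of $h(\tau)$ and $\hat h(\tau)$.
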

\begin{proof}
See Appendix for detailed discussion.
\end{proof}
Please note that we have the following observations on Theorem \ref{theorem:fast}.

\noindent \textcircled{1} \textbf{Discussion of $\eta$.}
In Theorem \ref{theorem:fast}, we set $\eta\to0^+$. Actually, there exists a  threshold value for $\eta$ called $\bar{\eta}$. 
If $\eta<\bar{\eta}$, $0<\beta\eta(\gamma+1)\leq 1$, and $0<\gamma< 1$, then $f_1(T)< f_2(T)$ is still true. Numerical method can be used to calculate the value of $\bar{\eta}$.

\section{Experiments} \label{sec:exp}
In this section we evaluate the convergence performance of FedNAG compared with benchmark algorithms including FedAvg, FedMom, centralized SGD (cSGD), and centralized NAG (cNAG) by real-world experiments. We then discuss the impacts of hyper-parameters, including global aggregation frequency $\tau$, momentum coefficient $\gamma$, and number of workers $N$. 
Then, we explicitly generate different levels of non-i.i.d. data to test the performance of FedNAG and benchmarks. Finally, we perform trace-driven simulation as a digital representation of the mobile edge computing environment to analyze the total training time (including computation delay and communication delay).

\begin{table}[t!b]
\centering
\caption{Experiment settings}
\label{tab:experiment_setting}
\begin{tabular}{ p{0.20\columnwidth} p{0.132\columnwidth} p{0.09\columnwidth} p{0.09\columnwidth} p{0.11\columnwidth} p{0.09\columnwidth}}
\toprule
Experiment\newline purpose & Figures & $\tau$ & $\gamma$ & $T$ & $N$\\
\midrule
Convergence\newline performance &Fig. \ref{fig:alg} & $20$ or \newline  $40$ & $0.9$ & $1000$ or \newline$10000$ & $4$\\  
\midrule
Effects of $\tau$ & Figs. \ref{fig:tau} and \ref{fig:tau_reach}& various & $0.5$ & $1000$  & $4$ \\
\midrule
Effects of $\gamma$&Figs. \ref{fig:gamma},\newline \ref{fig:gamma_T}, \ref{fig:gamma=1} & $4$ & various & $1000$  & $4$ \\
\midrule
Effects of $N$ &Fig. \ref{fig:worker} & $4$ & $0.9$ & $1000$  & various \\
\midrule
Effects of non-\newline i.i.d. data & Fig. \ref{fig:noniid} & $40$ & $0.9$ & $1000$  & $4$ \\
\midrule
Trace-driven \newline simulation & Fig. \ref{fig:trace_driven} & $20$ or \newline $40$ & $0.9$ & $1000$  & $4$ \\
\bottomrule
\end{tabular}
\end{table}

\begin{figure*}[htb!]
    \centering
    \subfigure[Linear regression on MNIST]{
    \begin{minipage}[t]{0.24\textwidth}
        \centering
        \includegraphics[width=\textwidth]{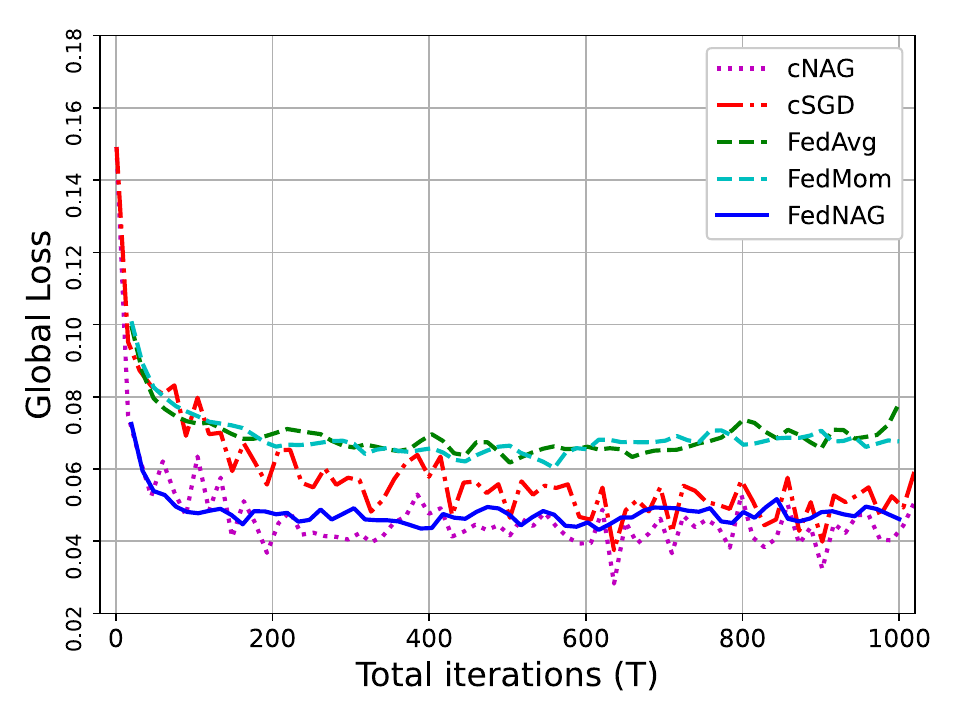}
    \end{minipage}
    \begin{minipage}[t]{0.24\textwidth}
        \centering
        \includegraphics[width=\textwidth]{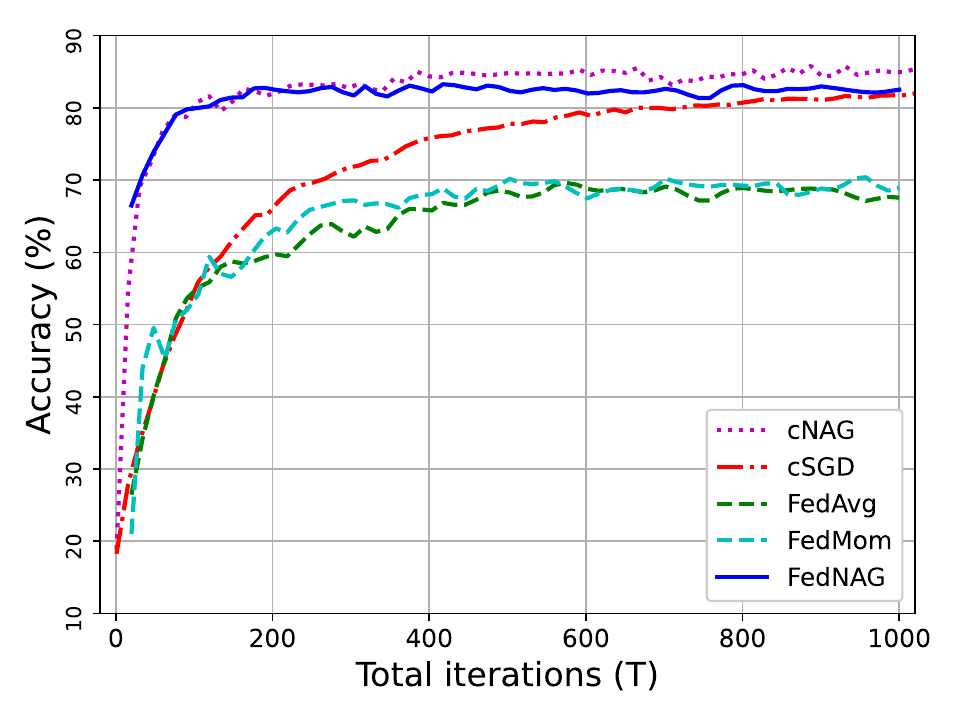}
    \end{minipage}\label{fig:linear}}
    \subfigure[Logistic regression on MNIST]{
    \begin{minipage}[t]{0.24\textwidth}
        \centering
        \includegraphics[width=\textwidth]{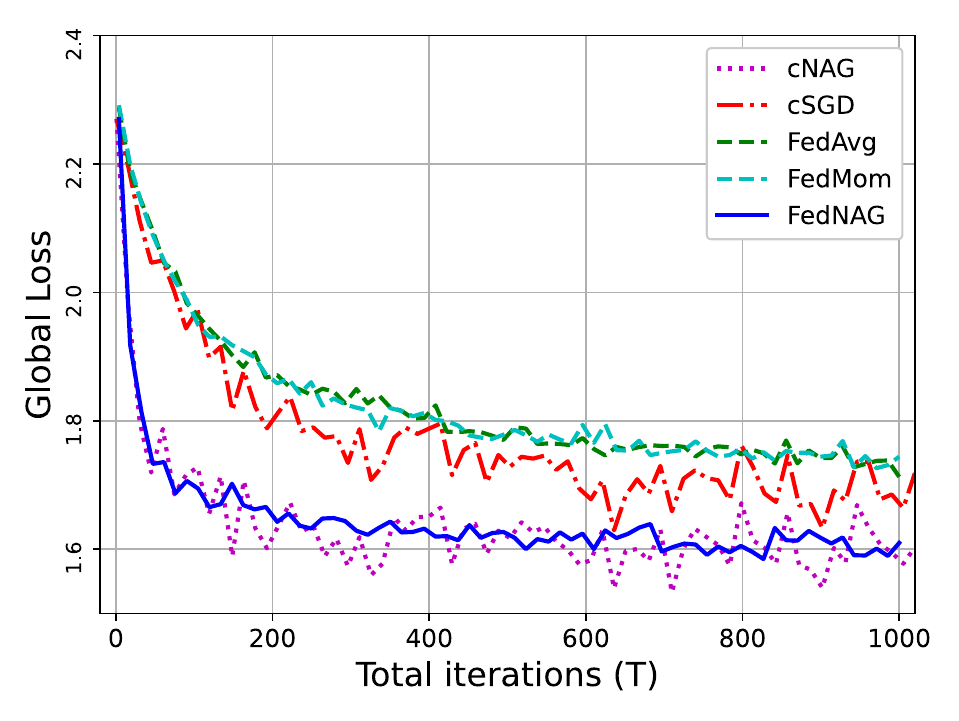}
    \end{minipage}
    \begin{minipage}[t]{0.24\textwidth}
        \centering
        \includegraphics[width=\textwidth]{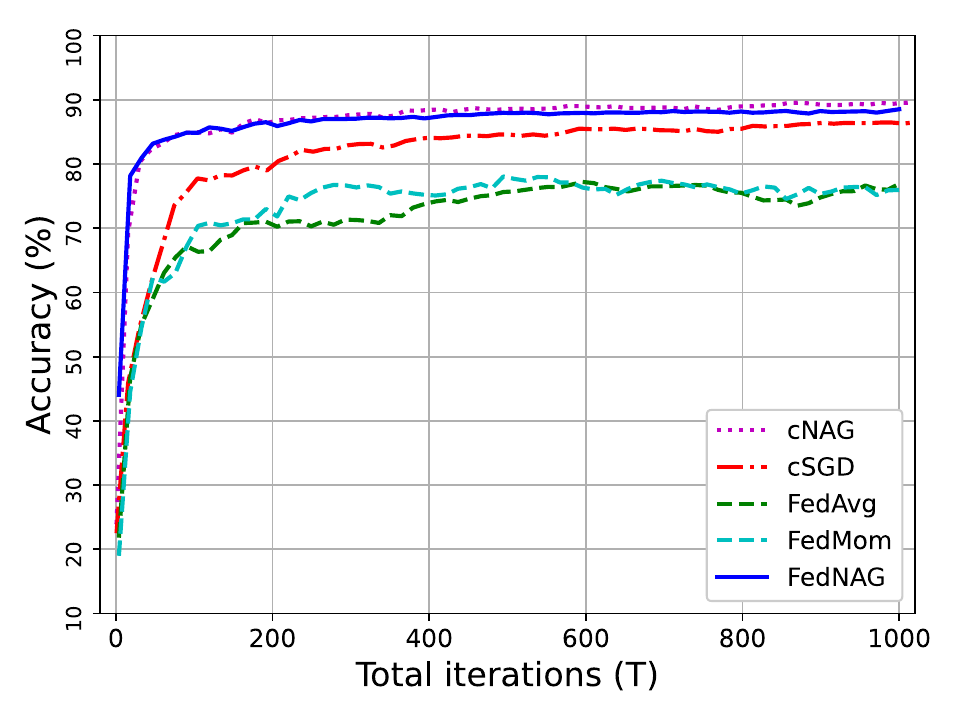}
    \end{minipage}\label{fig:logistic}}
    \subfigure[CNN on MNIST]{
    \begin{minipage}[t]{0.24\textwidth}
        \centering
        \includegraphics[width=\textwidth]{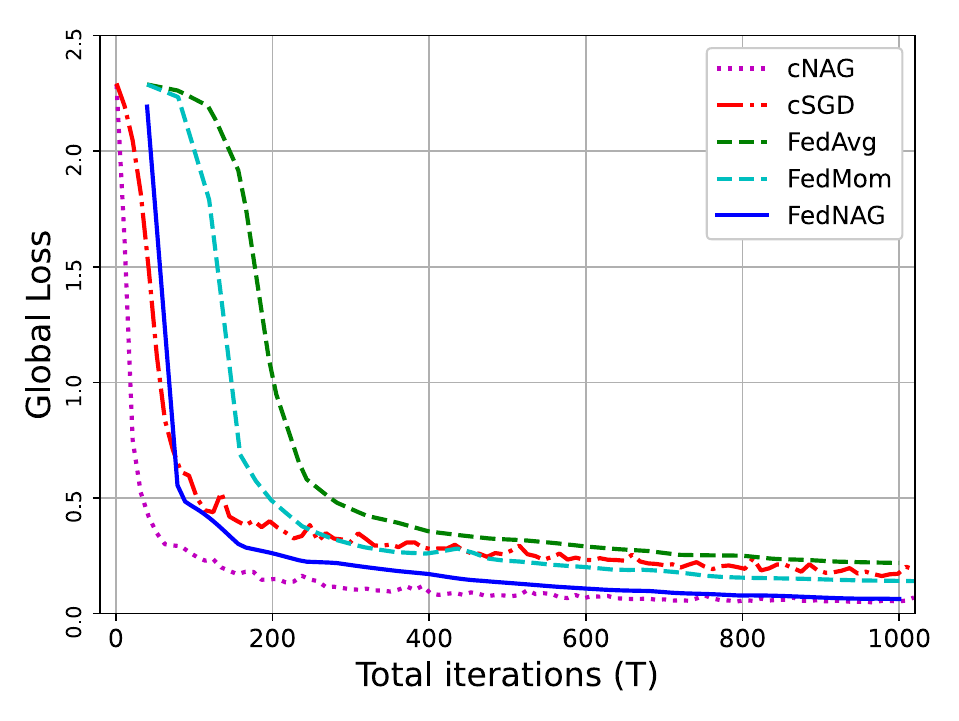}
    \end{minipage}
    \begin{minipage}[t]{0.24\textwidth}
        \centering
        \includegraphics[width=\textwidth]{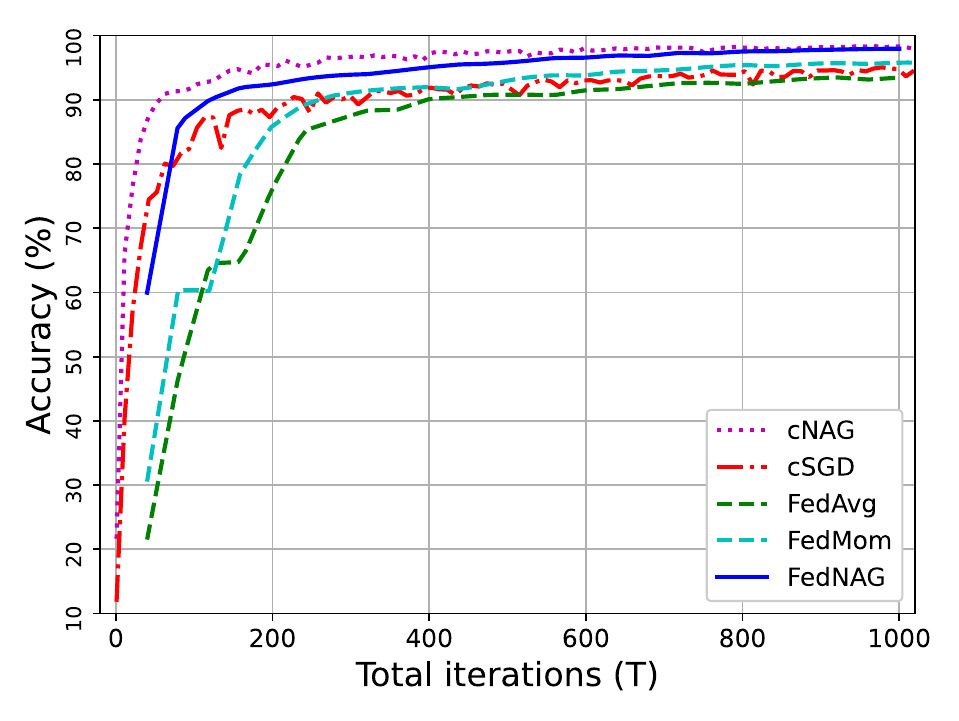}
    \end{minipage}\label{fig:cnn_mnist}}
    \subfigure[CNN on CIFAR-10]{
    \begin{minipage}[t]{0.24\textwidth}
        \centering
        \includegraphics[width=\textwidth]{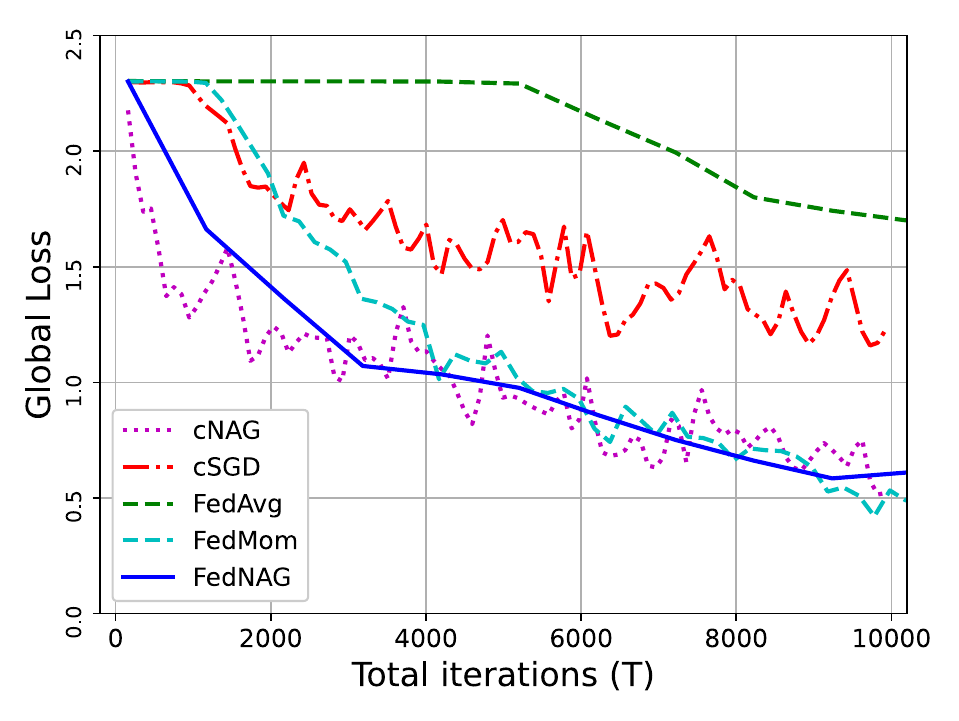}
    \end{minipage}
    \begin{minipage}[t]{0.24\textwidth}
        \centering
        \includegraphics[width=\textwidth]{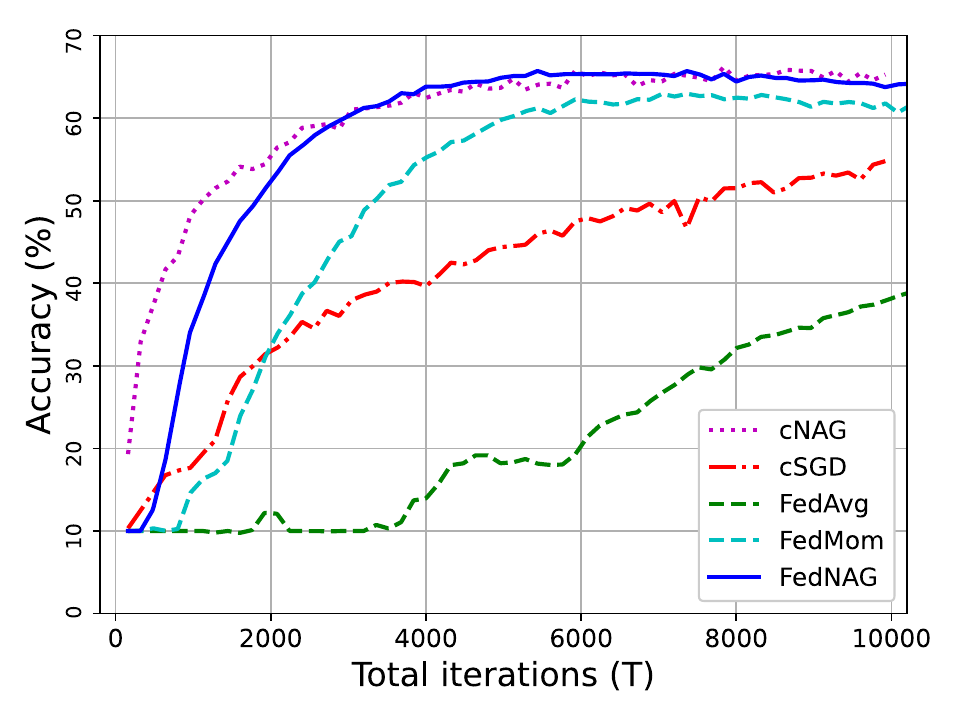}
    \end{minipage}\label{fig:cifar10}}
    \subfigure[VGG16 on CIFAR-10]{
    \begin{minipage}[t]{0.24\textwidth}
        \centering
        \includegraphics[width=\textwidth]{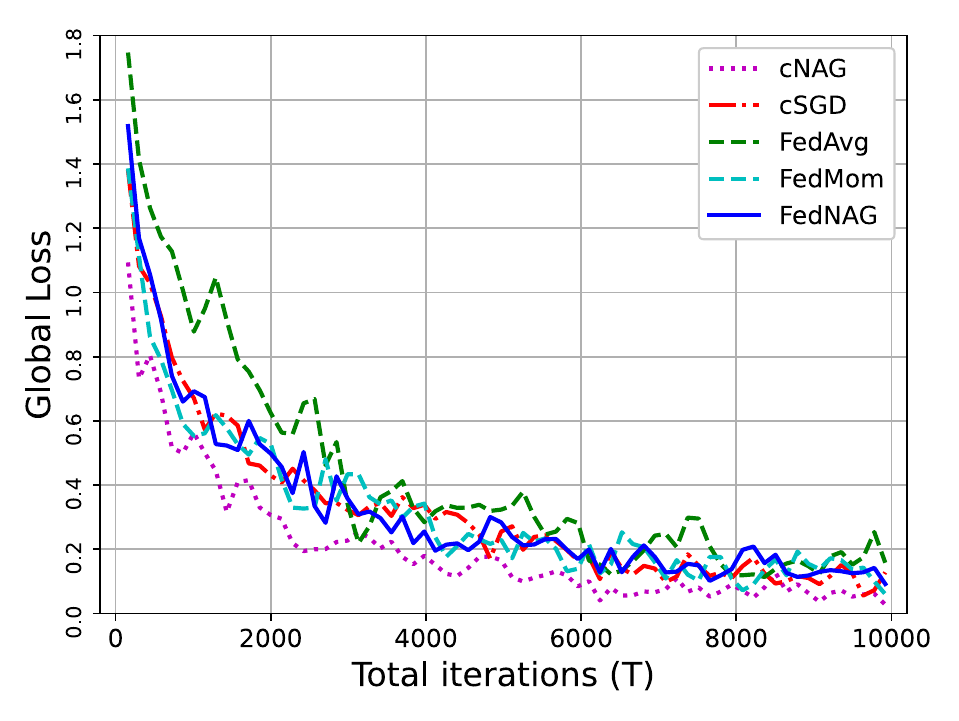}
    \end{minipage}
    \begin{minipage}[t]{0.24\textwidth}
        \centering
        \includegraphics[width=\textwidth]{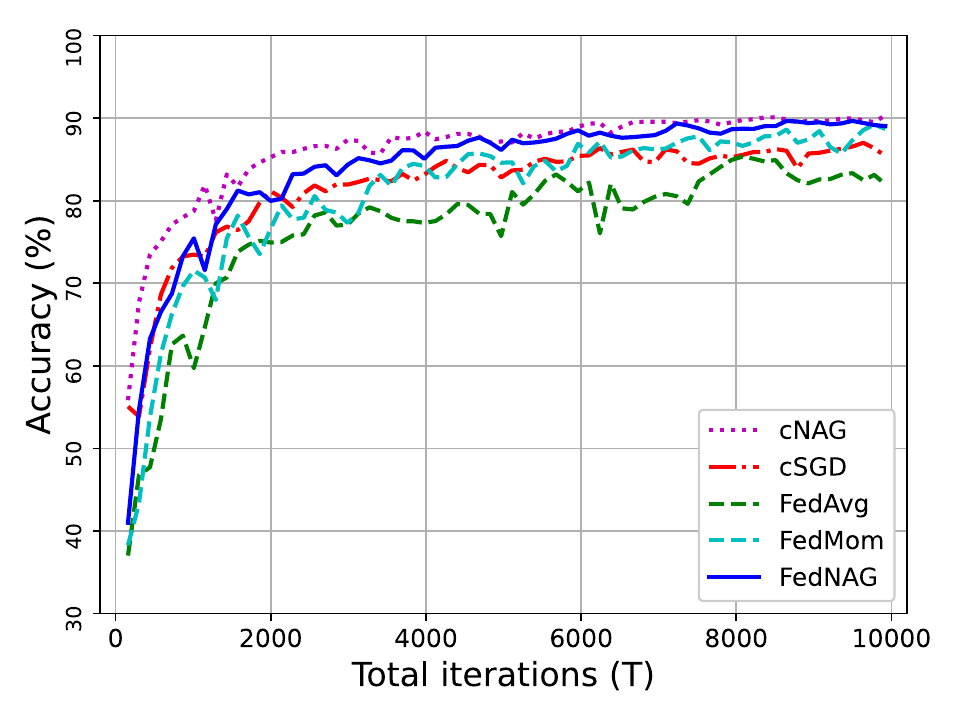}
    \end{minipage}\label{fig:vgg16_cifar10}}
    \subfigure[VGG16 on CIFAR-100]{
    \begin{minipage}[t]{0.24\textwidth}
        \centering
        \includegraphics[width=\textwidth]{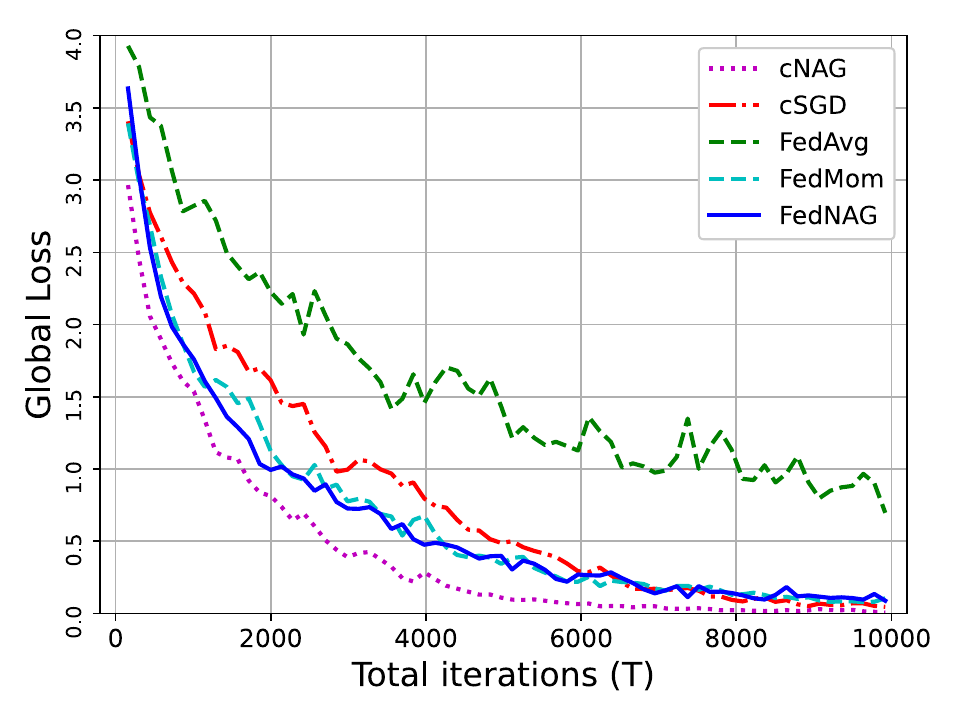}
    \end{minipage}
    \begin{minipage}[t]{0.24\textwidth}
        \centering
        \includegraphics[width=\textwidth]{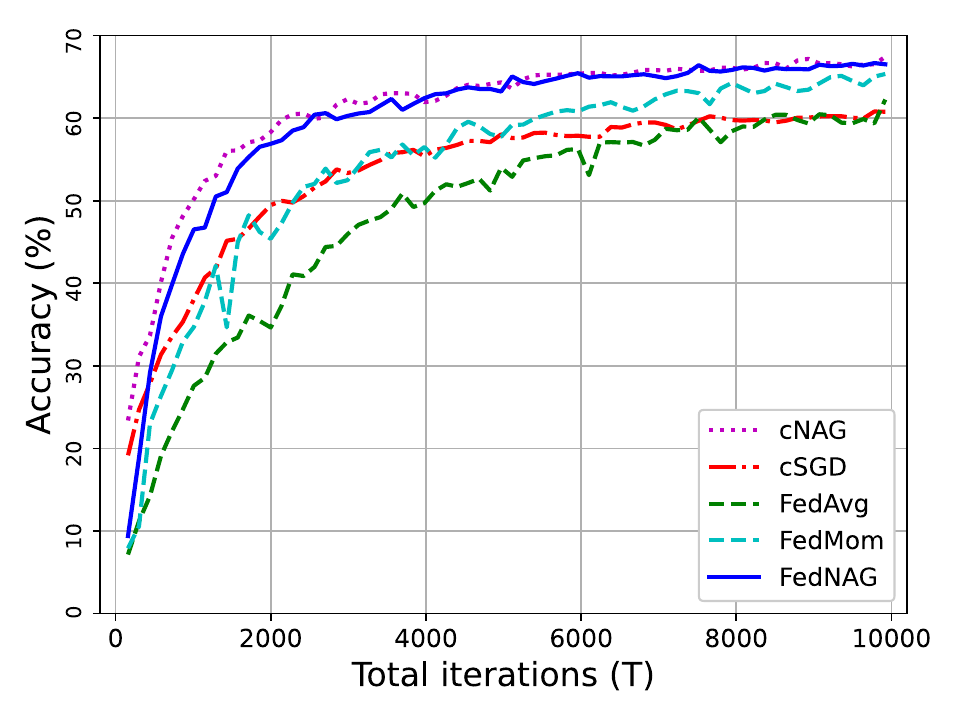}
    \end{minipage}\label{fig:vgg16_cifar100}}
    \caption{Convergence performance with benchmark algorithms}
    \label{fig:alg}
\end{figure*}

\begin{figure*}[htb!]
    \centering
    \subfigure[Effect of aggregation frequency $\tau$]{
    \begin{minipage}[t]{0.24\textwidth}
        \centering
        \includegraphics[width=\textwidth]{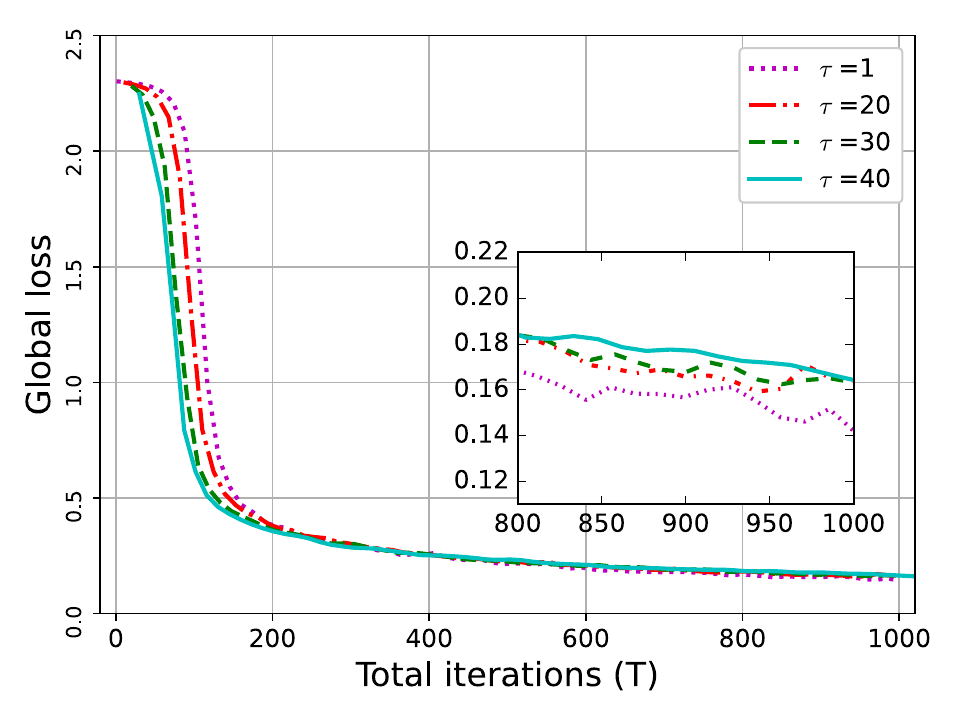}
    \end{minipage}
    \begin{minipage}[t]{0.24\textwidth}
        \centering
        \includegraphics[width=\textwidth]{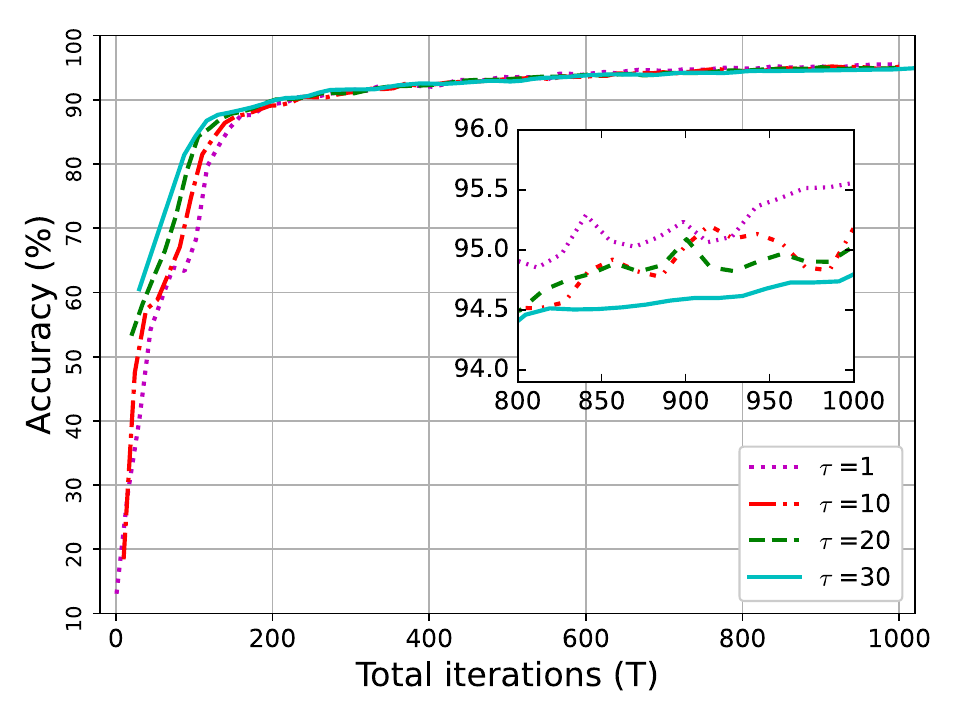}
    \end{minipage}\label{fig:tau}}
    \subfigure[Total iterations when global loss reaches $0.5$ (left) and accuracy reaches $85\%$ (right) for different $\tau$]{
    \begin{minipage}[t]{0.24\textwidth}
        \centering
        \includegraphics[width=\textwidth]{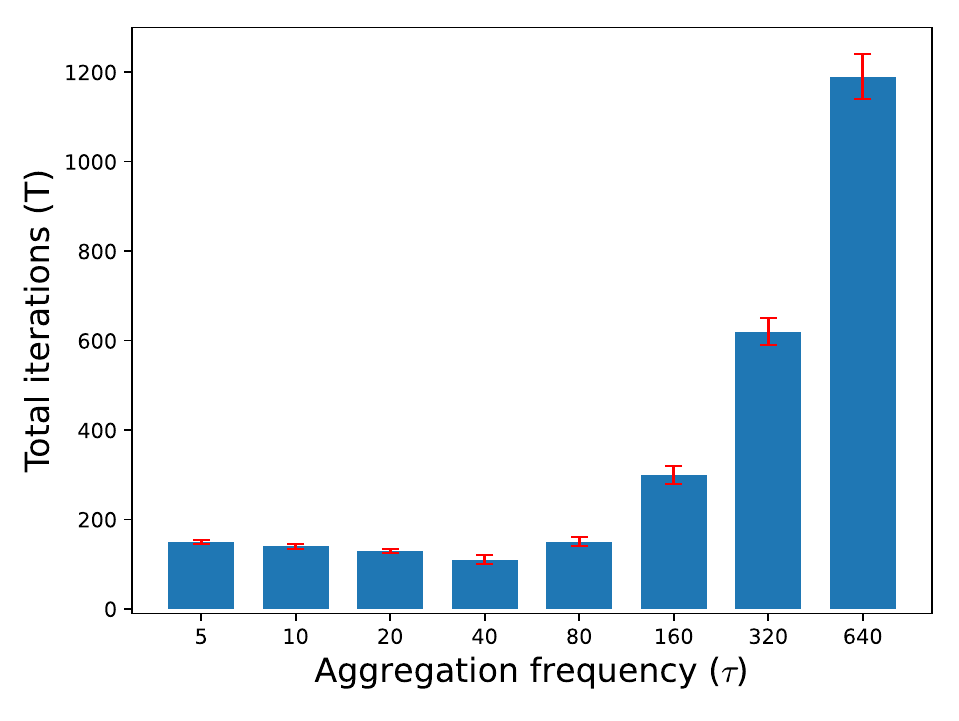}
    \end{minipage}
    \begin{minipage}[t]{0.24\textwidth}
        \centering
        \includegraphics[width=\textwidth]{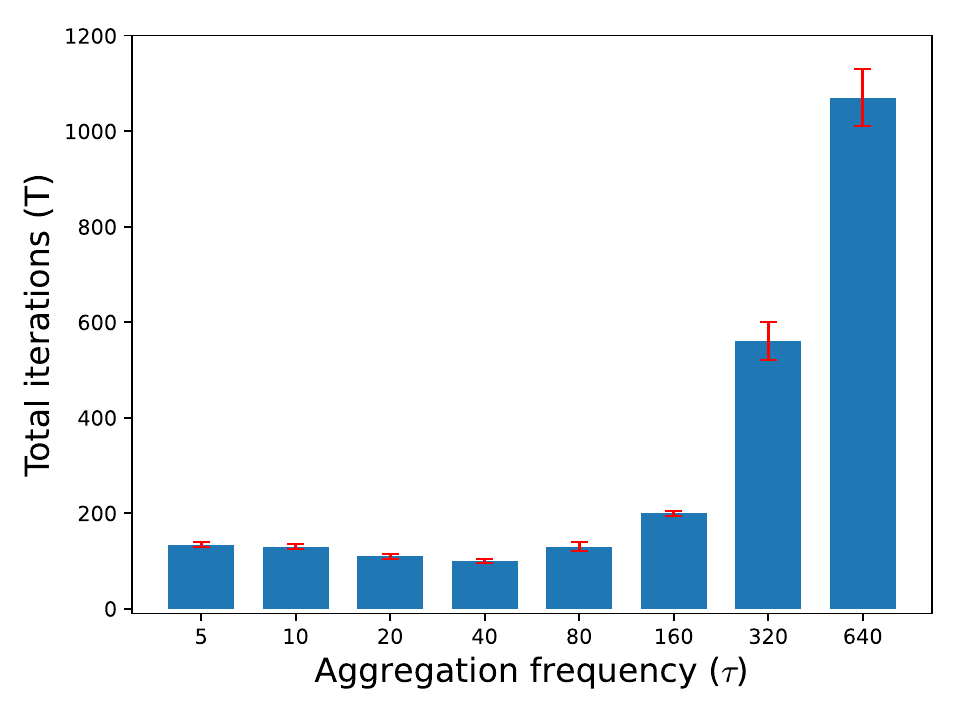}
    \end{minipage}\label{fig:tau_reach}}
    \subfigure[Effect of momentum coefficient $\gamma$]{
    \begin{minipage}[t]{0.24\textwidth}
        \centering
        \includegraphics[width=\textwidth]{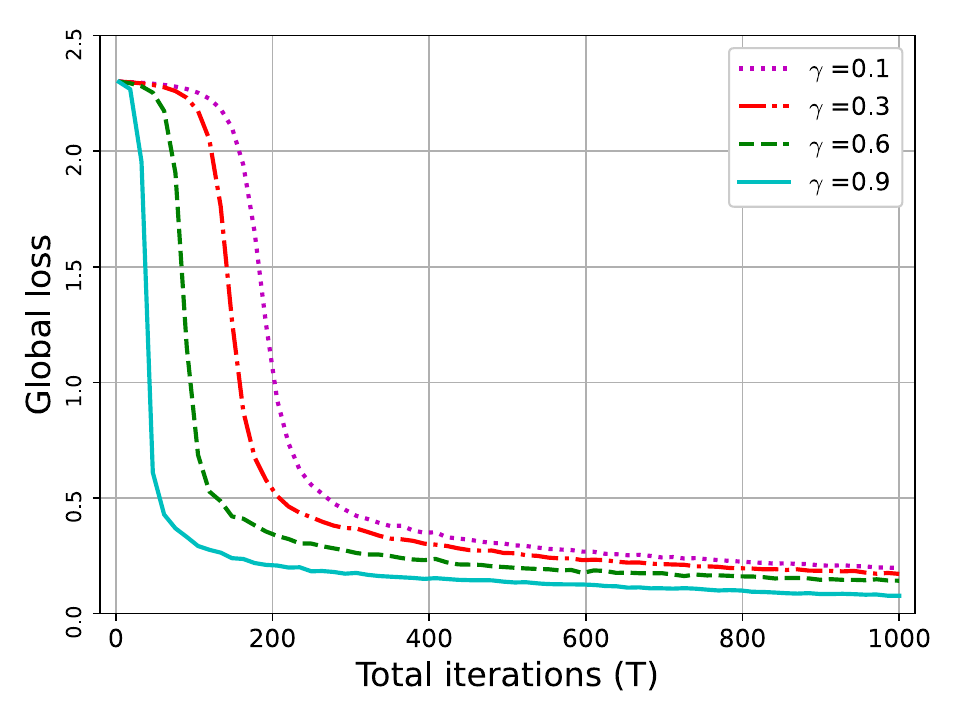}
    \end{minipage}
    \begin{minipage}[t]{0.24\textwidth}
        \centering
        \includegraphics[width=\textwidth]{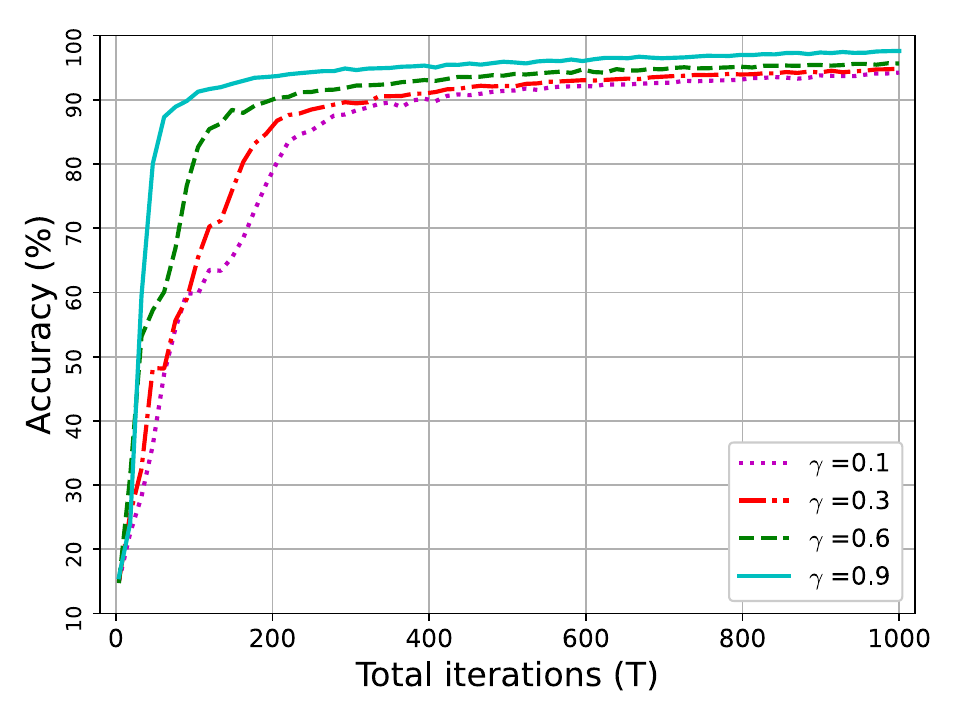}
    \end{minipage}\label{fig:gamma}}
    \subfigure[Global loss when $T$ reaches 500 and 1000 when $0<\gamma<1$]{
    \begin{minipage}[t]{0.24\textwidth}
        \centering
        \includegraphics[width=\textwidth]{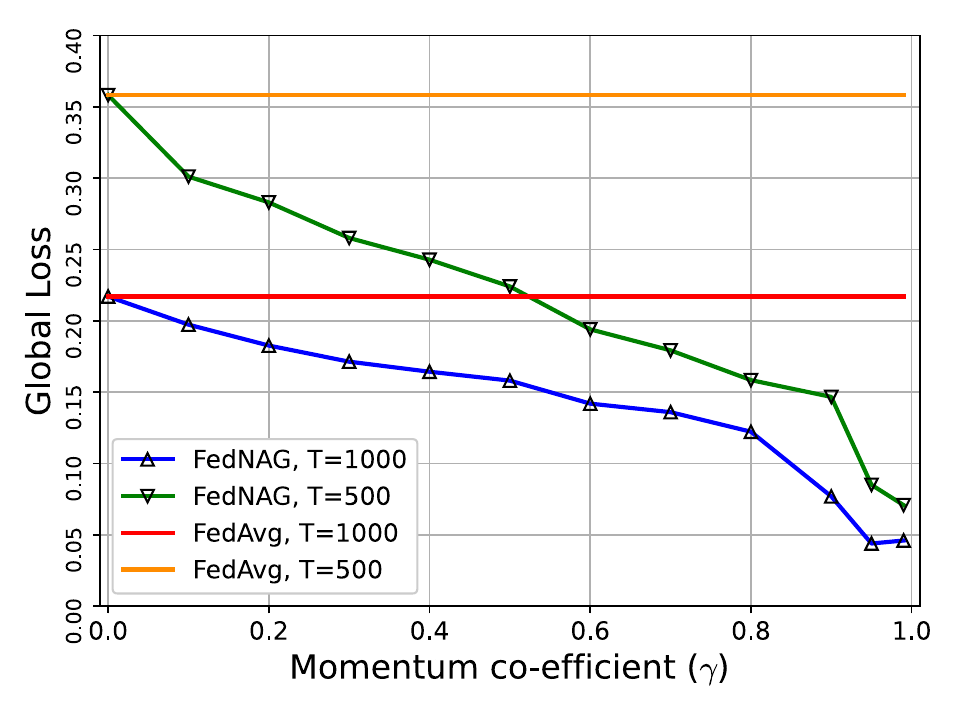}
    \end{minipage}
    \begin{minipage}[t]{0.24\textwidth}
        \centering
        \includegraphics[width=\textwidth]{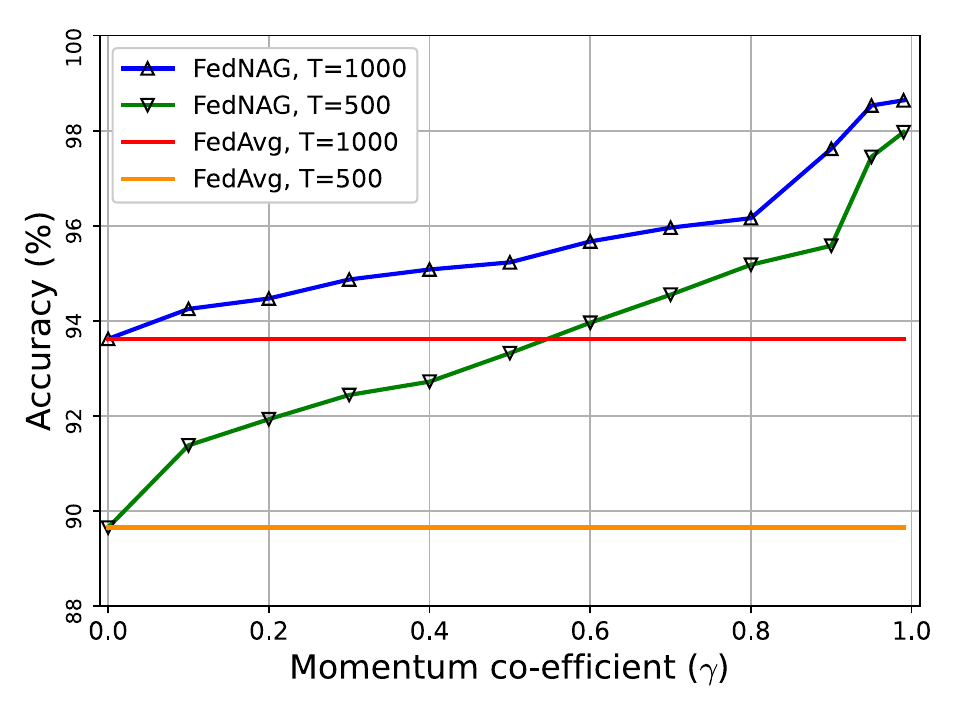}
    \end{minipage}\label{fig:gamma_T}}
    \subfigure[Effect of momentum coefficient $\gamma$ when $\gamma=1$]{
    \begin{minipage}[t]{0.24\textwidth}
        \centering
        \includegraphics[width=\textwidth]{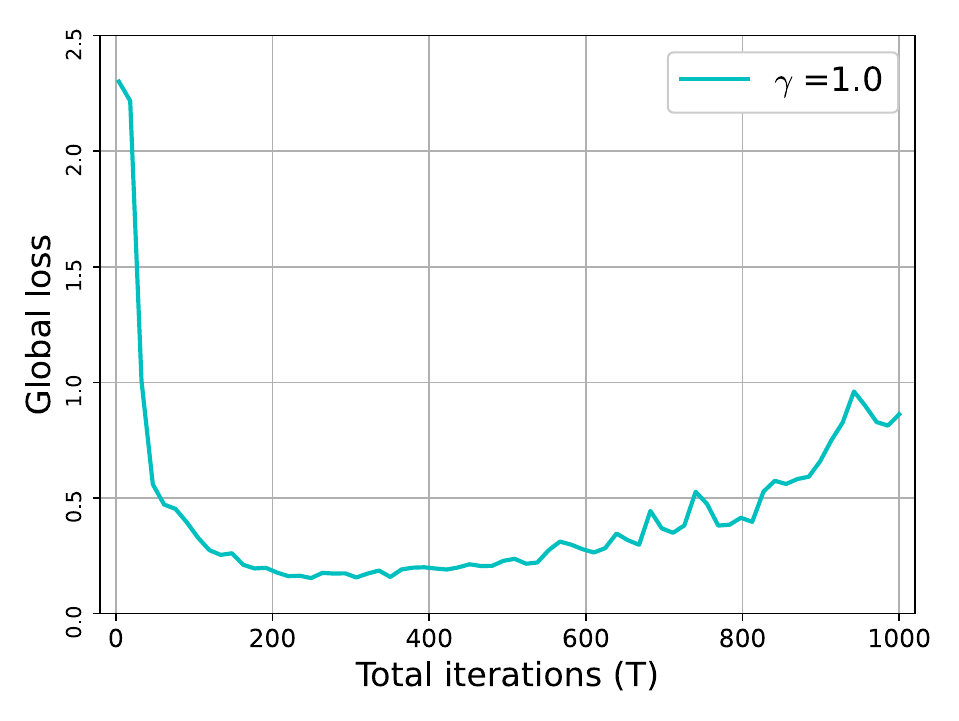}
    \end{minipage}
    \begin{minipage}[t]{0.24\textwidth}
        \centering
        \includegraphics[width=\textwidth]{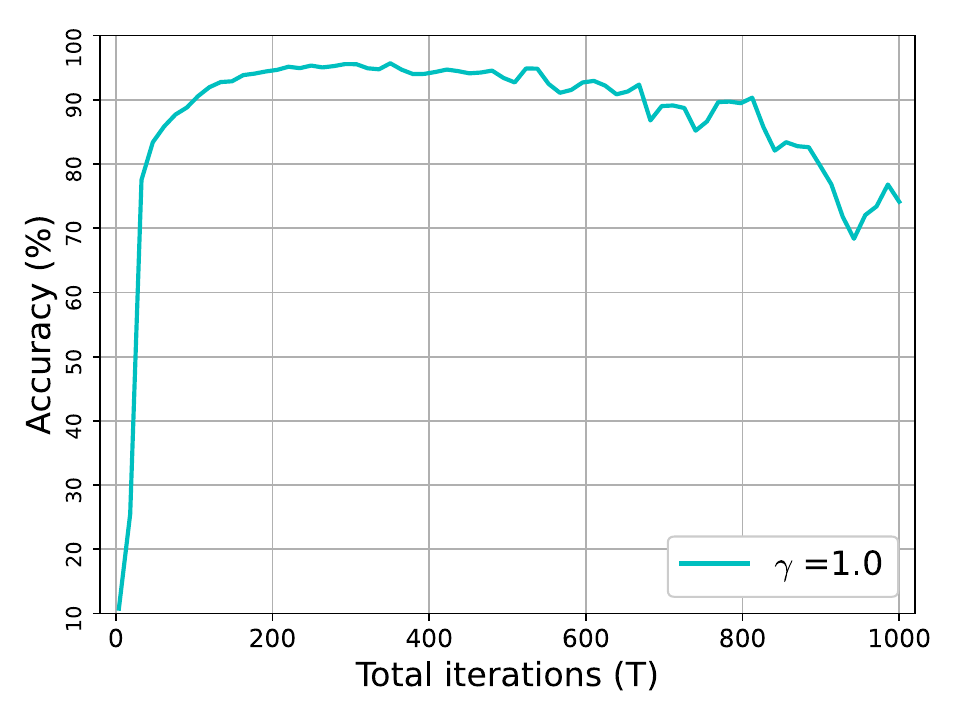}
    \end{minipage}\label{fig:gamma=1}}
    \subfigure[Effect of number of workers $N$]{
    \begin{minipage}[t]{0.24\textwidth}
        \centering
        \includegraphics[width=\textwidth]{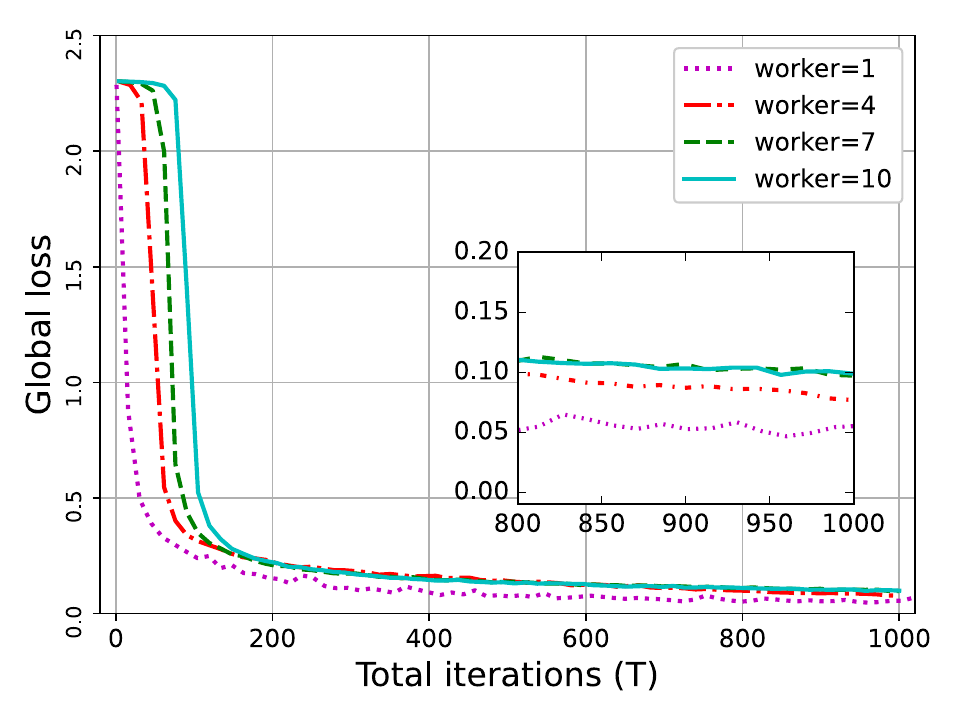}
    \end{minipage}
    \begin{minipage}[t]{0.24\textwidth}
        \centering
        \includegraphics[width=\textwidth]{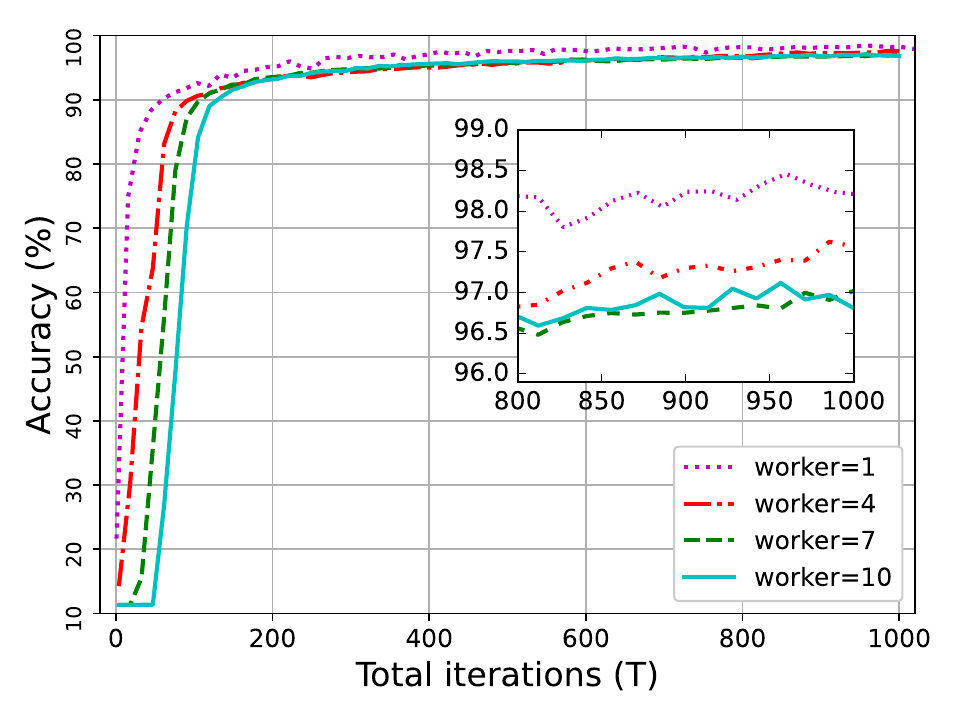}
    \end{minipage}\label{fig:worker}}
    \caption{Effect of $\tau, \gamma$, and $N$ when CNN trained on MNIST}
    \label{fig:compare}
\end{figure*}

\subsection{Experimental Setup}
\label{appendix:exp}
In order to evaluate the convergence performance of FedNAG, we employ three real-world datasets including MNIST, CIFAR-10 and CIFAR-100 for image classification. MNIST \cite{MNIST} contains gray-scale handwritten digits of images with 28 $\times$ 28 pixel. There are 60,000 images for training and 10,000 images for testing. CIFAR-10 \cite{CIFAR} contains 60,000 32x32 colour images in 10 classes, with 6000 images per class while CIFAR-100 \cite{CIFAR} contains 60,000 color images in 100 classes with 600 images per class. Both CIFAR-10 and CIFAR-100 contain 50,000 images for training and 10,000 images for testing. In our experiment, all samples are randomly distributed in each worker. It means that the data is i.i.d. distributed among workers. We will further consider the effects of non-i.i.d. data in Section \ref{sec:exp_non_IID}. We implement FedNAG and other benchmarks using PySyft library \cite{ryffel2018generic} based on the PyTorch framework. PySyft can emulate various virtual workers to process federated learning jobs. The training process is run on a GPU tower server with 4 NVIDIA GeForce RTX 2080Ti GPUs.

We use four models including linear regression model, logistic regression model, CNN model and DNN model. Linear regression  uses  mean  squared  error  loss,  and  logistic regression uses cross-entropy loss. The CNN model's structure is similar to the classic one in \cite{cnnModel}, which has two $5\times5$ convolutional layers with 32 and 64 channels respectively. In each convolutional layer, $2\times2$ max pooling is used. The last two following layers are ReLu activation and softmax. DNN model is VGG16. We use mini-batch in all experiments, and the batch size is 64. 
We set the default learning step size $\eta=0.01$. Other experimental settings are listed in Table ~\ref{tab:experiment_setting}.

\subsection{Performance Evaluation}
\subsubsection{Convergence Performance}
\label{sec:convergence}

In Fig.~\ref{fig:alg}, we compare the convergence performance of FedNAG with other four benchmark algorithms. The experiment is performed on three datasets. MNIST is trained by linear regression, logistic regression and CNN; CIFAR-10 is trained by CNN and DNN (VGG16), and CIFAR-100 is trained by DNN (VGG16). For convex model, we set $\tau=20, \gamma=0.9, N=4$. For non-convex model, we set $\tau=40, \gamma=0.9, N=4$. For MNIST, the total number of iterations $T$ is 1000. For CIFAR-10 and CIFAR-100, $T$ is set to 10000.

Figs.~\ref{fig:linear}, \ref{fig:logistic}, \ref{fig:cnn_mnist}, \ref{fig:cifar10}, \ref{fig:vgg16_cifar10}, and \ref{fig:vgg16_cifar100} show the values of the global loss function and accuracy trained under different models and datasets respectively. As a result, for convex models, we have cNAG $>$ FedNAG $>$ cSGD $>$ FedMom $>$ FedAvg. For non-convex models, we have cNAG $>$ FedNAG $>$ FedMom $>$ cSGD $>$ FedAvg. (We use ``$>$'' to indicate ``is better than'' for convenient presentation.) For centralized approaches, we can see cNAG performs better than cSGD in all cases. For distributed approaches, FedNAG also performs better than FedAvg and FedMom. It confirms that NAG is more advantageous compared with gradient decent for both centralized and FL environment. For cNAG and FedNAG, we can find FedNAG performs worse. This follows our expectation shown in Theorem \ref{theorem:Fwf-Fw*}.  FedNAG performs $\tau$ local updates before a global aggregation, causing less efficient updates and thus decreases the convergence performance. 
For FedNAG and FedMom, we can find FedNAG performs better than FedMom in all cases. For convex models, the gap between FedNAG and FedMom is significant. For non-convex models, FedNAG still performs better than FedMom. It confirms that FedNAG can accelerate the convergence performance for both convex and non-convex tasks, while FedMom only works well for non-convex tasks. 



Another interesting observation is that FedNAG can perform better than cSGD in all cases: The benefits of the momentum method can outweigh the performance loss by federated learning.

\subsubsection{Effects of Global Aggregation Frequency \texorpdfstring{$\tau$}{tau}}
In Fig.~\ref{fig:tau}, we evaluate the impact of $\tau$ based on global loss and accuracy using the same CNN model and MNIST dataset.  The setting for this experiment is $\gamma=0.5, T=1000, N=4$.

From Fig.~\ref{fig:tau}, we can observe when $\tau$ is increased, the convergence performance is reduced. When it converges, loss is larger and accuracy is lower. This matches \textcircled{1} of Theorem \ref{theorem:Fwf-Fw*}. The convergence upper bound increases with $\tau$. In Fig.~\ref{fig:tau_reach}, we observe the impact of $\tau$ in a wider range  $[5,640]$. In Fig.~\ref{fig:tau_reach} (left), we plot the number of iterations when the global loss reaches the target value 0.5. In Fig.~\ref{fig:tau_reach} (right), we plot the number of iterations when the accuracy reaches the target value 85\%. Since the global loss and accuracy may oscillate during the training process, the target global loss and accuracy may be reached several times. The red horizontal lines indicate the first and last iterations when the target values are reached, and the bar indicates the mean of the iterations  when the targets are reached.

The outcome also shows that given a targeted loss or accuracy, the number of iterations  does not monotonically increase or decrease with $\tau$. There is an optimal value $\tau$. This is because smaller $\tau$ leads to slower descent at the beginning (Fig.~\ref{fig:tau} in the main paper), but it converges closer to the optimal value in the end. The two effects cancel with each other and $\tau=40$ performs the best in our setting in Fig.~\ref{fig:tau_reach}. 

The similar phenomenon also appears in other FL algorithms~\cite{wang2019slowmo, wang2019adaptive}. Moreover, If we double $\tau$ when $\tau$ is small, the number of iterations to reach the targets does not change much. However, if we double $\tau$ when $\tau$ is too large (e.g., $\tau\geq 80$), then the number of iterations to reach the targets substantially increases. This matches \textcircled{5} of Theorem \ref{theorem:wt-wkt}, which concludes that larger $\tau$ leads to exponential increase of $h(\cdot)$. Therefore, increasing $\tau$ will  significantly delay the training process when $\tau$ is too large.

\begin{figure*}
    \centering
    \begin{minipage}[t]{0.3\textwidth}
        \centering
        \includegraphics[width=\textwidth]{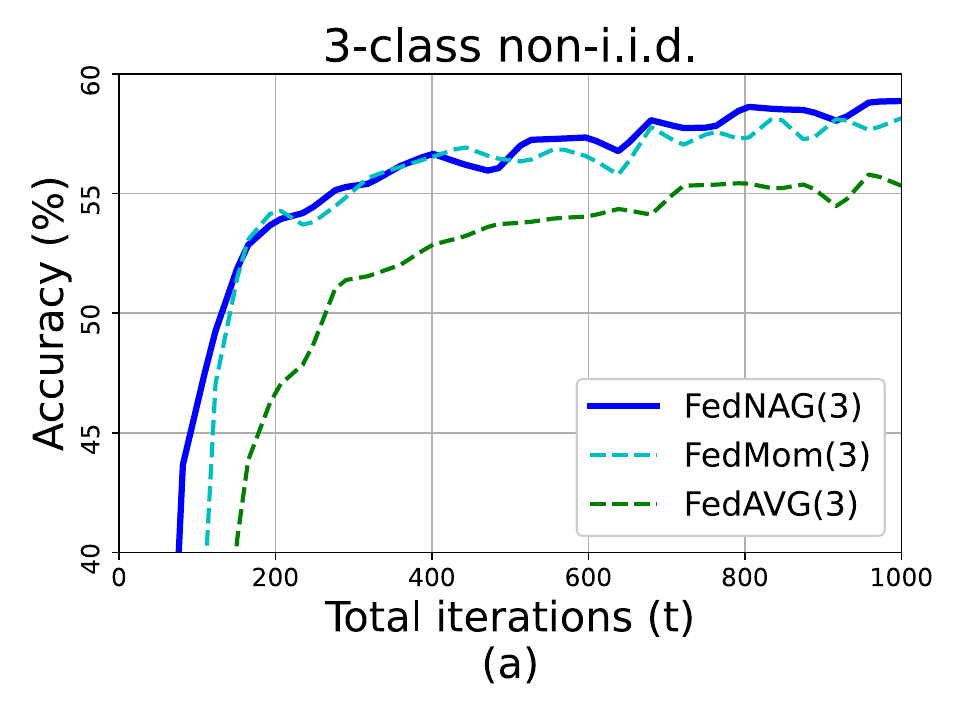}
    \end{minipage}
    \begin{minipage}[t]{0.3\textwidth}
        \centering
        \includegraphics[width=\textwidth]{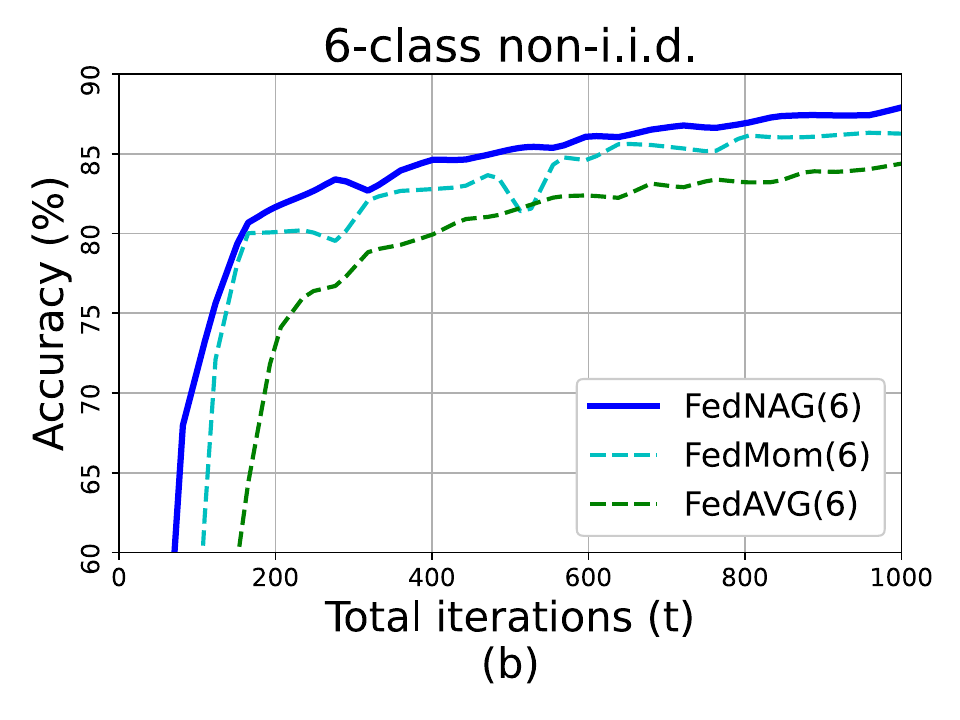}
    \end{minipage}
    \begin{minipage}[t]{0.3\textwidth}
        \centering
        \includegraphics[width=\textwidth]{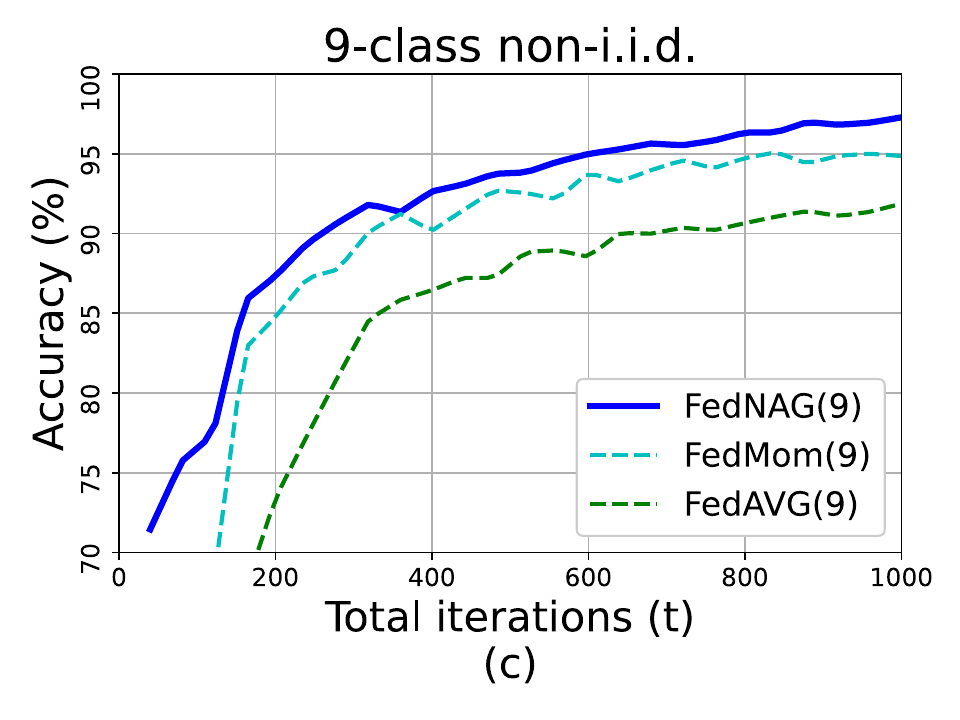}
    \end{minipage}
    \caption{Performance comparison under 3-class (a), 6-class (b), and 9 class (c) non-i.i.d. data distribution.}
    \label{fig:noniid}
\end{figure*}

\begin{figure}
    \centering
    \begin{minipage}[t]{0.24\textwidth}
        \centering
        \includegraphics[width=\textwidth]{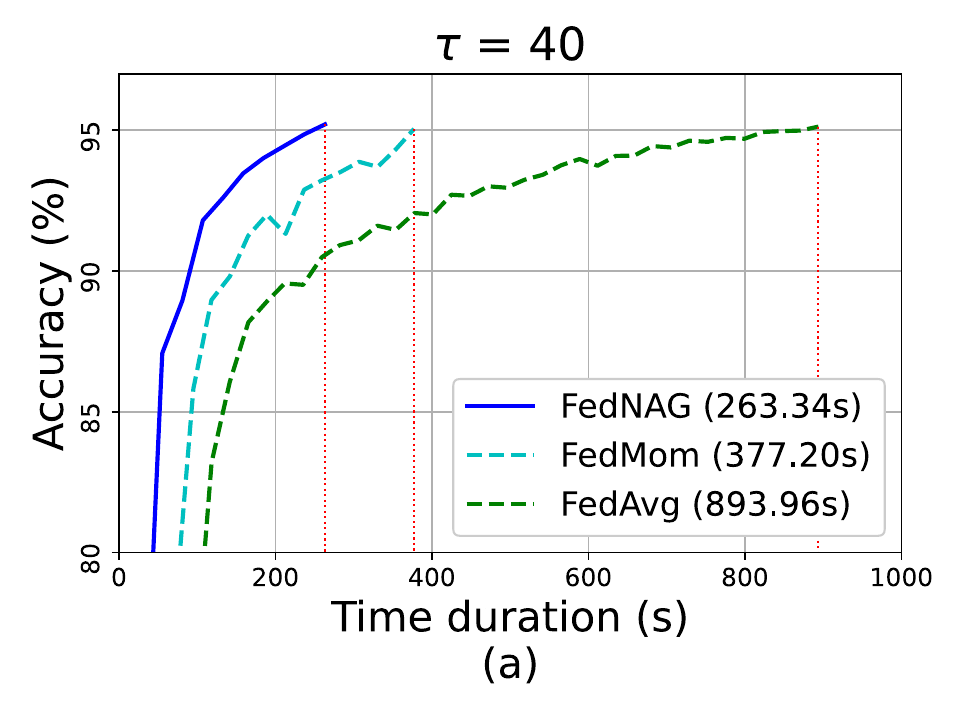}
    \end{minipage}
    \begin{minipage}[t]{0.24\textwidth}
        \centering
        \includegraphics[width=\textwidth]{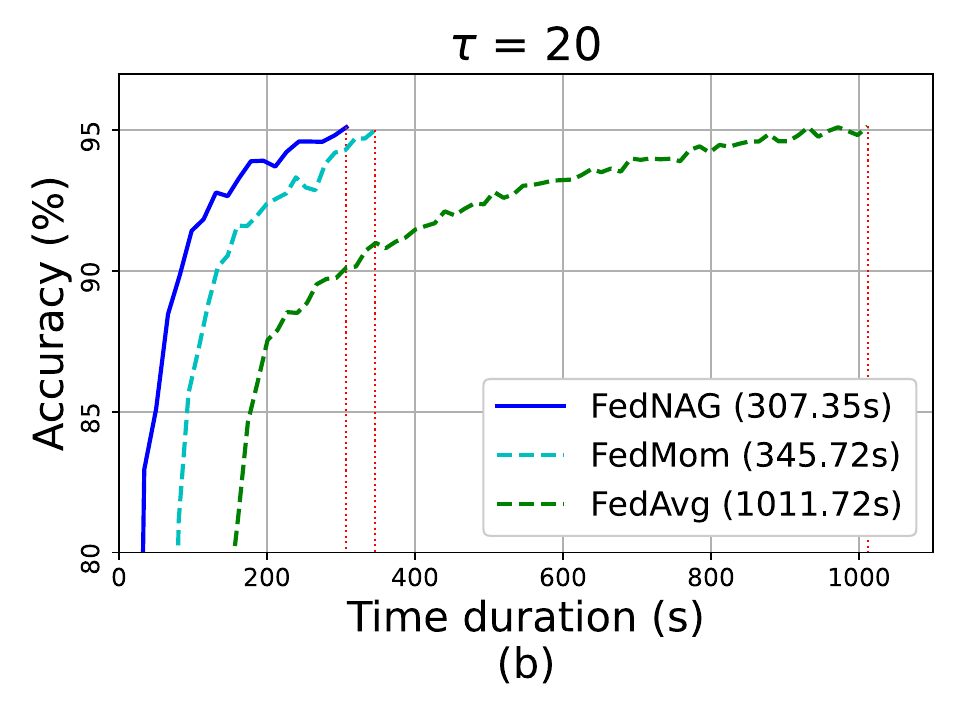}
    \end{minipage}
    \caption{Comparison of total training time to reach  0.95 accuracy for $\tau=40$ (a) or $\tau=20$ (b) when CNN is trained on MNIST. The time to reach 0.95 accuracy is labeled in the legends.}
    \label{fig:trace_driven}
\end{figure}

\subsubsection{Effects of Momentum Coefficient \texorpdfstring{$\gamma$}{gamma}}
In Fig.~\ref{fig:gamma}, we evaluate the effects of $\gamma$. The setting for this experiment is $\tau=4, T=1000, N=4$. We also use the same CNN model trained on the same MNIST dataset. 

Fig.~\ref{fig:gamma} shows the global loss and accuracy under $\gamma=0.1, 0.3, 0.6, 0.9$ respectively. It shows that $\gamma$ can increase the convergence performance (smaller global loss value and higher accuracy). For Fig.~\ref{fig:gamma_T}, we evaluate the global loss at $T=500$ and $T=1000$ respectively, when  $\gamma$ ranges from $[0,0.99]$. Two horizontal lines are the benchmarks where only FedAvg is used. For both $T=500$ and $T=1000$, we can see the global loss decreases when $\gamma$ is getting large. Accuracy is also increased at the same time. However, from Fig.~\ref{fig:gamma=1}, when $\gamma=1$, the global loss cannot converge due to the prerequisite where $0<\gamma<1$ \cite{ruder2016overview}.

\subsubsection{Effects of Number of Workers \texorpdfstring{$N$}{N}}

In Fig.~\ref{fig:worker}, we evaluate the global loss and accuracy based on different number of workers $N$ using the same CNN model and MNIST dataset. The experiment setting is $\tau =4, \gamma=0.9, T=1000$.  
From Fig.~\ref{fig:worker}, we can see that increasing $N$ will cause a decline of convergence performance. This follows our expectation because more workers cause more divergence among the workers and thus decrease convergence performance. However, after a sufficient number of iterations, the global loss and accuracy with more workers will be closer to those with fewer workers. It shows that FedNAG is applicable when there are more workers in the system.

\subsection{Effects of non-i.i.d. data distribution}\label{sec:exp_non_IID}

In Fig.~\ref{fig:noniid}, we evaluate the effects of different levels of non-i.i.d. data distribution. 
We train CNN on MNIST with the setting $\tau=40, \gamma =0.9, N=4, T=1000$. The curves show the training accuracy. To quantify the level of non-i.i.d. data distribution, we explicitly assign only $x<10$ out of 10 classes of data for each worker. (Each worker has data samples from a subset of classes.) Smaller \textit{x} represents higher level of non-i.i.d. setting. We use \textit{3-class non-i.i.d.}, \textit{6-class non-i.i.d.}, and \textit{9-class non-i.i.d.} to represent high, middle and low level of non-i.i.d. data respectively.

In all cases, We can observe FedNAG $>$ FedMom $>$ FedAvg. This shows that FedNAG outperforms benchmarks under any levels of non-i.i.d. data distribution.
We also observe higher level of non-i.i.d. setting decreases convergence performance for all algorithms. Specifically, FedNAG achieves 58.87\% accuracy for high level non-i.i.d. data, while achieving 87.90\% accuracy and 97.28\% accuracy for middle and low level non-i.i.d. data respectively. This matches our expectations in Observation \textcircled{6} in Theorem \ref{theorem:wt-wkt} where higher level of non-i.i.d. setting causes more data divergence that is denoted by larger $\delta$, and therefore lowers the accuracy.

\subsection{Trace-Driven Simulation}

We emulate the real-world edge-computing environment and compare the total training time with an expected learning accuracy (0.95) on FedNAG and other benchmarks when CNN in trained on MNIST.  We train the model in the GPU tower server and keep the trace of the sequence of iterations. We then use real-world devices as workers (one laptop with Intel Core i3 M380 CPU, three Android phones: Nubia z17s with Qualcomm Snapdragon 835 CPU, Realme GT Neo with MTK Dimensity 1200 CPU, Redmi K30 Ultra with MTK Dimensity 1000+ CPU) to sample worker computing delays. The aggregator computing delay is sampled on GPU tower server. All devices are connected to HUAWEI honor router X2+ with 5GHz WIFI to sample communication delays which depend on the communication workload of transmitting the models/momenta. We use the trace of the sequence of iterations and the sampled delays to figure out the overall training time as if the training process is conducted by the GPU server (aggregator) and the four devices. 

Please note such approach to use a digital representation of physical objects to conduct experiment is widely used in distributed systems, IoT, and machine learning applications \cite{teng2021recent,kirchhof2021understanding}. It can generate the convincing system performance evaluation without deploying physical devices.

In Fig.~\ref{fig:trace_driven}, we observe that under two different settings ($\tau=40$ or $\tau=20$, $N=4$), to reach the accuracy 0.95, FedNAG spends 263.34s ($\tau=40$) and 307.35s ($\tau=20$) while FedMom spends 377.20s ($\tau=40$) and 345.72s ($\tau=20$), and FedAvg spends 893.96s ($\tau=40$) and 1011.72s ($\tau=20$). This demonstrates that FedNAG is efficient and decreases the total training time by 11--70\% compared with FedMom and FedAvg.

\section{Conclusion} \label{sec:conclusion}

In this paper, we focus on  FedNAG, a NAG style momentum-based FL algorithm. FedNAG allows each worker to update its weights and momenta by its local dataset for a number of local iterations between two global aggregations. On the global aggregation step, the aggregator collects and averages the weights and momenta from all workers and distributes them to the workers. 
The convergence analysis shows the upper bound of the gap between the global loss function derived by FedNAG at iteration $T$ and the optimal solution. We compare FedNAG and FedAvg, and conclude that as long as the learning step size is sufficiently small, FedNAG outperforms FedAvg. 
Experiments based on real-world datasets and trace-driven simulation are conducted, demonstrating that FedNAG increases the learning accuracy by 3--24\% and decreases the total training time by 11--70\% compared with the benchmarks.


\ifCLASSOPTIONcaptionsoff
  \newpage
\fi

\bibliographystyle{IEEEtran}
\bibliography{main}

\newpage
\appendices
\section{FedNAG vs. Centralized NAG (Observation \texorpdfstring{\textcircled{4}}{4} in Theorem \ref{theorem:wt-wkt})} \label{appendixA}
\begin{prop} \label{prop1}
When $\tau = 1$, FedNAG is equivalent to centralized NAG. The update rules of FedNAG yield as follows:
\begin{align*}
    \mathbf{v}(t) &= \gamma\mathbf{v}(t-1) - \eta\nabla F(\mathbf{w}(t-1)),\\
        \mathbf{w}(t) 
        & = \mathbf{w}(t-1) - \gamma\mathbf{v}(t-1) + (1+\gamma)\mathbf{v}(t) \\
        & = \mathbf{w}(t-1) + \gamma\mathbf{v}(t) - \eta\nabla F(\mathbf{w}(t-1)).
\end{align*}
\end{prop}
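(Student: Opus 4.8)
The plan is to argue by induction on $t$, showing that when $\tau=1$ every worker stays synchronized with the aggregated global parameters, so that the $D_i$-weighted average of the local NAG updates collapses exactly into a single centralized NAG update on $F$. Since $\tau=1$ makes $t=k\tau$ hold for every $t$, a global aggregation is performed after every local update; by the reset step $\mathbf{v}_i(t)\gets\mathbf{v}(t)$, $\mathbf{w}_i(t)\gets\mathbf{w}(t)$, all workers enter the next iteration holding the common values $\mathbf{w}(t)$ and $\mathbf{v}(t)$. The base case is the initialization: by assumption $\mathbf{w}_i(0)$ is identical across $i$ and equal to $\mathbf{w}(0)$, and $\mathbf{v}_i(0)=\mathbf{0}=\mathbf{v}(0)$.

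For the inductive step I assume that at the end of iteration $t-1$ all workers share $\mathbf{w}_i(t-1)=\mathbf{w}(t-1)$ and $\mathbf{v}_i(t-1)=\mathbf{v}(t-1)$. I then substitute this common value into the local momentum update \eqref{eq:localVi} and take the weighted average \eqref{eq:globalV}. The crucial observation is that the only term depending on $i$ is the local gradient, and it is evaluated at the common point $\mathbf{w}(t-1)$; hence $\frac{1}{D}\sum_i D_i \nabla F_i(\mathbf{w}(t-1))=\nabla F(\mathbf{w}(t-1))$ by the definition of $F$, while the linear term $\gamma\mathbf{v}(t-1)$ passes through the average unchanged. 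This yields $\mathbf{v}(t)=\gamma\mathbf{v}(t-1)-\eta\nabla F(\mathbf{w}(t-1))$. Averaging the weight update \eqref{eq:localWi} in the same manner, using linearity of \eqref{eq:globalW} together with the already-established identity $\mathbf{v}(t)=\frac{1}{D}\sum_i D_i\mathbf{v}_i(t)$, gives $\mathbf{w}(t)=\mathbf{w}(t-1)+\gamma\mathbf{v}(t)-\eta\nabla F(\mathbf{w}(t-1))$. The equivalence of the two stated forms of the $\mathbf{w}(t)$ recursion then follows by substituting the $\mathbf{v}(t)$ recursion, exactly as the two forms of \eqref{eq:localWi} coincide at the worker level.

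I do not expect a genuine obstacle, since every operation is affine and the aggregation is linear; the entire content of the proposition reduces to the single identity $\frac{1}{D}\sum_i D_i \nabla F_i(\mathbf{w}(t-1))=\nabla F(\mathbf{w}(t-1))$, which holds \emph{only} because synchronization forces every worker to evaluate its gradient at the same point. The one point I would state explicitly is why this identity breaks for $\tau>1$: once workers take two or more local steps without aggregating, the $\mathbf{w}_i$ drift apart, and $\frac{1}{D}\sum_i D_i \nabla F_i(\mathbf{w}_i(t-1))$ no longer equals $\nabla F$ evaluated at the average, which is precisely the origin of the gap $h(\cdot)$ bounded in Theorem~\ref{theorem:wt-wkt}. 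Making this contrast explicit confirms that $\tau=1$ is the exact threshold at which FedNAG and centralized NAG coincide.
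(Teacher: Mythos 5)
Your proposal is correct and follows essentially the same route as the paper's proof: the synchronization at every iteration (which you make explicit via induction, while the paper simply asserts $\mathbf{v}_i(t)=\mathbf{v}(t)$ and $\mathbf{w}_i(t)=\mathbf{w}(t)$ for all $t$) reduces everything to the linearity identity $\frac{1}{D}\sum_i D_i\nabla F_i(\mathbf{w})=\nabla F(\mathbf{w})$. Your added remark on why the identity fails for $\tau>1$ is a nice touch but not part of the paper's argument.
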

\begin{proof}
When $\tau = 1$, we have $\mathbf{v}_i(t) = \mathbf{v}(t)$ and $\mathbf{w}_i(t) = \mathbf{w}(t)$ for all $t$. Thus,
\begin{align*}
    \mathbf{v}(t)
    & = \frac{\sum_{i=1}^N D_i \mathbf{v}_i(t)}{D}& \\
    & = \frac{\sum_{i=1}^N D_i(\gamma\mathbf{v}_i(t-1) - \eta\nabla F_i(\mathbf{w}_i(t-1)))}{D}& \\
    & = \gamma\mathbf{v}(t-1) - \eta\frac{\sum_{i=1}^N D_i \nabla F_i(\mathbf{w}(t-1))}{D}& \\
    & = \gamma\mathbf{v}(t-1) - \eta\nabla F(\mathbf{w}(t-1)),
\end{align*}
where the last term in the last equality is because
\begin{displaymath}
    \frac{\sum_{i=1}^N D_i \nabla F_i(\mathbf{w})}{D} = \nabla \left(\frac{\sum_{i=1}^N D_i F_i(\mathbf{w})}{D}\right) = \nabla F(\mathbf{w})
\end{displaymath}
based on the linearity of the gradient operator. Then,
\begin{align*}
    \mathbf{w}(t) 
    & = \frac{\sum_{i=1}^N D_i\mathbf{w}_i(t)}{D}& \\
    & = \frac{\sum_{i=1}^N D_i(\mathbf{w}_i(t-1)-\gamma\mathbf{v}_i(t-1)+(1+\gamma)\mathbf{v}_i(t))}{D}& \\
    & = \mathbf{w}(t-1)-\gamma\mathbf{v}(t-1)+(1+\gamma)\mathbf{v}(t).
\end{align*}
Therefore, Proposition \ref{prop1} has been proven.
\end{proof}
\section{Proof of Theorem \ref{theorem:wt-wkt}} \label{appendixB}
To prove Theorem \ref{theorem:wt-wkt}, the progress mainly includes four steps. (1) We first introduce an important equality in Lemma \ref{lemma:sequence}, which will be used later. (2) We bound $\Vert \mathbf{w}_i(t) - \mathbf{w}_{[k]}(t) \Vert$ in Lemma \ref{lemma:w_it-w_kt} based on Lemma \ref{lemma:sequence}. (3) Based on the result of Lemma \ref{lemma:w_it-w_kt}, we then bound $\Vert \mathbf{v}(t) - \mathbf{v}_{[k]}(t) \Vert$ in Lemma \ref{lemma:vt-vkt}. (4) Finally, based on the result of Lemma \ref{lemma:vt-vkt}, we bound $\Vert \mathbf{w}(t) - \mathbf{w}_{[k]}(t) \Vert$, which concludes Theorem \ref{theorem:wt-wkt}.

\begin{lemma} \label{lemma:sequence}
Given
\begin{align}
&a_{t}=\frac{\delta_{i}}{\beta}\left(\frac{\frac{1+\eta\beta+\eta\beta\gamma}{\gamma}-B}{A-B} A^{t}-\frac{\frac{1+\eta\beta+\eta\beta\gamma}{\gamma}-A}{A-B} B^{t}\right),\\
&A+B=\frac{1+\eta\beta+\eta\beta\gamma+\gamma}{\gamma}=\frac{(1+\eta\beta)(1+\gamma)}{\gamma},\\
&AB=\frac{1+\eta\beta}{\gamma},
\end{align}
where $t=0,1,2,...,0<\gamma<1,\eta\beta>0$, we have
\begin{align}
    (1+\eta\beta)a_{t-1}+\eta\beta\gamma\sum_{i=0}^{t-1} a_i = \gamma a_t.
\end{align}
\begin{proof} [Proof of Lemma \ref{lemma:sequence}]
For convenience, we define
\begin{align*}
     C& \triangleq \frac{\frac{1+\eta\beta+\eta\beta\gamma}{\gamma}-B}{A-B}=\frac{A-1}{A-B},\\
     D& \triangleq \frac{A-\frac{1+\eta\beta+\eta\beta\gamma}{\gamma}}{A-B}=\frac{1-B}{A-B}.
\end{align*}
Therefore,
\begin{align*}
    a_t=\frac{\delta_i}{\beta}(CA^t+DB^t).
\end{align*}
According to the inverse theorem of Vieta's formulas, we have
\begin{align} \label{eq:quadratic}
    \gamma x^2-(1+\eta\beta+\eta\beta\gamma+\gamma)x+\eta\beta+1=0,
\end{align}
where $x$ values are the roots of quadratic equation.
Here, the discriminant of the quadratic equation is positive.
\begin{align*} 
\Delta&=(1+\eta\beta+\eta\beta\gamma+\gamma)^2-4(1+\eta\beta)\gamma\\
&>(1+\eta\beta+\gamma)^2-4(1+\eta\beta)\gamma\\
&=((1+\eta\beta)-\gamma)^2 >0.
\end{align*}
Thus, $A$ and $B$ (roots) can be expressed as follows:
\begin{align}
&A=\frac{(1+\eta\beta)(1+\gamma)+\sqrt{(1+\eta\beta)^2(1+\gamma)^{2}-4\gamma(1+\eta\beta)}}{2\gamma},\\
&B=\frac{(1+\eta\beta)(1+\gamma)-\sqrt{(1+\eta\beta)^2(1+\gamma)^{2}-4\gamma(1+\eta\beta)}}{2\gamma}.
\end{align}
Then we have
\begin{flalign*}
&(1+\eta\beta) a_{t-1}+\eta \beta \gamma \sum_{i=0}^{t-1} a_{i} - \gamma a_{t}&\\
=&(1+\eta\beta)\frac{\delta_{i}}{\beta}\left(C A^{t-1}+D B^{t-1}\right)+\eta \beta \gamma  \frac{\delta_{i}}{\beta} C \frac{A^{t}-1}{A-1}&\\
&+\eta \beta \gamma \frac{\delta_{i}}{\beta} D \frac{B^{t}-1}{B-1}-\gamma \frac{\delta_{i}}{\beta} C A^{t}-\gamma \frac{\delta_{i}}{\beta} D B^{t}&\\
=&\frac{\delta_{i}}{\beta}\left[\frac{A^{t-1} C}{1-A}\left(\gamma A^{2}-(1+\eta \beta+ \eta \beta \gamma+ \gamma) A+1+\eta \beta \right)\right.&\\
&\left.+\frac{B^{t-1} D}{1-B}\left(\gamma B^{2}-(1+\eta \beta+ \eta \beta \gamma+ \gamma) B+1+\eta \beta \right)\right]&\\
&-\frac{\delta_{i}}{\beta}\eta \beta\gamma\left(\frac{C}{A-1}+\frac{D}{B-1}\right)&\\
=&0-\eta \delta_{i}\gamma\left(\frac{C}{A-1}+\frac{D}{B-1}\right)&\\
=&0.\\
&\rightline{\text{(because $A$, $B$ satisfy \eqref{eq:quadratic})}}.
\end{flalign*}
Thus, Lemma \ref{lemma:sequence} has been proven.
\end{proof}
\end{lemma}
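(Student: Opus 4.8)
The plan is to verify the identity by direct substitution of the closed form for $a_t$, exploiting the fact that $A$ and $B$ are the two roots of an associated characteristic quadratic. First I would rewrite the given expression in the compact form $a_t = \frac{\delta_i}{\beta}\left(C A^t + D B^t\right)$, absorbing the coefficients $\frac{(1+\eta\beta+\eta\beta\gamma)/\gamma - B}{A-B}$ and $-\frac{(1+\eta\beta+\eta\beta\gamma)/\gamma - A}{A-B}$ into constants $C$ and $D$. The point of this step is purely notational: it isolates the two geometric modes $A^t$ and $B^t$ so that the recurrence structure becomes transparent.

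Next I would extract the characteristic quadratic. The hypotheses supply $A+B = (1+\eta\beta)(1+\gamma)/\gamma$ and $AB = (1+\eta\beta)/\gamma$, so by Vieta's formulas $A$ and $B$ are exactly the roots of $\gamma x^2 - (1+\eta\beta+\eta\beta\gamma+\gamma)x + (1+\eta\beta) = 0$. This is the crux: I would keep this quadratic in mind as the object that makes the whole computation collapse. I would also record two facts needed for the geometric sums, namely $A \ne B$ (the discriminant is strictly positive, which follows from $0<\gamma<1$ and $\eta\beta>0$) and $A,B \ne 1$ (substituting $x=1$ into the quadratic yields $-\eta\beta\gamma \ne 0$), so that $\sum_{i=0}^{t-1} A^i = \frac{A^t-1}{A-1}$ and its $B$-analogue are legitimate.

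Then I would evaluate the left-hand side minus the right-hand side, $(1+\eta\beta)a_{t-1} + \eta\beta\gamma\sum_{i=0}^{t-1}a_i - \gamma a_t$, by substituting the compact form together with the geometric-sum formulas, and regroup the result by powers $A^t$ and $B^t$. The coefficient multiplying each mode turns out to be proportional to $\gamma A^2 - (1+\eta\beta+\eta\beta\gamma+\gamma)A + (1+\eta\beta)$ (respectively with $B$), which is precisely the characteristic quadratic evaluated at a root, hence zero. What remains are the constant terms produced by the $-1$ in each geometric sum; I would check that these cancel against one another using the explicit forms of $C$ and $D$.

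The main obstacle I anticipate is bookkeeping rather than conceptual: organizing the substitution so that the characteristic-polynomial factor cleanly emerges as the coefficient of each geometric mode, and confirming that the leftover constant terms genuinely vanish. As a sanity check on the structure (and an alternative route), I note that differencing the claimed identity at $t$ and $t-1$ eliminates the partial sum and produces the second-order recurrence $\gamma a_t = (1+\eta\beta+\eta\beta\gamma+\gamma)a_{t-1} - (1+\eta\beta)a_{t-2}$, whose characteristic equation is exactly the quadratic above; any sequence $a_t = CA^t + DB^t$ satisfies it automatically, so one could instead prove the identity by verifying this recurrence together with a single base case such as $t=1$ to fix the additive constant lost under differencing.
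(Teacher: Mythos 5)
Your proposal follows essentially the same route as the paper's proof: rewrite $a_t=\tfrac{\delta_i}{\beta}(CA^t+DB^t)$, identify $A,B$ as the roots of $\gamma x^2-(1+\eta\beta+\eta\beta\gamma+\gamma)x+(1+\eta\beta)=0$ via Vieta, substitute with geometric sums, and observe that the coefficient of each mode is the characteristic quadratic at a root (hence zero) while the leftover constants cancel through $\tfrac{C}{A-1}+\tfrac{D}{B-1}=0$. Your added checks that $A\neq B$ and $A,B\neq 1$ (so the geometric-sum formulas are legitimate) are small rigor improvements the paper leaves implicit, and the differencing remark is a valid alternative, but the core argument is identical.
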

\subsection{Bounding \texorpdfstring{$\Vert\mathbf{w}_i(t)-\mathbf{w}_{[k]}(t) \Vert$}{||wi(t)-w[k](t)||}}
To prove Lemma \ref{lemma:w_it-w_kt}, the progress mainly includes two steps. (1) We first bound the gap of  $\Vert\mathbf{v}_i(t)-\mathbf{v}_{[k]}(t) \Vert$. (2) Then we bound the gap of $\Vert\mathbf{w}_i(t)-\mathbf{w}_{[k]}(t) \Vert$, which concludes Lemma \ref{lemma:w_it-w_kt}.
\begin{lemma} \label{lemma:w_it-w_kt}
For any interval $[k]$, $\forall t \in [(k-1)\tau,k\tau]$, we have
\begin{align}
    \Vert\mathbf{w}_i(t)-\mathbf{w}_{[k]}(t) \Vert \leq f_i(t-(k-1)\tau),
\end{align}
where we define the function $f_i(x)$ as
\begin{align}
    f_i(x) \triangleq \frac{\delta_i}{\beta}(\gamma^x(CA^x+DB^x)-1).
\end{align}
\end{lemma}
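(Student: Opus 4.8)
The plan is to reduce the lemma to a coupled pair of scalar recursions for the two norm gaps and then identify the claimed $f_i$ as (an upper bound for) their solution, with Lemma~\ref{lemma:sequence} supplying the decisive algebraic identity. Throughout I write $x=t-(k-1)\tau$ and abbreviate $\bar v(x)\triangleq\Vert\mathbf{v}_i(t)-\mathbf{v}_{[k]}(t)\Vert$ and $\bar w(x)\triangleq\Vert\mathbf{w}_i(t)-\mathbf{w}_{[k]}(t)\Vert$. Since the aggregation at $(k-1)\tau$ resets $\mathbf{v}_i=\mathbf{v}=\mathbf{v}_{[k]}$ and $\mathbf{w}_i=\mathbf{w}=\mathbf{w}_{[k]}$, the base case is $\bar v(0)=\bar w(0)=0$, consistent with $f_i(0)=\frac{\delta_i}{\beta}(C+D-1)=0$ because $C+D=1$.

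First I would subtract the virtual updates \eqref{eq:vkt=}--\eqref{eq:wkt=} from the local updates \eqref{eq:localVi}--\eqref{eq:localWi} and take norms. The only nontrivial term is the gradient difference, which I split as $\nabla F_i(\mathbf{w}_i(t-1))-\nabla F(\mathbf{w}_{[k]}(t-1))=\big[\nabla F_i(\mathbf{w}_i(t-1))-\nabla F(\mathbf{w}_i(t-1))\big]+\big[\nabla F(\mathbf{w}_i(t-1))-\nabla F(\mathbf{w}_{[k]}(t-1))\big]$ and bound by $\delta_i+\beta\bar w(x-1)$ using the divergence definition \eqref{definition:GD} and $\beta$-smoothness of $F$. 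This produces
\begin{align}
\bar v(x)&\le\gamma\bar v(x-1)+\eta\beta\bar w(x-1)+\eta\delta_i,\\
\bar w(x)&\le(1+\eta\beta)\bar w(x-1)+\gamma\bar v(x)+\eta\delta_i,
\end{align}
which realizes exactly the paper's two-step program: first bound $\bar v$, then use it to bound $\bar w$.

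For the first step I would unroll the $\bar v$-recursion from $\bar v(0)=0$ to obtain $\bar v(x)\le\eta\delta_i\sum_{s=0}^{x-1}\gamma^{s}+\eta\beta\sum_{s=0}^{x-1}\gamma^{x-1-s}\bar w(s)$, expressing the momentum gap through the history of the weight gap. For the second step I substitute this into the $\bar w$-recursion and prove $\bar w(x)\le f_i(x)$ by induction. The pivotal identity is $f_i(x)+\tfrac{\delta_i}{\beta}=\gamma^{x}(CA^{x}+DB^{x})=\gamma^{x}a_x$, so the inductive bound $\bar w(s)\le f_i(s)$ makes the geometric-weighted history collapse: $\sum_{s=0}^{x-1}\gamma^{x-1-s}\bar w(s)\le\gamma^{x-1}\sum_{s=0}^{x-1}a_s-\tfrac{\delta_i}{\beta}\sum_{s=0}^{x-1}\gamma^{x-1-s}$, in which the $\gamma^{x-1-s}\gamma^{s}=\gamma^{x-1}$ cancellation turns the weighted sum into the \emph{equally weighted} sum $\sum_{s=0}^{x-1}a_s$ appearing in Lemma~\ref{lemma:sequence}. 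The identity $(1+\eta\beta)a_{x-1}+\eta\beta\gamma\sum_{s=0}^{x-1}a_s=\gamma a_x$ then folds all accumulated terms back into the closed form $CA^{x}+DB^{x}$, closing the induction. Equivalently, one can eliminate $\bar v$ to get the second-order recursion $\bar w(x)=(1+\eta\beta)(1+\gamma)\bar w(x-1)-\gamma(1+\eta\beta)\bar w(x-2)+\eta\delta_i$, whose characteristic roots are $\gamma A,\gamma B$ and whose constant particular solution is $-\tfrac{\delta_i}{\beta}$; this is the source of the two bases and the additive $-1$ in $f_i$, and it pins down $P=\tfrac{\delta_i}{\beta}C$, $Q=\tfrac{\delta_i}{\beta}D$ through $f_i(0)=0$ and $f_i(1)=\eta\delta_i(1+\gamma)$.

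I expect the main obstacle to be purely algebraic: recognizing the correct closed form and verifying that it satisfies the coupled recursion, which is precisely what Lemma~\ref{lemma:sequence} is engineered to deliver. The inequality (rather than equality) is handled for free, since every coefficient in both recursions is nonnegative, so the update maps are monotone in $\bar v(x-1)$ and $\bar w(x-1)$; replacing exact iterates by the upper bounds $f_i$ and the associated momentum bound preserves the inequalities, and it therefore suffices to check that $f_i$ (together with the induced bound on $\bar v$) satisfies the recursion with equality. The remaining care is in the constant forcing term $\eta\delta_i$ and the initial conditions, which fix the particular solution $-\tfrac{\delta_i}{\beta}$ and the constants $C,D$ so that the homogeneous part matches $a_x$ exactly.
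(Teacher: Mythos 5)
Your proposal is correct and follows essentially the same route as the paper: derive the coupled recursions $\bar v(x)\le\gamma\bar v(x-1)+\eta\beta\bar w(x-1)+\eta\delta_i$ and $\bar w(x)\le(1+\eta\beta)\bar w(x-1)+\gamma\bar v(x)+\eta\delta_i$, unroll the momentum gap with geometric multipliers, substitute, and close the induction via the identity of Lemma~\ref{lemma:sequence}; the only (immaterial) differences are that you split the gradient term through $\nabla F(\mathbf{w}_i(t-1))$ rather than $\nabla F_i(\mathbf{w}_{[k]}(t-1))$, and you add the equivalent second-order-recursion view that explains where $\gamma A$, $\gamma B$, and the particular solution $-\delta_i/\beta$ come from.
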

\begin{proof} [Proof of Lemma \ref{lemma:w_it-w_kt}]
When $t=(k-1)\tau$, we know $\mathbf{w}_i(t)=\mathbf{w}(t)=\mathbf{w}_{[k]}(t)$ by the definition of $\mathbf{w}_{[k]}(t)$ and aggregation rules. Hence, we have $\Vert\mathbf{w}_i(t)-\mathbf{w}_{[k]}(t) \Vert =0$. Meanwhile, when $t=(k-1)\tau$, $x=0$ and $f_i(0)=0$. Thus, Lemma \ref{lemma:w_it-w_kt} holds.

When $t \in ((k-1)\tau, k\tau]$, we bound the momentum gap
\begin{flalign} \label{eq:v_i-v_k}
    & \Vert \mathbf{v}_i(t) - \mathbf{v}_{[k]}(t) \Vert\nonumber \\
     =& \Vert \gamma\mathbf{v}_i(t-1) - \eta \nabla F_i(\mathbf{w}_i(t-1))\nonumber\\ 
    &-(\gamma\mathbf{v}_{[k]}(t-1) - \eta \nabla F(\mathbf{w}_{[k]}(t-1))) \Vert\nonumber \\
     =& \Vert \gamma(\mathbf{v}_i(t-1) - \mathbf{v}_{[k]}(t-1)) - \eta [ \nabla F_i(\mathbf{w}_i(t-1))- \nonumber\\
    & \nabla F_i(\mathbf{w}_{[k]}(t-1)) + \nabla F_i(\mathbf{w}_{[k]}(t-1)) - \nabla   F(\mathbf{w}_{[k]}(t-1)) ] \Vert \nonumber\\
    &\rightline{\text{(adding a zero term)}}\nonumber \\
    \leq& \gamma\Vert \mathbf{v}_i(t-1) - \mathbf{v}_{[k]}(t-1) \Vert \nonumber\\ 
    &+ \eta \Vert \nabla F_i(\mathbf{w}_i(t-1))- \nabla F_i(\mathbf{w}_{[k]}(t-1)) \Vert \nonumber\\
    &+ \eta \Vert \nabla F_i(\mathbf{w}_{[k]}(t-1))-\nabla F(\mathbf{w}_{[k]}(t-1)) \Vert \nonumber\\
    &\rightline{\text{(from triangle inequality)}} \nonumber\\
    \leq& \gamma\Vert \mathbf{v}_i(t-1) - \mathbf{v}_{[k]}(t-1) \Vert \nonumber \\ 
    &+\eta\beta \Vert \mathbf{w}_i(t-1) - \mathbf{w}_{[k]}(t-1) \Vert + \eta\delta_i. \\
    &\rightline{\text{(from $\beta$-smoothness and \eqref{definition:GD})}}\nonumber
\end{flalign}
We use $\gamma^0, \gamma^1,\dots, \gamma^{t-(k-1)\tau-1}$ as multipliers to multiply \eqref{eq:v_i-v_k} when $t, t-1,\dots, (k-1)\tau+1$, respectively.
\begin{flalign*}
    & \Vert \mathbf{v}_i(t) - \mathbf{v}_{[k]}(t) \Vert 
      \leq \gamma\Vert \mathbf{v}_i(t-1) - \mathbf{v}_{[k]}(t-1) \Vert &\\ 
    & \quad +\eta\beta \Vert \mathbf{w}_i(t-1) - \mathbf{w}_{[k]}(t-1) \Vert + \eta\delta_i,& \\
    & \gamma\Vert \mathbf{v}_i(t-1) - \mathbf{v}_{[k]}(t-1) \Vert 
      \leq \gamma(\gamma\Vert \mathbf{v}_i(t-2) - \mathbf{v}_{[k]}(t-2) \Vert&  \\ 
    & \quad +\eta\beta \Vert \mathbf{w}_i(t-2) - \mathbf{w}_{[k]}(t-2) \Vert + \eta\delta_i),& \\
    &\dots& \\
    & \gamma^{t-(k-1)\tau-1}\Vert \mathbf{v}_i((k-1)\tau+1) - \mathbf{v}_{[k]}((k-1)\tau+1) \Vert& \\ 
    &\leq \gamma^{t-(k-1)\tau-1}(\gamma\Vert \mathbf{v}_i((k-1)\tau) - \mathbf{v}_{[k]}((k-1)\tau) \Vert&  \\ 
    &\quad+\eta\beta \Vert \mathbf{w}_i((k-1)\tau) - \mathbf{w}_{[k]}((k-1)\tau) \Vert + \eta\delta_i).&
\end{flalign*}
For convenience, we define $G_i(t) \triangleq \Vert \mathbf{w}_i(t)-\mathbf{w}_{[k]}(t)\Vert$. Summing up all of the above inequalities by integer $j\in [1, t-(k-1)\tau]$, we have
\begin{flalign*}
&\Vert\mathbf{v}_{i}(t)-\mathbf{v}_{[k]}(t) \Vert \\
\leq&\eta\beta\sum_{j=1}^{t-(k-1)\tau}\gamma^{j-1} G_i(t-j)+\eta\delta_{i}\sum_{j=1}^{t-(k-1)\tau}\gamma^{j-1}\\
&+\gamma^{t-(k-1) \tau}\Vert\mathbf{v}_{i}((k-1) \tau)-\mathbf{v}_{[k]}((k-1) \tau)\Vert.
\end{flalign*}
When $t=(k-1)\tau$, we know $\mathbf{v}_i(t)=\mathbf{v}(t)=\mathbf{v}_{[k]}(t)$ by the definition of $\mathbf{v}_{[k]}(t)$ and aggregation rules. Then we have $\Vert\mathbf{v}_{i}((k-1) \tau)-\mathbf{v}_{[k]}((k-1) \tau)\Vert =0$, so that the last term of above inequality is zero and
\begin{flalign}
\label{ieq:vit-vkt}
&\Vert\mathbf{v}_{i}(t)-\mathbf{v}_{[k]}(t) \Vert \nonumber\\
\leq& \eta\beta\sum_{j=1}^{t-(k-1)\tau}\gamma^{j-1} G_i(t-j)+\eta\delta_{i}\sum_{j=1}^{t-(k-1)\tau}\gamma^{j-1}.
\end{flalign}
Now, we can bound the gap between  $\mathbf{w}_i(t)$ and $\mathbf{w}_{[k]}(t)$. When $t \in ((k-1)\tau, k\tau]$, we have
\begin{flalign}
\label{eq:wit-wkt}
& \Vert \mathbf{w}_i(t) - \mathbf{w}_{[k]}(t) \Vert \nonumber\\
=& \Vert\mathbf{w}_i(t-1) + \gamma\mathbf{v}_i(t) - \eta \nabla F_i(\mathbf{w}_i(t-1)) \nonumber\\
&- (\mathbf{w}_{[k]}(t-1) + \gamma\mathbf{v}_{[k]}(t) - \eta \nabla F(\mathbf{w}_{[k]}(t-1)))\nonumber\Vert\\
&\rightline{\text{(from \eqref{eq:localWi} and \eqref{eq:wkt=})}}\nonumber\\
=& \Vert\mathbf{w}_i(t-1)- \mathbf{w}_{[k]}(t-1)+ \gamma(\mathbf{v}_i(t) -\mathbf{v}_{[k]}(t)) \nonumber\\
&- \eta [\nabla F_i(\mathbf{w}_i(t-1))-\nabla F_i(\mathbf{w}_{[k]}(t-1))\nonumber\\
&+\nabla F_i(\mathbf{w}_{[k]}(t-1))- \nabla F(\mathbf{w}_{[k]}(t-1))]\Vert\nonumber\\
&\rightline{\text{(adding a zero term)}}\nonumber\\
\leq& \Vert \mathbf{w}_i(t-1) - \mathbf{w}_{[k]}(t-1) \Vert+\gamma\Vert \mathbf{v}_i(t) - \mathbf{v}_{[k]}(t) \Vert \nonumber\\ 
&+\eta\beta \Vert \mathbf{w}_i(t-1) - \mathbf{w}_{[k]}(t-1) \Vert + \eta\delta_i \nonumber\\
&\rightline{\text{(from triangle inequality, $\beta$-smoothness and \eqref{definition:GD})}}\nonumber\\
=&(\eta\beta+1)\Vert \mathbf{w}_i(t-1) - \mathbf{w}_{[k]}(t-1) \Vert\nonumber\\
&+\gamma\Vert \mathbf{v}_i(t) - \mathbf{v}_{[k]}(t) \Vert+\eta\delta_i.
\end{flalign}
Substituting inequality \eqref{ieq:vit-vkt} into \eqref{eq:wit-wkt} and using $G_i(t)$ to denote $\Vert \mathbf{w}_i(t)-\mathbf{w}_{[k]}(t)\Vert$ for $t, t-1,\cdots, (k-1)\tau+1$, we have
\begin{flalign}\label{ieq:Gitleq}
G_i(t)\leq&(\eta\beta+1)G_i(t-1)+\eta\beta\gamma\sum_{j=1}^{t-(k-1)\tau}\gamma^{j-1} G_i(t-j)&\nonumber\\
&+\eta\delta_{i}\gamma\sum_{j=1}^{t-(k-1)\tau}\gamma^{j-1}+\eta\delta_i&\nonumber\\
=&(\eta\beta+1)G_i(t-1)+\eta\beta\gamma\sum_{j=1}^{t-(k-1)\tau}\gamma^{j-1} G_i(t-j)&\nonumber\\
&+\eta\delta_{i}\sum_{j=0}^{t-(k-1)\tau}\gamma^{j}.&
\end{flalign}

For convenience, we define $g_i(x)\triangleq\frac{\delta_i}{\beta}(CA^x+DB^x)$, where $A$ and $B$ are defined in Theorem~\ref{theorem:wt-wkt}; $C$ and $D$ are defined in Lemma~\ref{lemma:sequence}. We have
\begin{align}
\label{eq:fix}
    f_i(x)=\gamma^x  g_i(x)-\frac{\delta_i}{\beta}.
\end{align}

Next, we use induction to prove $G_i(t)\leq f_i(t-(k-1)\tau)$. For the induction, we assume that
\begin{align}\label{ieq:Gip}
    G_i(p)\leq f_i(p-(k-1)\tau)
\end{align}
holds for some $p\in ((k-1)\tau,t)$. Thus, we have
\begin{flalign*}
&G_i(t)\\
\leq&(\eta\beta+1)f_i(t-1-(k-1)\tau)\\
&+\eta\beta\sum_{j=1}^{t-(k-1)\tau}\gamma^{j}f_i(t-j-(k-1)\tau)+\eta\delta_{i}\sum_{j=0}^{t-(k-1)\tau}\gamma^{j}\\
&\rightline{\text{(from \eqref{ieq:Gitleq}, \eqref{ieq:Gip} and $G_i((k-1)\tau)=f_i(0)$)}}\\
=&(\eta\beta+1)\left(\gamma^{t-1-(k-1)\tau}g_i(t-1-(k-1)\tau)-\frac{\delta_i}{\beta}\right)\\
&+\eta\beta\sum_{j=1}^{t-(k-1)\tau}\left(\gamma^{t-(k-1)\tau}g_i(t-j-(k-1)\tau)-\gamma^{j}\frac{\delta_i}{\beta}\right)\\
&+\eta\delta_i \sum_{j=0}^{t-(k-1)\tau}\gamma^j\\
&\rightline{\text{(from \eqref{eq:fix})}}\\
=&\gamma^{t-1-(k-1)\tau}\left((\eta\beta+1)g_i(t-1-(k-1)\tau)\right.\\
&\left.+\eta\beta\gamma\sum_{j=1}^{t-(k-1)\tau}g_i(t-j-(k-1)\tau)\right)-\frac{\delta_i}{\beta}\\
=&\gamma^{t-(k-1)\tau}g_i(t-(k-1)\tau)-\frac{\delta_i}{\beta}\\
&\rightline{\text{(from Lemma \ref{lemma:sequence} and $g_i(t)=a_t$)}}\\
=&f_i(t-(k-1)\tau).
\end{flalign*}

Thus, Lemma \ref{lemma:w_it-w_kt} has been proven.
\end{proof}
\subsection{Bounding \texorpdfstring{$\Vert\mathbf{w}(t)-\mathbf{w}_{[k]}(t) \Vert$}{||w(t)-w[k](t)||}}
Based on the result of Lemma \ref{lemma:w_it-w_kt}, we first bound the gap of $\Vert\mathbf{v}(t)-\mathbf{v}_{[k]}(t) \Vert$ in Lemma \ref{lemma:vt-vkt}. Based on the result of Lemma \ref{lemma:vt-vkt}, we then bound the gap of $\Vert\mathbf{w}(t)-\mathbf{w}_{[k]}(t) \Vert$, which concludes Theorem \ref{theorem:wt-wkt}.

\begin{lemma} \label{lemma:vt-vkt}
For any interval $[k]$, $\forall t \in [(k-1)\tau,k\tau]$, we have:
\begin{align}
    &\Vert\mathbf{v}(t)-\mathbf{v}_{[k]}(t) \Vert\nonumber\\ \leq&\eta\delta\left(\frac{C(\gamma A)^{t_0}}{\gamma(A-1)}+\frac{D(\gamma B)^{t_0}}{\gamma(B-1)}-\frac{\gamma^{t_0}-1}{\gamma-1}\right),&
\end{align}
where $t_0=t-(k-1)\tau$.
\end{lemma}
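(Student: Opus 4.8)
The plan is to set up a one-step recursion for the momentum gap $H(t) \triangleq \Vert\mathbf{v}(t) - \mathbf{v}_{[k]}(t)\Vert$ and then unroll it across the interval, reusing Lemma~\ref{lemma:w_it-w_kt} to control the weight gap that appears inside the gradient mismatch. First I would write the averaged local update, which by \eqref{eq:localVi} and the definition $\mathbf{v}(t)=\sum_i D_i\mathbf{v}_i(t)/D$ reads $\mathbf{v}(t)=\gamma\mathbf{v}(t-1)-\eta\sum_i D_i\nabla F_i(\mathbf{w}_i(t-1))/D$, subtract the virtual update \eqref{eq:vkt=}, and take norms. The triangle inequality then gives
\[
H(t) \leq \gamma H(t-1) + \eta\left\Vert\frac{\sum_i D_i \nabla F_i(\mathbf{w}_i(t-1))}{D} - \nabla F(\mathbf{w}_{[k]}(t-1))\right\Vert.
\]

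The key observation for the gradient term is that, by linearity of the gradient, $\nabla F(\mathbf{w}_{[k]}(t-1)) = \sum_i D_i\nabla F_i(\mathbf{w}_{[k]}(t-1))/D$ exactly, so the divergence $\delta_i$ does \emph{not} enter this step directly (unlike in Lemma~\ref{lemma:w_it-w_kt}). Rewriting the difference as $\sum_i D_i(\nabla F_i(\mathbf{w}_i(t-1))-\nabla F_i(\mathbf{w}_{[k]}(t-1)))/D$ and applying $\beta$-smoothness bounds it by $\sum_i D_i\beta\Vert\mathbf{w}_i(t-1)-\mathbf{w}_{[k]}(t-1)\Vert/D$. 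Invoking Lemma~\ref{lemma:w_it-w_kt}, and using $\beta f_i(x)=\delta_i(\gamma^x(CA^x+DB^x)-1)$ together with $\delta=\sum_i D_i\delta_i/D$, collapses the weighted average to $\delta(\gamma^{t_0-1}(CA^{t_0-1}+DB^{t_0-1})-1)$, where $t_0=t-(k-1)\tau$ and the bracketed factor is independent of $i$.

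This produces the scalar recursion $H(t)\leq\gamma H(t-1)+\eta\delta(\gamma^{t_0-1}(CA^{t_0-1}+DB^{t_0-1})-1)$ with initial condition $H((k-1)\tau)=0$ coming from \eqref{eq:vk(k-1)tau}. I would unroll it by multiplying the successive inequalities by $\gamma^0,\gamma^1,\dots,\gamma^{t_0-1}$ and summing, the same telescoping device used in Lemma~\ref{lemma:w_it-w_kt}, to obtain
\[
H(t) \leq \eta\delta\sum_{j=1}^{t_0}\gamma^{t_0-j}\left(\gamma^{j-1}(CA^{j-1}+DB^{j-1})-1\right).
\]
Since $\gamma^{t_0-j}\gamma^{j-1}=\gamma^{t_0-1}$, the first part factors as $\gamma^{t_0-1}(C\sum_{j=1}^{t_0}A^{j-1}+D\sum_{j=1}^{t_0}B^{j-1})$, and evaluating the three finite geometric series in $A$, $B$, and $\gamma$ yields $\gamma^{t_0-1}(C(A^{t_0}-1)/(A-1)+D(B^{t_0}-1)/(B-1))-(\gamma^{t_0}-1)/(\gamma-1)$.

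The final step, which I expect to be the only delicate point, is the algebraic simplification. Expanding the geometric sums leaves two spurious constant terms $-\gamma^{t_0-1}C/(A-1)$ and $-\gamma^{t_0-1}D/(B-1)$ in addition to the claimed expression. These must cancel, and they do: the definitions $C=(A-1)/(A-B)$ and $D=(1-B)/(A-B)$ from Lemma~\ref{lemma:sequence} give $C/(A-1)=1/(A-B)$ and $D/(B-1)=-1/(A-B)$, so $C/(A-1)+D/(B-1)=0$. Finally rewriting $\gamma^{t_0-1}A^{t_0}=(\gamma A)^{t_0}/\gamma$ and $\gamma^{t_0-1}B^{t_0}=(\gamma B)^{t_0}/\gamma$ recasts the surviving terms as $C(\gamma A)^{t_0}/(\gamma(A-1))+D(\gamma B)^{t_0}/(\gamma(B-1))$, matching the stated bound exactly and completing the proof.
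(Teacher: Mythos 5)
Your proposal is correct and follows essentially the same route as the paper's proof: the same one-step recursion obtained via linearity of the gradient and $\beta$-smoothness, the same invocation of Lemma~\ref{lemma:w_it-w_kt} to replace the weight gaps by $f_i$, the same telescoping unroll with multipliers $\gamma^0,\dots,\gamma^{t_0-1}$, and the same final cancellation $C/(A-1)+D/(B-1)=0$. No substantive differences to report.
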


\begin{proof}[Proof of Lemma \ref{lemma:vt-vkt}]
For convenience, we define
\begin{align} \label{eq:pt=}
    p(t) \triangleq \gamma^t(CA^t+DB^t)-1.
\end{align}
Therefore, we get
\begin{align} \label{eq:fit=pt}
    f_i(t) = \frac{\delta_i}{\beta}p(t).
\end{align}
From \eqref{eq:localVi} and \eqref{eq:globalV}, we have
\begin{align} \label{eq:vt=}
    \mathbf{v}(t)=\gamma\mathbf{v}(t-1)-\eta\frac{\sum_{i=1}^N D_i \nabla F_i(\mathbf{w}_i (t-1))}{D}.
\end{align}
For $t \in ((k-1)\tau, k\tau]$, we have
\begin{flalign} \label{eq:vt-vkt}
&\Vert\mathbf{v}(t)-\mathbf{v}_{[k]}(t) \Vert&\nonumber\\
=&\Vert \gamma\mathbf{v}(t-1)-\eta\frac{\sum_{i=1}^N D_i \nabla F_i(\mathbf{w}_i (t-1))}{D}&\nonumber\\
&-\gamma\mathbf{v}_{[k]}(t-1)+\eta\nabla F(\mathbf{w}_{[k]} (t-1)) \Vert&\nonumber\\
&\rightline{\text{(from \eqref{eq:vt=} and \eqref{eq:vkt=})}}&\nonumber\\
\leq&\gamma\Vert\mathbf{v}(t-1)-\mathbf{v}_{[k]}(t-1)\Vert&\nonumber\\
&+\eta\frac{\sum_{i=1}^N D_i\Vert \nabla F_i(\mathbf{w}_i(t-1))- \nabla F_i(\mathbf{w}_{[k]}(t-1))\Vert}{D}&\nonumber\\
\leq&\gamma\Vert\mathbf{v}(t-1)-\mathbf{v}_{[k]}(t-1)\Vert&\nonumber\\
&+\eta\beta\frac{\sum_{i=1}^N D_i f_i(t-1-(k-1)\tau)}{D}&\nonumber\\
&\rightline{\text{(from $\beta$-smoothness and Lemma \ref{lemma:w_it-w_kt})}}&\nonumber\\
=&\gamma\Vert\mathbf{v}(t-1)-\mathbf{v}_{[k]}(t-1)\Vert+\eta\delta p(t-1-(k-1)\tau).&\\
&\rightline{\text{(from \eqref{eq:fit=pt} and \eqref{eq:delta=})}}&\nonumber
\end{flalign}
We use $\gamma^0, \gamma^1,\dots, \gamma^{t-(k-1)\tau-1}$ as multipliers to multiply \eqref{eq:vt-vkt} when $t, t-1,\dots, (k-1)\tau+1$, respectively.
\begin{flalign*}
&\Vert \mathbf{v}(t) - \mathbf{v}_{[k]}(t) \Vert&\\ 
\leq& \gamma\Vert \mathbf{v}(t-1) - \mathbf{v}_{[k]}(t-1) \Vert+ \eta\delta p(t-1-(k-1)\tau),&\\
&\gamma\Vert \mathbf{v}(t-1) - \mathbf{v}_{[k]}(t-1) \Vert&\\ 
\leq&\gamma^2(\Vert\mathbf{v}(t-2) - \mathbf{v}_{[k]}(t-2)\Vert+\gamma\eta\delta p(t-2-(k-1)\tau),&\\ 
&\dots& \\
&\gamma^{t-(k-1)\tau-1}\Vert \mathbf{v}((k-1)\tau+1)-\mathbf{v}_{[k]}((k-1)\tau+1)\Vert&\\ 
\leq& \gamma^{t-(k-1)\tau}\Vert\mathbf{v}((k-1)\tau)-\mathbf{v}_{[k]}((k-1)\tau)\Vert&\\
&+\gamma^{t-1-(k-1)\tau}\eta\delta p(0).&
\end{flalign*}
Summing up all of the above inequalities, we have
\begin{flalign} 
&\Vert \mathbf{v}(t) - \mathbf{v}_{[k]}(t) \Vert\leq\eta\delta\sum_{j=1}^{t-(k-1)\tau}\gamma^{t-j-(k-1)\tau}p(j-1)&\label{ieq:vt-vkt1}\\
&\rightline{\text{(because $\Vert\mathbf{v}((k-1)\tau) - \mathbf{v}_{[k]}((k-1)\tau)\Vert=0$ from \eqref{eq:vk(k-1)tau})}}&\nonumber\\
=&\eta\delta\left(\gamma^{t-1-(k-1)\tau}C\sum_{j=1}^{t-(k-1)\tau}A^{j-1}\right.&\nonumber\\
&\left.+\gamma^{t-1-(k-1)\tau}D\sum_{j=1}^{t-(k-1)\tau}B^{j-1}-\sum_{j=1}^{t-(k-1)\tau}\gamma^{j-1}\right)&\nonumber\\
=&\eta\delta\left(\gamma^{t_0-1}C\frac{A^{t_0}-1}{A-1}+\gamma^{t_0-1}D\frac{B^{t_0}-1}{B-1}-\frac{\gamma^{t_0}-1}{\gamma-1}\right)&\nonumber\\
=&\eta\delta\left(\frac{C(\gamma A)^{t_0}}{\gamma(A-1)}+\frac{D(\gamma B)^{t_0}}{\gamma(B-1)}-\frac{\gamma^{t_0}-1}{\gamma-1}\right)&\nonumber\\
&-\eta\delta\gamma^{t_0-1}\left(\frac{C}{A-1}+\frac{D}{B-1}\right)&\nonumber\\
=&\eta\delta\left(\frac{C(\gamma A)^{t_0}}{\gamma(A-1)}+\frac{D(\gamma B)^{t_0}}{\gamma(B-1)}-\frac{\gamma^{t_0}-1}{\gamma-1}\right)&\label{ieq:vt-vkt}
\end{flalign}
where $t_0=t-(k-1)\tau$.
Thus, Lemma \ref{lemma:vt-vkt} has been proven.
\end{proof}

Based on the result in Lemma \ref{lemma:vt-vkt}, we can now bound $\Vert\mathbf{w}(t)-\mathbf{w}_{[k]}(t)\Vert$. 
\begin{proof} [Proof of Theorem \ref{theorem:wt-wkt}]
From \eqref{eq:localWi}, \eqref{eq:globalV}, and \eqref{eq:globalW}, we have
\begin{align} \label{eq:wt=}
\mathbf{w}(t)=\mathbf{w}(t-1)+\gamma\mathbf{v}(t)-\eta\frac{\sum_{i=1}^N D_i \nabla F_i(\mathbf{w}_i (t-1))}{D}.
\end{align}
From \eqref{eq:wkt=} and \eqref{eq:wt=}, we have
\begin{flalign*}
&\Vert\mathbf{w}(t)-\mathbf{w}_{[k]}(t) \Vert&\nonumber\\
=&\Vert \mathbf{w}(t-1)+\gamma\mathbf{v}(t)-\eta\frac{\sum_{i=1}^N D_i \nabla F_i(\mathbf{w}_i (t-1))}{D}&\nonumber\\
&-\mathbf{w}_{[k]}(t-1)-\gamma\mathbf{v}_{[k]}(t)+\eta\nabla F(\mathbf{w}_{[k]} (t-1)) \Vert&\nonumber\\
\leq&\Vert\mathbf{w}(t-1)-\mathbf{w}_{[k]}(t-1) \Vert +\gamma\Vert\mathbf{v}(t)-\mathbf{v}_{[k]}(t) \Vert&\nonumber\\
&+\eta\delta p(t-1-(k-1)\tau).&\\
&\rightline{\text{(from $\beta$-smoothness, Lemma \ref{lemma:w_it-w_kt}, \eqref{eq:fit=pt}, and \eqref{eq:delta=})}}&\nonumber
\end{flalign*}
Thus, according to Lemma \ref{lemma:vt-vkt}, we have
\begin{flalign}
&\Vert\mathbf{w}(t)-\mathbf{w}_{[k]}(t) \Vert - \Vert\mathbf{w}(t-1)-\mathbf{w}_{[k]}(t-1) \Vert&\nonumber\\
\leq& \gamma\eta\delta\left(\frac{C(\gamma A)^{t_0}}{\gamma(A-1)}+\frac{D(\gamma B)^{t_0}}{\gamma(B-1)}-\frac{\gamma^{t_0}-1}{\gamma-1}\right)&\nonumber\\
&+\eta\delta(\gamma^{t_0-1}(CA^{t_0-1}+DB^{t_0-1})-1)&\label{ieq:wt-wkt_gap1}\\
=&\eta\delta\left(\frac{C(\gamma A)^{t_0-1}}{A-1}(\gamma A+A-1)\right.&\nonumber\\
&\left.+\frac{D(\gamma B)^{t_0-1}}{B-1}(\gamma B+B-1)-\frac{\gamma^{t_0+1}-1}{\gamma-1}\right).&\label{ieq:wt-wkt_gap}
\end{flalign}
When $t=(k-1)\tau$, we have $\Vert \mathbf{w}(t)-\mathbf{w}_{[k]}(t) \Vert =0$. When $t\in((k-1)\tau,k\tau]$, we sum up \eqref{ieq:wt-wkt_gap} for $t, t-1,\dots, (k-1)\tau+1$. Then we have
\begin{flalign*}
&\Vert\mathbf{w}(t)-\mathbf{w}_{[k]}(t) \Vert&\\
\leq&\sum_{x=1}^{t_0}\eta\delta\left(\frac{C(\gamma A)^{x-1}}{A-1}(\gamma A+A-1)\right.&\\
&\left.+\frac{D(\gamma B)^{x-1}}{B-1}(\gamma B+B-1)-\frac{\gamma^{x+1}-1}{\gamma-1}\right)&\\
=&\eta \delta\left[E\left((\gamma A)^{t_{0}}-1\right)+F\left((\gamma B)^{t_{0}}-1\right)\right.&\\
&\left.-\frac{\gamma^2(\gamma^{t_{0}}-1)-(\gamma-1) t_{0}}{(\gamma-1)^{2}}\right]&\\
=&\eta \delta\left[E(\gamma A)^{t_{0}}+F(\gamma B)^{t_{0}}-\frac{1}{\eta \beta}-\frac{\gamma^2(\gamma^{t_{0}}-1)-(\gamma-1) t_{0}}{(\gamma-1)^{2}}\right]&\\
=&h(t_0),&
\end{flalign*}
where $E=\frac{\gamma A+A-1}{(A-B)(\gamma A-1)}$ and  $F=\frac{\gamma B+B-1}{(A-B)(1-\gamma B)}$ (as defined in Theorem~\ref{theorem:wt-wkt}).   $E+F=\frac{1}{\eta\beta}$. $t_0=t-(k-1)\tau$.  Thus,  Theorem \ref{theorem:wt-wkt} has been proven.
\end{proof}
\section{Proof of Monotone of \texorpdfstring{$h(x)$}{h(x)} (Observation \texorpdfstring{\textcircled{1}}{1} in Theorem \ref{theorem:wt-wkt})} \label{appendixC}
We first introduce following Lemma~\ref{lemma:ABCD} for later use.
\begin{lemma} \label{lemma:ABCD}
Given $A,B,C,$ and $D$ according to their definitions, then we have
\begin{align*}
    C(\gamma A)^i+D(\gamma B)^i \geq (1+\eta\beta+\eta\beta\gamma)^i
\end{align*}
holds for $i=0,1,2,3,...$
\end{lemma}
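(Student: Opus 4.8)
The plan is to recognize the left-hand side $C(\gamma A)^i + D(\gamma B)^i$ as the value of a convex power function averaged over a two-point distribution, and then invoke Jensen's inequality. First I would record two algebraic identities that follow directly from the definitions. Writing $u \triangleq \gamma A$ and $v \triangleq \gamma B$, the weights satisfy $C + D = \frac{(A-1)+(1-B)}{A-B} = 1$, so $(C,D)$ is a candidate probability vector. Next, using $CA + DB = A+B-1$ together with $A+B = \frac{(1+\eta\beta)(1+\gamma)}{\gamma}$, a one-line computation gives
\begin{align*}
Cu + Dv = \gamma(CA + DB) = \gamma(A+B-1) = 1 + \eta\beta + \eta\beta\gamma,
\end{align*}
so the weighted mean of $u$ and $v$ is exactly the base $s \triangleq 1+\eta\beta+\eta\beta\gamma$ appearing on the right-hand side.

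Before applying Jensen I would certify the positivity conditions it requires. Evaluating the quadratic $g(x) = \gamma x^2 - (1+\eta\beta)(1+\gamma)x + (1+\eta\beta)$, whose roots are $A$ and $B$, at $x=1$ gives $g(1) = -\gamma\eta\beta < 0$; since $g$ opens upward, its roots straddle $1$, i.e.\ $B < 1 < A$. This makes $C = \frac{A-1}{A-B}$ and $D = \frac{1-B}{A-B}$ both strictly positive, so $(C,D)$ is a genuine probability vector, and $u = \gamma A$, $v = \gamma B$ are positive.

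With the setup in place, the conclusion is immediate: since $\phi(t) = t^i$ is convex on $[0,\infty)$ for every integer $i \geq 1$ (and constant for $i = 0$), Jensen's inequality applied to the point masses $C$ at $u$ and $D$ at $v$ yields
\begin{align*}
C u^i + D v^i \geq (Cu + Dv)^i = (1+\eta\beta+\eta\beta\gamma)^i,
\end{align*}
which is exactly the claimed bound; the cases $i \in \{0,1\}$ hold with equality by $C+D=1$ and $Cu+Dv=s$.

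The bulk of the work, and the only place where errors could creep in, is the bookkeeping of the preliminary identities $C+D=1$, $Cu+Dv=s$, and the sign facts $C,D>0$ — once these are secured the inequality collapses to a single application of convexity. An alternative I would keep in reserve is a direct induction: setting $c_i = C u^i + D v^i - s^i$, the sequence $C u^i + D v^i$ obeys the second-order recurrence with characteristic roots $u,v$, from which one derives $c_i = (u+v)c_{i-1} - uv\,c_{i-2} + \eta\beta\gamma^2 s^{i-2}$ with $c_0 = c_1 = 0$. However, the negative coefficient $-uv$ makes a clean sign induction awkward, so I expect the Jensen route to be both shorter and more transparent.
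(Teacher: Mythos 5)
Your proposal is correct and follows essentially the same route as the paper's proof: establish $C+D=1$ and $\gamma(CA+DB)=1+\eta\beta+\eta\beta\gamma$, then apply Jensen's inequality to the convex power function (with the cases $i=0,1$ holding with equality). Your explicit verification that $B<1<A$ via the sign of the quadratic at $x=1$, hence $C,D>0$, is a small rigor improvement over the paper, which asserts these sign facts without proof.
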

\begin{proof}
We note that according to the definitions of $A,B,C$ and $D$, we know that $\gamma A >1, 0<\gamma B<1,\frac{1}{\gamma +1}<B<1, C>0, D>0,E>0,$ and $F>0$. We also have $C+D=1$. 

When $i=0, C(\gamma A)^i+D(\gamma B)^i = (1+\eta\beta+\eta\beta\gamma)^i =1$, so the inequality holds. When $i=1$, we have
\begin{align*}
&C(\gamma A)^i+D(\gamma B)^i\\
=& \gamma(CA+DB)\\
=& \gamma\left(\frac{A-1}{A-B}A+\frac{1-B}{A-B}B\right)\\
=&\gamma(A+B-1)\\
=&1+\eta\beta+\eta\beta\gamma,
\end{align*}
so the inequality still holds. When $i>1$, according to Jensen inequality, and $f(x)=x^i$ is convex, we have
\begin{align*}
&C(\gamma A)^i+D(\gamma B)^i\\
\geq&(\gamma CA+\gamma DB)^i\\
=&(1+\eta\beta+\eta\beta\gamma)^i.
\end{align*}
To conclude, Lemma \ref{lemma:ABCD} has been proven.
\end{proof}
Then we can prove the monotone of $h(x)$.
\begin{proof}
It is equivalent to prove
\begin{align*}
h(x)-h(x-1)\geq 0
\end{align*}
for all integer $x\geq 1$. When $x=0$ or $x=1$, we have
\begin{align*}
h(0)&=\eta\delta(E+F-\frac{1}{\eta\beta})=0,\\
h(1)&=\eta\delta\left(\gamma(EA+FB)-\frac{1}{\eta\beta}-\gamma-1\right)=0,
\end{align*}
because $EA+FB=\frac{1+\eta\beta+\eta\beta\gamma}{\eta\beta\gamma}$. Therefore, when $x=1, h(x)-h(x-1)=0$.

When $x>1$, according to Lemma \ref{lemma:ABCD} and \eqref{eq:pt=}, we have $p(x)=C(\gamma A)^x+D(\gamma B)^x -1 \geq (1+\eta\beta+\eta\beta\gamma)^x -1>0$. Then we have
\begin{flalign*}
&h(x)-h(x-1)\\
=&\eta\delta\left(\frac{C(\gamma A)^{x}(\gamma A+A-1)}{\gamma A(A-1)}+\frac{D(\gamma B)^{x}(\gamma B+B-1)}{\gamma B(B-1)}\right.\\
&\left.-\frac{\gamma^{x+1}-1}{\gamma-1}\right)\\
=&\gamma\eta\delta\left(\frac{C(\gamma A)^{x}}{\gamma(A-1)}+\frac{D(\gamma B)^{x}}{\gamma(B-1)}-\frac{\gamma^{x}-1}{\gamma-1}\right)\\
&+\eta\delta(\gamma^{x-1}(CA^{x-1}+DB^{x-1})-1)\\
&\rightline{\text{(because \eqref{ieq:wt-wkt_gap} equals \eqref{ieq:wt-wkt_gap1})}}\\
=&\gamma\eta\delta\sum_{j=1}^{x}\gamma^{x-j}p(j-1)+\eta\delta p(x-1)\\
&\rightline{\text{(because \eqref{ieq:vt-vkt} equals \eqref{ieq:vt-vkt1}, $x=t-(k-1)\tau$, and \eqref{eq:pt=})}}\\
>& 0.
\end{flalign*}

Thus, we have proven that $h(0)=h(1)=0$ and $h(x)$ increases with $x$ when $x\geq1$.
\end{proof}
\section{Proof of Theorem 2} \label{appendixD}
For convenience, we define
$c_{[k]}(t) \triangleq F(\mathbf{w}_{[k]}(t)) - F(\mathbf{w}^*)$
for a given interval $[k]$, where $t \in [(k-1)\tau,k\tau]$.
\begin{proof}
According to the convergence lower bound of any gradient descent methods given in Theorem 3.14 in \cite{bubeck2014convex}, we always have
\begin{equation} \label{ieq:ckt>0}
    c_{[k]}(t)>0
\end{equation}
for any $t$ and $k$.



Then we derive the upper bound of $c_{[k]}(t+1)-c_{[k]}(t)$, where $t \in [(k-1)\tau,k\tau-1]$. 

Because $F(\cdot)$ is $\beta$-smooth, according to Lemma 3.4 in \cite{bubeck2014convex}, we have
\begin{align*}
F(\mathbf{x})-F(\mathbf{y}) \leq \nabla F(\mathbf{y})^{\mathrm{T}}(\mathbf{x}-\mathbf{y})+\frac{\beta}{2}\|\mathbf{x}-\mathbf{y}\|^{2}
\end{align*}
for arbitrary $\mathbf{x}$ and $\mathbf{y}$. Thus,
\begin{flalign} \label{eq:ck(t+1)-ckt}
&c_{[k]}(t+1)-c_{[k]}(t)&\nonumber\\
=&F\left(\mathbf{w}_{[k]}(t+1)\right)-F\left(\mathbf{w}_{[k]}(t)\right)&\nonumber \\
\leq& \nabla F\left(\mathbf{w}_{[k]}(t)\right)^{\mathrm{T}}\left(\mathbf{w}_{[k]}(t+1)-\mathbf{w}_{[k]}(t)\right)&\nonumber \\
&+\frac{\beta}{2}\left\|\mathbf{w}_{[k]}(t+1)-\mathbf{w}_{[k]}(t)\right\|^{2}&\nonumber \\
=&\gamma\nabla F\left(\mathbf{w}_{[k]}(t)\right)^{\mathrm{T}}\mathbf{v}_{[k]}(t+1)-\eta\|\nabla F\left(\mathbf{w}_{[k]}(t)\right)\|^2&\nonumber\\
&+\frac{\beta}{2}\|\gamma\mathbf{v}_{[k]}(t+1)-\eta \nabla F\left(\mathbf{w}_{[k]}(t)\right)\|^2&\nonumber\\
=&-\eta(\gamma+1)\left(1-\frac{\beta\eta(\gamma+1)}{2}\right)\left\|\nabla F(\mathbf{w}_{[k]}(t))\right\|^2&\nonumber\\
+&\frac{\beta\gamma^4}{2}\left\|\mathbf{v}_{{[k]}}(t)\right\|^2+\gamma^2\left(1-\beta\eta(\gamma+1)
\right)\nabla F\left(\mathbf{w}_{[k]}(t)\right)^{\mathrm{T}}\mathbf{v}_{[k]}(t)&\nonumber\\
&\rightline{\text{(replacing $\mathbf{v}_{[k]}(t+1)$ with \eqref{eq:vkt=} and rearrange)}}&\nonumber\\
\leq& \left(-\eta(\gamma+1)\left(1-\frac{\beta\eta(\gamma+1)}{2}\right)+\frac{\beta\eta^2\gamma^2 p^2}{2}\right.&\nonumber\\
&\left.+\frac{(1-\beta\eta(\gamma+1))(1+\eta^2\gamma^2 p^2)}{2}\right)\|\nabla F\left(\mathbf{w}_{[k]}(t)\right)\|^{2},&
\end{flalign}
where the second term in \eqref{eq:ck(t+1)-ckt} is because $\|\gamma \mathbf{v}_{[k]}(t)\| \leq p \|\eta \nabla F(\mathbf{w}_{[k]}(t))\|$ with the definition of $p$. The third term in \eqref{eq:ck(t+1)-ckt} is because
\begin{flalign*}
&\nabla F(\mathbf{w}_{[k]}(t))^{\mathrm{T}} \mathbf{v}_{[k]}(t)\\
\leq&\frac{1}{2a}\|\nabla F(\mathbf{w}_{[k]}(t))\|^2 + \frac{a}{2} \|\mathbf{v}_{[k]}(t)\|^2\\
=&\frac{1}{2\gamma^2}\|\nabla F(\mathbf{w}_{[k]}(t))\|^2 + \frac{\gamma^2}{2} \|\mathbf{v}_{[k]}(t)\|^2\\
&\rightline{\text{(Young’s Inequality, for any $a > 0$ and we set $a=\gamma^2$)}}\\
\leq&\frac{1}{2\gamma^2}\|\nabla F(\mathbf{w}_{[k]}(t))\|^2 + \frac{p^2\eta^2}{2} \|\nabla F(\mathbf{w}_{[k]}(t))\|^2\\
=&\left(\frac{1}{2\gamma^2}+\frac{p^2\eta^2}{2}\right)\|\nabla F(\mathbf{w}_{[k]}(t))\|^2.\\
\end{flalign*}
According to the definition of $\alpha$, and condition 2 of Theorem \ref{theorem:Fwt-Fw*} with $h(\tau)\geq0$, we have $\alpha>0$. Then from \eqref{eq:ck(t+1)-ckt}, we have
\begin{align} 
\label{ieq:ckt+1<ckt}
c_{[k]}(t+1)\leq c_{[k]}(t) -\alpha\left\|\nabla F(\mathbf{w}_{[k]}(t))\right\|^{2}.
\end{align}
According to the convexity condition and Cauchy-Schwarz inequality, we have:
\begin{flalign*}
c_{[k]}(t) &=F(\mathbf{w}_{[k]}(t))-F(\mathbf{w}^{*}) \leq \nabla F(\mathbf{w}_{[k]}(t))^{\mathrm{T}}(\mathbf{w}_{[k]}(t)-\mathbf{w}^{*}) \\
& \leq\left\|\nabla F(\mathbf{w}_{[k]}(t))\right\|\left\|\mathbf{w}_{[k]}(t)-\mathbf{w}^{*}\right\|.
\end{flalign*}
Equivalently,
\begin{align} \label{ieq:delta_Fwt>}
\left\|\nabla F(\mathbf{w}_{[k]}(t))\right\| \geq \frac{c_{[k]}(t)}{\left\|\mathbf{w}_{[k]}(t)-\mathbf{w}^{*}\right\|}.
\end{align}
Substituting \eqref{ieq:delta_Fwt>} into \eqref{ieq:ckt+1<ckt}, and noting $\omega\leq\frac{1}{\left\|\mathbf{w}_{[k]}(t)-\mathbf{w}^{*}\right\|^{2}}$ by the definition of $\omega$, we get
\begin{flalign*}
c_{[k]}(t+1)\leq& c_{[k]}(t) - \frac{\alpha c_{[k]}(t)^2}{\left\|\mathbf{w}_{[k]}(t)-\mathbf{w}^{*}\right\|^2}\\
\leq& c_{[k]}(t) - \omega\alpha c_{[k]}(t)^2.
\end{flalign*}
Because $\alpha >0$, $c_{[k]}(t)>0$ in \eqref{ieq:ckt>0}, and \eqref{ieq:ckt+1<ckt}, we have $0<c_{[k]}(t+1)\leq c_{[k]}(t)$. Dividing both side by $c_{[k]}(t+1)c_{[k]}(t)$, we get
\begin{align*}
\frac{1}{c_{[k]}(t)} \leq \frac{1}{c_{[k]}(t+1)}-\omega\alpha\frac{c_{[k]}(t)}{c_{[k]}(t+1)}.
\end{align*}
We note that $\frac{c_{[k]}(t)}{c_{[k]}(t+1)} \geq 1$. Thus,
\begin{align}\label{itrt:ckt}
\frac{1}{c_{[k]}(t+1)}-\frac{1}{c_{[k]}(t)}\geq \omega\alpha\frac{c_{[k]}(t)}{c_{[k]}(t+1)}\geq \omega\alpha.
\end{align}
Summing up the above inequality by $t\in [(k-1)\tau, k\tau-1]$, we have
\begin{flalign}
&\frac{1}{c_{[k]}(k\tau)}-\frac{1}{c_{[k]}((k-1)\tau)}\nonumber\\
=&\sum_{t=(k-1)\tau}^{k\tau-1}\left(\frac{1}{c_{[k]}(t+1)}-\frac{1}{c_{[k]}(t)}\right)\nonumber\\
\geq& \sum_{t=(k-1)\tau}^{k\tau-1}\omega\alpha = \tau\omega\alpha.
\end{flalign}
Then, we sum up the above inequality by $k\in [1,K]$, after rearranging the left-hand side and noting that $T=K\tau$, we can get
\begin{flalign}\label{ieq:cKT-c10}
&\sum_{k=1}^{K}\left(\frac{1}{c_{[k]}(k\tau)}-\frac{1}{c_{[k]}((k-1)\tau)}\right)\nonumber\\
=&\frac{1}{c_{[K]}(T)}-\frac{1}{c_{[1]}(0)}-\sum_{k=1}^{K-1}\left(\frac{1}{c_{[k+1]}(k\tau)}-\frac{1}{c_{[k]}(k\tau)}\right)\nonumber\\
\geq& K\tau\omega\alpha = T\omega\alpha.
\end{flalign}
Here, we note that
\begin{flalign} \label{ieq:frac:c(k+1)-ck}
\frac{1}{c_{[k+1]}(k\tau)}-\frac{1}{c_{[k]}(k\tau)}=&\frac{c_{[k]}(k\tau)-c_{[k+1]}(k\tau)}{c_{[k]}(k\tau)c_{[k+1]}(k\tau)}\nonumber\\
=&\frac{F(\mathbf{w}_{[k]}(k\tau))-F(\mathbf{w}_{[k+1]}(k\tau))}{c_{[k]}(k\tau)c_{[k+1]}(k\tau)}\nonumber\\
\geq& \frac{-\rho h(\tau)}{c_{[k]}(k\tau)c_{[k+1]}(k\tau)}.
\end{flalign}
where the last inequality is because $\mathbf{w}_{[k+1]}(k\tau)=\mathbf{w}(k\tau)$ in \eqref{eq:wk(k-1)tau}, and \eqref{ieq:Fwt-Fwkt} in Theorem \ref{theorem:wt-wkt}.

From \eqref{ieq:ckt+1<ckt},  we can get $F(\mathbf{w}_{[k]}(t))\geq F(\mathbf{w}_{[k]}(t+1))$ for any $t\in [(k-1)\tau,k\tau)$. Recalling condition 3 in Theorem \ref{theorem:Fwt-Fw*}, where $F(\mathbf{w}_{[k]}(k \tau))-F\left(\mathbf{w}^{*}\right) \geq \varepsilon$ for all $k$, we can obtain $c_{[k]}(t)= F(\mathbf{w}_{[k]}(t)) - F(\mathbf{w}^*) \geq \varepsilon$ for all $t\in [(k-1)\tau,k\tau]$ and $k$. Thus,
\begin{align} \label{ieq:ck*c(k+1)}
c_{[k]}(k\tau)c_{[k+1]}(k\tau)\geq\varepsilon^2.
\end{align}
According to Appendix C, we have $h(\tau)\geq0$. Then substituting \eqref{ieq:ck*c(k+1)} into \eqref{ieq:frac:c(k+1)-ck}, we have
\begin{align} \label{ieq:frac:c(k+1)-ck_cont.}
\frac{1}{c_{[k+1]}(k\tau)}-\frac{1}{c_{[k]}(k\tau)} \geq\frac{-\rho h(\tau)}{\varepsilon^2}.
\end{align}
Substituting \eqref{ieq:frac:c(k+1)-ck_cont.} into \eqref{ieq:cKT-c10} and rearrange, we get
\begin{align}\label{ieq:frac:cKT-c10}
\frac{1}{c_{[K]}(T)}-\frac{1}{c_{[1]}(0)}\geq T\omega\alpha-(K-1)\frac{\rho h(\tau)}{\varepsilon^2}.
\end{align}
Recalling condition 4 in Theorem \ref{theorem:Fwt-Fw*}, where $F(\mathbf{w}(T))-F(\mathbf{w}^{*}) \geq \varepsilon$, and noting that $c_{[K]}(T)\geq\varepsilon$, we get
\begin{align}
(F(\mathbf{w}(T))-F(\mathbf{w}^{*}))c_{[K]}(T)\geq \varepsilon^2
\end{align}
Thus,
\begin{flalign} \label{ieq:frac:FwT-cKT}
&\frac{1}{F(\mathbf{w}(T))-F\left(\mathbf{w}^{*}\right)}-\frac{1}{c_{[K]}(T)}\nonumber\\
=&\frac{c_{[K]}(T)-(F(\mathbf{w}(T))-F\left(\mathbf{w}^{*}\right))}{(F(\mathbf{w}(T))-F\left(\mathbf{w}^{*}\right))c_{[K]}(T)}\nonumber\\
=&\frac{F(\mathbf{w}_{[K]}(T))-F(\mathbf{w}(T))}{(F(\mathbf{w}(T))-F\left(\mathbf{w}^{*}\right))c_{[K]}(T)}\nonumber\\
\geq& \frac{-\rho h(\tau)}{(F(\mathbf{w}(T))-F\left(\mathbf{w}^{*}\right))c_{[K]}(T)}\nonumber\\
\geq& -\frac{\rho h(\tau)}{\varepsilon^2},
\end{flalign}
where the first inequality is because \eqref{ieq:Fwt-Fwkt} in Theorem \ref{theorem:wt-wkt} when $t=K\tau$ in interval $[K]$. Combining \eqref{ieq:frac:cKT-c10} with \eqref{ieq:frac:FwT-cKT}, we get
\begin{flalign*}
\frac{1}{F(\mathbf{w}(T))-F\left(\mathbf{w}^{*}\right)}-\frac{1}{c_{[1]}(0)}\geq& T\omega\alpha-K\frac{\rho h(\tau)}{\varepsilon^2}\\
=&T\omega\alpha-\frac{T\rho h(\tau)}{\tau\varepsilon^2}\\
=&T\left(\omega\alpha-\frac{\rho h(\tau)}{\tau\varepsilon^2}\right).
\end{flalign*}
Noting that $c_{[1]}(0)=F(\mathbf{w}_{[1]}(0))-F(\mathbf{w}^*)>0$, the above inequality can be expressed as
\begin{align}
\frac{1}{F(\mathbf{w}(T))-F\left(\mathbf{w}^{*}\right)}\geq T\left(\omega\alpha-\frac{\rho h(\tau)}{\tau\varepsilon^2}\right).
\end{align}
Recalling condition 2 in Theorem \ref{theorem:Fwt-Fw*}, where $\omega\alpha-\frac{\rho h(\tau)}{\tau \varepsilon^{2}}>0$, we obtain that the right-hand side of above inequality is greater than zero. Therefore, taking the reciprocal of the above inequality, we finally get the result
\begin{align*}
F(\mathbf{w}(T))-F\left(\mathbf{w}^{*}\right)\leq \frac{1}{T\left(\omega\alpha-\frac{\rho h(\tau)}{\tau\varepsilon^2}\right)}.
\end{align*}
\end{proof}
\section{Proof of Theorem 3} \label{appendixE}
\begin{proof}

At the beginning, we see that condition 1 in Theorem \ref{theorem:Fwt-Fw*} always holds due to the conditions in Theorem \ref{theorem:Fwf-Fw*}, where $0<\beta\eta(\gamma+1)\leq 1$, and $0\leq\gamma< 1$.

When $\rho h(\tau)=0$, there is always an arbitrarily small $\varepsilon$ but great than zero that let conditions 2--4 in Theorem \ref{theorem:Fwt-Fw*} hold. Under this circumstance, Theorem \ref{theorem:Fwt-Fw*} holds. We also note that the right-hand side of \eqref{ieq:Fwf-Fw*} is equivalent to the right-hand side of \eqref{ieq:FwT-Fw*} when $\rho h(\tau)=0$. Moreover, according to the definition of $\mathbf{w}^{\mathrm{f}}$ in \eqref{eq:wf}, we have
\begin{align*}
F\left(\mathbf{w}^{\mathrm{f}}\right)-F\left(\mathbf{w}^{*}\right) \leq F(\mathbf{w}(T))-F\left(\mathbf{w}^{*}\right) \leq \frac{1}{T\omega\alpha},
\end{align*}
which satisfies the result in Theorem \ref{theorem:Fwt-Fw*} directly. Thus, Theorem \ref{theorem:Fwf-Fw*} holds when $\rho h(\tau)=0$.

When $\rho h(\tau)>0$, considering the right-hand side of \eqref{ieq:FwT-Fw*} and let
\begin{align} \label{eq:varepsilon_0}
\varepsilon_0= \frac{1}{T\left(\omega\alpha-\frac{\rho h(\tau)}{\tau\varepsilon_0^2}\right)}.
\end{align}
Rearranging and calculating $\varepsilon_0$, we get
\begin{align}\label{eq:varepsilon_0_cont.}
\varepsilon_{0}=\frac{1}{2 T \omega \alpha}+\sqrt{\frac{1}{4 T^{2} \omega^{2} \alpha^{2}}+\frac{\rho h(\tau)}{\omega \alpha \tau}}.
\end{align}
Here, we take the positive solution $\varepsilon_0$ because $\varepsilon >0$ in Theorem \ref{theorem:Fwt-Fw*}. Considering above two equations for $\varepsilon_0$, we get $\varepsilon_0 > 0$ and the denominator in \eqref{eq:varepsilon_0} is greater than zero. We also note that $\omega\alpha-\frac{\rho h(\tau)}{\tau\varepsilon^2}$ increases with $\varepsilon$. Thus, when $\varepsilon \geq \varepsilon_0$, condition 2 in Theorem \ref{theorem:Fwt-Fw*} holds. Under this circumstance, we assume that there exists $\varepsilon > \varepsilon_0$ that satisfies both condition 3 and 4 in Theorem \ref{theorem:Fwt-Fw*} at the same time, so that Theorem \ref{theorem:Fwt-Fw*} holds. Then we get,
\begin{flalign*}
F(\mathbf{w}(T))-F\left(\mathbf{w}^{*}\right) \leq& \frac{1}{T\left(\omega \alpha-\frac{\rho h(\tau)}{\tau \varepsilon^{2}}\right)}\\
<& \frac{1}{T\left(\omega\alpha-\frac{\rho h(\tau)}{\tau\varepsilon_0^2}\right)} = \varepsilon_0,
\end{flalign*}
which contradicts the condition 4 in Theorem \ref{theorem:Fwt-Fw*}. Using the proof by contradiction, we conclude that there does not exist $\varepsilon>\varepsilon_0$ that satisfies both condition 3 and 4 in Theorem \ref{theorem:Fwt-Fw*} at the same time. Equivalently, it happens either (1) $\exists k\in [1,K]$ allows $F\left(\mathbf{w}_{[k]}(k \tau)\right)-F\left(\mathbf{w}^{*}\right) \leq \varepsilon_0$ or (2) $F(\mathbf{w}(T))-F\left(\mathbf{w}^{*}\right) \leq \varepsilon_0$, which follows
\begin{align}\label{ieq:min}
\min \left\{\min _{k\in [1,K]} F\left(\mathbf{w}_{[k]}(k \tau)\right) ; F(\mathbf{w}(T))\right\}-F\left(\mathbf{w}^{*}\right) \leq \varepsilon_{0}.
\end{align}
Recalling \eqref{ieq:Fwt-Fwkt} in Theorem \ref{theorem:wt-wkt}, when $t=k\tau$, we have $F(\mathbf{w}(k\tau)) \leq F(\mathbf{w}_{[k]}(k\tau)) + \rho h(\tau)$ for any interval $[k]$. 
Combining it with \eqref{ieq:min}, we have
\begin{align*}
\min _{k\in [1,K]} F(\mathbf{w}(k \tau))-F\left(\mathbf{w}^{*}\right) \leq \varepsilon_{0}+\rho h(\tau).
\end{align*}
Recalling the definition of $\mathbf{w}^{\mathrm{f}}$ in \eqref{eq:wf}, $T=K\tau$, and combining $\mathbf{w}^{\mathrm{f}}$ with above inequality, we get
\begin{align*}
F\left(\mathbf{w}^{\mathrm{f}}\right)-F\left(\mathbf{w}^{*}\right) \leq \varepsilon_{0}+\rho h(\tau).
\end{align*}
Substituting \eqref{eq:varepsilon_0_cont.} into above inequality, we finally get the result in \eqref{ieq:Fwf-Fw*}, which proves the Theorem \ref{theorem:Fwf-Fw*}.
\end{proof}
\section{Proof of Theorem 4}\label{appendixF}
\begin{proof}
When $\eta \to 0$, we have $\gamma A \simeq 1$, $\gamma B \simeq \gamma$, and $F \simeq \frac{\gamma^2}{(1-\gamma)^2}$. Therefore,
\begin{flalign*}
&\lim_{\eta\to0}{h(\tau)}&\\
=&\lim_{\eta\to0}{\eta \delta\left[E(\gamma A)^{\tau}+F(\gamma B)^{\tau}-\frac{1}{\eta \beta}-\frac{\gamma^2(\gamma^{\tau}-1)-(\gamma-1) \tau}{(\gamma-1)^{2}}\right]}&\\
=&\lim_{\eta\to0}{\eta\delta\left(E-\frac{1}{\eta\beta}\right)}&\\
=&\lim_{\eta\to0}{\eta\delta\left(\frac{1}{(1-\gamma)(\gamma A-1)}-\frac{1}{\eta\beta}\right)}&\\
=&\frac{\delta}{1-\gamma}\lim_{\eta\to0}{\frac{\eta}{\gamma A-1}}-\frac{\delta}{\beta}&\\
=&\frac{\delta}{1-\gamma}\lim_{\eta\to0}{\frac{1}{(\gamma A-1)^{'}}}-\frac{\delta}{\beta}&\\
=&\frac{\delta}{1-\gamma}\frac{1-\gamma}{\beta}-\frac{\delta}{\beta}=0
\end{flalign*}
where the second last line is because the L'Hôpital's rule.
We also have $\hat{h}(\tau)\simeq0$ when $\eta\to 0$. Rewrite $f_1(T)$ and $f_2(T)$, we have
\begin{align*}
f_1(T)&=\frac{1}{2 T \omega \alpha}+\sqrt{\frac{1+4T^2\omega\alpha\rho h(\tau)\tau^{-1}}{4T^{2}\omega^{2}\alpha^{2}}}+\rho h(\tau)&\\
&\simeq\frac{1}{2 T \omega \alpha}+\sqrt{\frac{1}{4T^{2}\omega^{2}\alpha^{2}}}=\frac{1}{T\omega\alpha},\\
f_2(T)&=\frac{1}{2 T \omega\hat{\alpha}}+\sqrt{\frac{1+4T^2\omega\hat{\alpha}\rho \hat{h}(\tau)\tau^{-1}}{4T^{2}\omega^{2}\hat{\alpha}^{2}}}+\rho \hat{h}(\tau)&\\
&\simeq\frac{1}{2 T \omega\hat{\alpha}}+\sqrt{\frac{1}{4T^{2}\omega^{2}\hat{\alpha}^{2}}}=\frac{1}{T\omega\hat{\alpha}}.
\end{align*}
According to the definition of $\alpha$, and condition 2 of Theorem~\ref{theorem:Fwt-Fw*} with $h(\cdot)\geq0$, we have $\alpha>0$. Based on the conditions in Theorem  \ref{theorem:fast}, where $0<\beta\eta(\gamma+1)\leq 1$ and the definition of $\hat{\alpha}$, we have $\hat{\alpha}>0$. Furthermore, for any $0<\gamma<1$ and $\eta\to 0^+$, we have $\alpha>\hat{\alpha}$. Therefore, we get $f_1(T)<f_2(T)$.
\end{proof}

\end{document}